\documentclass{article} % For LaTeX2e
\usepackage{iclr2027_conference,times}

\usepackage{amsmath,amsfonts,bm}

\def\eqref#1{equation~\ref{#1}}
\def\1{\bm{1}}

\DeclareMathAlphabet{\mathsfit}{\encodingdefault}{\sfdefault}{m}{sl}
\SetMathAlphabet{\mathsfit}{bold}{\encodingdefault}{\sfdefault}{bx}{n}

\newcommand{\pdata}{p_{\rm{data}}}
\usepackage{hyperref}
\usepackage{url}

\usepackage{booktabs}
\usepackage{tabularx}
\usepackage{graphicx}
\usepackage{amsthm}
\usepackage{amsmath}
\usepackage{amssymb}
\usepackage{mathtools}
\usepackage{subcaption}
\usepackage{algorithm}
\usepackage{algpseudocode}
\usepackage{tcolorbox}
\tcbuselibrary{listings}
\usepackage{wrapfig}
\usepackage{tikz}
\usetikzlibrary{arrows.meta, positioning, calc}

\newtheorem{theorem}{Theorem}
\newtheorem{lemma}{Lemma}
\newtheorem{corollary}{Corollary}
\newtheorem{proposition}{Proposition}
\newtheorem{assumption}{Assumption}

\theoremstyle{remark}
\newtheorem{remark}{Remark}

\title{There and Back Again: Bidirectional Diffusion Bridges for Multimodality Translation}

\author{Gabe Guo, Elon Litman, Thanawat Sornwanee, Jose Blanchet, Stefano Ermon \\
Stanford University\\
}

\newcommand{\bit}{\texttt{BIT}}
\newcommand{\pbase}{p_{\rm{base}}}

\iclrfinalcopy % Uncomment for camera-ready version, but NOT for submission.
\begin{document}

\maketitle

\lhead{Preprint. Under review.}

\begin{abstract}
Multimodality translation (\textit{e.g.}, text-to-image) is a core generative AI task. However, existing approaches (1) follow generative paths that do not directly represent the source modality, limiting the flexibility of some sampling algorithms; and (2) are unidirectional, preventing inversion (\textit{e.g.}, image-to-text). We propose \bit{}: \textbf{B}idirectional \textbf{I}mage-\textbf{T}ext Diffusion Bridges. In contrast to previous approaches, \bit{} starts directly from text and interpolates into images, providing (1) a source-aware generative path that enables diverse and flexible sampling algorithms; and (2) an endpoint-conditioned process that can be traversed from image to text, providing a \textit{unified}, bidirectional generative framework. \bit{} is derived through stochastic calculus, yielding SDE forms amenable to simulation and tractable loss functions that scale to high dimensions. Our experiments show that \bit{} is competitive with denoising-diffusion and deterministic-flow baselines, and outperforms them on several vision--language and natural-science evaluations.
\end{abstract}

\section{Introduction}

\noindent{\textbf{Goal:}} %In the quest to build human-like generative AI, 
It is a fundamental capability for generative AI models to translate across data modalities. We need look no further than common occupations: cartoonists translate textual stories into vivid images, singers translate visual music scores into audible melodies, and (most importantly) scientists translate sensory experiences into textual (and visual, if page limits allow) documents. %Even the not-yet-employed five year old who tells his mother that he is cold is translating his tactile experience: first into a textual, then into an audible representation.

\noindent{\textbf{Existing Diffusion Approaches:}}
Continuous-space diffusion and flow models (aka stochastic interpolants)~\citep{song2020score, albergo2025stochastic} underpin many dominant cross-modality translation approaches, notably in text-to-image (T2I)~\citep{rombach2022high, labs2025flux}. They are even gaining steam for unconditional text generation \citep{hu2026elf}, although continuous-space diffusion models that translate from other modalities (\textit{e.g.}, image) to text are still underexplored.

However, popular diffusion models' modality ``translation'' mechanism (main example: T2I) leaves much to be desired.
%They start from meaningless Gaussian noise that has no mutual information with the final image, then follow an SDE trajectory that interpolates between this meaningless noise and realistic images.
Essentially, they interpolate (via an SDE) between realistic images and Gaussian noise that has no mutual information with the final images.
To account for text input, they add it as conditioning to the drift network, typically via cross-attention \citep{rombach2022high}. The intermediate data samples along the generative trajectory never resemble the actual text prompts. %At no point on the generative trajectory does the intermediate data sample resemble the actual text prompt.

\noindent \textbf{\textit{Semantically Barren Paths Limit Editing Algorithms:}} At the noise endpoint, the trajectory has no mutual information with the final image. Theorem~\ref{thm:mutual_info} gives a local comparison showing when a data-to-data bridge retains more information about the target than an unconditioned noise-to-data bridge.
This matters because many image-editing algorithms (\textit{e.g.}, SDEdit \citep{meng2021sdedit}) traverse and reverse generative trajectories. As noise is injected, the evolving state can lose information about the reference image, limiting the ability of such algorithms to generate semantically related variations when the original conditioning signal is unavailable.

\noindent \textbf{\textit{Translation is Unidirectional:}} Furthermore, due to the indirect way of translating T2I in which a predetermined text prompt is passed as sidecar conditioning, T2I diffusion models provide no mechanism for \textit{reversing} the trajectory to obtain a text caption back from an image. Instead, people typically must use separate VLMs~\citep{bai2025qwen3, liu2023visual} trained with different discrete autoregressive losses. So, the T2I model is not a true cross-modality translator, as it only goes \textit{one way}. This limits the possibilities for cross-modality sampling algorithms. %and editing algorithms. %particularly those that make use of cross-modality information. 
Conceptually, it is also unappealing that T2I generation has a separate framework from image-to-text (I2T) generation.

\noindent{\textbf{Solution Criteria:}} Our generative cross-modality translation model should:
(1) Capture the joint probability distribution (couplings) of the data modalities, rather than a deterministic mapping. (One text prompt can map to many images, making past deterministic approaches like CycleGAN \citep{zhu2017unpaired} or data-to-data flow models \citep{albergo2025stochastic} unsuitable.)
(2) Construct a semantically meaningful interpolation as the generative trajectory between data modalities. This would yield high mutual information between intermediate states and the result, unlocking a new design space of sampling algorithms along the path.
(3) Be reversible at any point along the generative trajectory, so we can handle cross-modality ``inversion.''
(4) Unify T2I and I2T in one framework.
% Black-box: Satisfy joint law
% Generative process: Semantically meaningful interpolation (helps us make edits)
% Reversible along the path

\noindent{\textbf{Our Solution:}} We propose \bit{}: \textbf{B}idirectional \textbf{I}mage-\textbf{T}ext Diffusion Bridges.
% what it does
\bit{} is a unified multimodal model: given text prompts, \bit{} generates images; given images, \bit{} generates captions. \bit{} builds upon recent advances in diffusion bridges \citep{guo2026abc, zhou2023denoising}. 

% how it does it (mathematical framework + neural network learning)
\noindent \textbf{\textit{Stochastic Process Modeling:}} We define a stochastic process that starts from one domain (\textit{e.g.}, text captions) and gradually perturbs data into corresponding observations from another domain (\textit{e.g.}, images). 
We enforce the correct mapping by finite-horizon applications of Girsanov's Theorem~\citep{girsanov1960transforming} and Doob's $h$-transform~\citep{doob1984classical}, with the pinned endpoint obtained by continuous path extension: our derivations result in a simulatable SDE whose drift is estimated by a neural network.
To generate in the opposite direction (\textit{e.g.}, image to text caption), we derive and simulate the analytic time-reversal of this stochastic process. %via Anderson's Theorem~\citep{anderson1982reverse}. 
The reverse SDE's drift can also be learned with a closed-form loss target using the same architecture, with parameters either shared or separated by direction. At the population optimum and under the stated regularity conditions, our derivations show that (1) the endpoint-conditioned forward and reverse SDEs describe the same process in opposite time directions and have the desired endpoint joint distribution (\textit{e.g.}, image--caption pairs; Thms.~\ref{thm:forward_process} and~\ref{thm:reverse_process}); and (2) in the comparison setting of Theorem~\ref{thm:mutual_info}, the data-to-data path has a local mutual-information advantage over an unconditioned noise-to-data path. The implemented networks are trained as finite-sample approximations of these population-optimal drifts.
%Furthermore, our framework is applicable to domains besides text-to-image.
%many domains where we have multiple representations of the same underlying concept.

% engineering details
\noindent \textbf{\textit{Discrete-Continuous Unification:}} 
%Pertinent to the T2I task, 
There has traditionally been a break between generative algorithms for discrete data (text) and continuous data (images).
% So, a crucial question to answer is: how can we handle both discrete text and continuous image data in one framework? (After all, an SDE framework implies continuous-valued data.) 
To handle both modalities in an SDE framework, we make everything continuous by constructing a continuous representation of the text. Specifically, we create a fixed lookup table from discrete text tokens to continuous text embeddings, similar to ELF~\citep{hu2026elf}. %Our mathematical framework is agnostic to reasonable choices of lookup table; 
We find that features in foundation models like Qwen-Embedding~\citep{yang2025qwen3} make for concise, empirically near-decodable text token embeddings.
%concise (64-dimensional), invertible ($99\%+$ decoding accuracy) text token embeddings. 

\noindent\textbf{Results:} Our empirical validation shows:
(1) \bit{} is competitive with, and sometimes better than, noise-to-data diffusion baselines on I2T and T2I generation. 
(2) \bit{} outperforms noise-to-data diffusion in the evaluated data-variation tasks, consistent with the information-retention motivation above.
(3) \bit{} also applies to scientific domains and obtains the best average rank among the evaluated cell-fate models in our experiments.

\noindent{\textbf{Summary of Contributions:}} (1) \bit{} represents a paradigm shift by \textit{challenging the assumption that T2I generation needs to start from a Gaussian noise distribution}. %Instead, \bit{} starts directly from a representation of the source domain (text) and diffuses smoothly to the target domain (image). %; rather than starting from uninformative noise and only injecting text conditioning information through a side channel.
(2) \bit{}'s direct path from text to images unlocks the new capability to trace reverse paths to text \textit{from} images, paving the way for new sampling algorithms and unified text-image modeling. 
(3) We evaluate \bit{} on T2I, I2T, and scientific domains such as cell-fate modeling, where it is competitive with unidirectional diffusion and flow baselines. \textbf{\textit{See \texttt{\href{https://bit-diffusion.github.io}{https://bit-diffusion.github.io}} for demo videos. 
See \textcolor{red}{\texttt{\href{https://github.com/gabeguo/bit_diffusion}{https://github.com/gabeguo/bit\_diffusion}} for code.}
}}

\section{Preliminaries}

\noindent\textbf{Diffusion Models:} Diffusion models turn noise into data via an SDE running from $t = 1$ to $t = 0$:
{\setlength{\abovedisplayskip}{2pt}\setlength{\belowdisplayskip}{2pt}
\begin{equation}
    dx_t = \left[ f(x_t, t) - g(t)^2 \nabla_{x_t} \log p_t(x_t) \right] d\overleftarrow{t} + g(t)\, d\overleftarrow{B_t}, \quad x_1 \sim \mathcal{N}(0, I),
\end{equation}
where $d\overleftarrow{B_t}$ is reverse-time Brownian motion, $d\overleftarrow{t}$ is an infinitesimally small negative time increment, $g(t)$ is a fixed volatility coefficient, $f(x_t, t)$ is a closed-form drift coefficient, and $\nabla_{x_t} \log p_t(x_t)$ is the score function, which is typically approximated by a neural network $\mathbf{s}_\theta(x_t, t)$~\citep{song2020score}.}

\noindent\textbf{Diffusion Bridge Models:}
Rather than starting from noise, diffusion bridges construct SDEs interpolating directly from data-to-data~\citep{zhou2023denoising}. In \cite{guo2026abc}'s formulation, the SDE goes in forward time over $t \in [0, 1]$:
{\setlength{\abovedisplayskip}{2pt}\setlength{\belowdisplayskip}{2pt}
\begin{equation}
    dx_t = \left[-a(t)x_t + \sigma(t)^2 \mathbf{s}_\theta(t, x_t, x_\text{history})\right]dt + \sigma(t)dB_t,
\end{equation}
}
where $dt$ is an infinitesimally small positive time increment, $dB_t$ is Brownian motion, $a(t), \sigma(t)$ are predefined drift and volatility coefficients, and $\mathbf{s}_\theta(t, x_t, x_\text{history})$ is a path-dependent neural network score.
A chief advantage of diffusion bridges over diffusion models is the semantically rich paths.

\section{Probabilistic Modeling Framework}

\begin{figure}
\centering
\begin{tikzpicture}[
    node distance=2cm,
    every node/.style={font=\normalsize},
    mycircle/.style={circle, draw=black, thick, minimum size=1cm, inner sep=1pt},
    myarrow/.style={-{Stealth[length=3mm, width=2mm]}, thick}
]
% ---------- forward row (y = 0) ----------
\node[mycircle] (start) at (0.025\textwidth + 0.45cm, 0) {$x_0$};
\node[mycircle] (end)   at (0.975\textwidth - 1.45cm, 0) {$x_1$};
\draw[myarrow] (start) -- (end);
\node[fill=gray!10, draw=black, thick, rounded corners=3pt, inner sep=4pt,
      align=center] at ($(start)!0.5!(end)$) {
$dx_t = \sigma(t)^2 \ \smash{
    \overbrace{\mathbf{f}_\theta\!\left(x_t, t, x_0\right)}^{\mathclap{\substack{
        \text{forward-time score, minimizer of \eqref{eqn:loss_forward}}
        }}
    }
} \ dt + \sigma(t)dB_t$
};

% ---------- the strip, as a node ----------
\node[anchor=north, inner sep=0pt] (strip)
      at ([yshift=-0.7cm] $(start)!0.5!(end)$)
      {\includegraphics[width=0.95\linewidth]{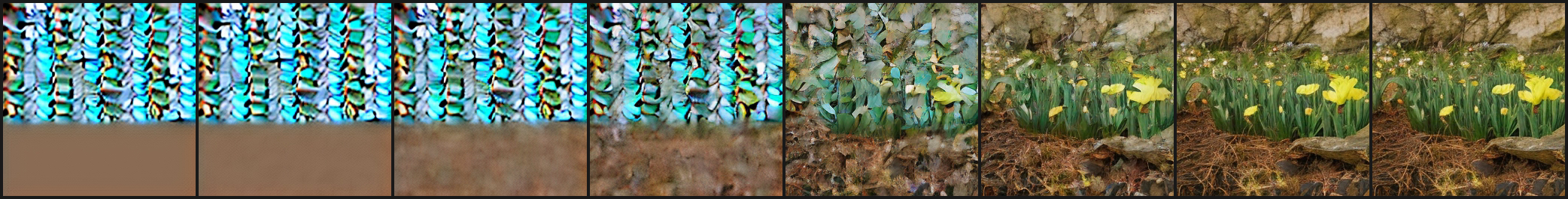}};
% \end{tikzpicture}
% \includegraphics[width=0.95\linewidth]{diagrams/step_0122500_batch_1_fwd_trajectory_cfg_0.0_sde_strip_8.png}
% \begin{tikzpicture}[
%     node distance=2cm,
%     every node/.style={font=\normalsize},
%     mycircle/.style={circle, draw=black, thick, minimum size=1cm, inner sep=1pt},
%     myarrow/.style={-{Stealth[length=3mm, width=2mm]}, thick}
% ]

% ---------- bubble ----------
\node[
    draw=black, thick, rounded corners=3pt, fill=blue!6,
    text width=0.86\textwidth, align=left, font=\tiny, anchor=north west
] (bubble) at ([xshift=-0.45cm, yshift=-0.30cm] strip.south -| start) {
\texttt{"A cluster of yellow tulips grows in a patch of green grass surrounded by stone walls. The flowers are planted in a dark soil bed, with weathered stone structures framing the scene on both sides."}
};

% ---------- tail ----------
% base half-width and apex
\coordinate (bl) at ($(bubble.north -| start)+(-0.12cm,0)$);
\coordinate (br) at ($(bubble.north -| start)+( 0.12cm,0)$);
\coordinate (tip) at (strip.south -| start);

% 1. erase the bubble's top border under the tail (slightly SHORTER than base)
\draw[blue!6, line width=1.6pt]
      ($(bl)+(0.02cm,-0.01cm)$) -- ($(br)+(-0.02cm,-0.01cm)$);
% 2. fill the triangle (no stroke)
\path[fill=blue!6] (bl) -- (tip) -- (br) -- cycle;
% 3. stroke only the two sides — note: no cycle
\draw[line width=0.7pt] (bl) -- (tip) -- (br);

% bump the reverse row down to make room
\node[mycircle] (rstart) at ([yshift=-1.7cm] strip.south -| start) {$x_0$};
% % ---------- reverse row, below the strip ----------
% \node[mycircle] (rstart) at ([yshift=-0.9cm] strip.south -| start) {$x_0$};
\node[mycircle] (rend)   at (rstart -| end) {$x_1$};
\draw[myarrow] (rend) -- (rstart);
\node[fill=gray!10, draw=black, thick, rounded corners=3pt, inner sep=4pt,
      align=center] at ($(rstart)!0.5!(rend)$) {
$dx_t = -\sigma(t)^2 \ \smash{
    \underbrace{\mathbf{r}_\theta\!\left(x_t, t, x_1\right)}_{\mathclap{\substack{
        \text{reverse-time score, minimizer of \eqref{eqn:reverse_time_loss}}
        }}
    }
} \ \overleftarrow{dt} + \sigma(t)\overleftarrow{dB}_t$
};
\end{tikzpicture}
\caption{\textbf{\bit{} Schematic:} Forward and reverse-time SDEs (with drift parameterized by neural net) both traverse the same generative path, allowing text-image translation. Text is represented as invertible token embeddings (blue-ish texture at $x_0$), with zero-padding (brown bottom at $x_0$).}
\end{figure}

We want a stochastic process satisfying endpoint law $(X_0, X_1) \sim \pdata$, where $X_0$ can be text and $X_1$ can be image representations. $\pdata$ denotes the population joint law of the paired endpoints, and $\pdata(X_0), \pdata(X_1)$ denote its marginals. It should be simulatable in forward and reverse time.

\noindent\textbf{Population and realizability convention.}
All objectives in this section are population objectives. Let $\mathbf f^\star$ and $\mathbf r^\star$ denote the fixed jointly Borel conditional-expectation (equivalently, $h$-transform) versions of the forward and reverse oracle drifts constructed in Appendices~\ref{sec:forward_process_proof}--\ref{sec:reverse_process_proof}. Realizability means that their weighted-$L^2$ equivalence classes have representatives in the corresponding parameterized drift classes. Every exact global population minimizer agrees with the relevant oracle under the weighted training-input law. Whenever a drift is used as an SDE coefficient in an exact path-law statement, however, we use the fixed oracle version: equality only on a training-null-set complement does not by itself imply that an arbitrary pointwise modification defines the same, or a unique, SDE law. In practice, the networks are trained as finite-sample approximations of these objectives; no exact path-law claim is made for a finite trained model.

\subsection{Data-Generating Process as Mixture of Endpoint-Pinned Bridges}\label{sec:q_law}

\noindent\textbf{Base Process} We start with the data-independent base process $\left(X_t\right)_{t \in [0, 1]}$ whose dynamics follow
\begin{equation}\label{eqn:base_process}
    dX_t = \sigma(t)dB_t,
\end{equation}
where $B_t$ is Brownian motion under $\mathbb{P}$ and is independent of $X_0$, $\sigma(t)$ is a deterministic positive scalar function satisfying Assumption~\ref{assumption:1}, and $X_0 \sim \pdata(X_0)$. Let $\pbase$ denote the transition density under $\mathbb{P}$; under this base measure, it is \textit{not} required that $X_1 \sim \pdata(X_1)$.

\noindent\textbf{Data-Generating Measure} Using the regular conditional bridge kernels of Eq.~\ref{eqn:base_process}'s base process, we define a path measure $\mathbb{Q}$ with the desired joint endpoint law.\footnote{Those unfamiliar with measure theory can imagine ``lucky'' Brownian motion that hits the desired data.} Conditional on both endpoints, the path measure (and volatility structure) is the same as that of the original scaled Brownian motion:
\begin{align}
    (X_0, X_1) &\sim_{\mathbb{Q}} \pdata \label{eqn:q_endpoint_law} \\
    \mathbb{Q}(d\omega | X_0=x_0, X_1=x_1) &= \mathbb{P}(d\omega | X_0=x_0, X_1=x_1) \label{eqn:pinned_path_measure}
\end{align}
Here the conditional laws are regular conditional Gaussian-bridge kernels, and $\mathbb Q$ is their mixture over $\pdata$. The $h$-transform used below gives absolute continuity with respect to the base law only after restriction to each $\mathcal F_T$, $T<1$. We do not assert absolute continuity on the pinned terminal sigma-field $\mathcal F_1$; the endpoint is instead obtained from the almost-sure continuous extension of the bridge path.
We next derive tractable SDEs for the forward ($X_0 \rightarrow X_1$) and reverse ($X_1 \rightarrow X_0$) dynamics, allowing us to simulate data generation in both directions under $\mathbb{Q}$.

\subsection{Forward-Time Model and Training Objective (Text-to-Image)}

In the following theorems, $\pbase(x_t | x_0, x_1)$ is a probability distribution of $X_t$ under $\mathbb{P}$: we show in Appendix \ref{sec:forward_process_proof} that it is Gaussian with parameters that are primitives of $\sigma(t)$.

\begin{theorem}[Forward-Time SDE and Objective]\label{thm:forward_process}
Fix a regular conditional kernel $x_0\mapsto\pdata(dx_1\mid x_0)$, and let $\mathbf f^\star(x,t;x_0)$ be the resulting fixed Borel $h$-transform version of
\[
\mathbb E_{\mathbb Q}\!\left[\nabla_x\log p_{\rm base}(X_1\mid x)
\,\middle|\,X_t=x,\ X_0=x_0\right],\qquad 0<t<1,
\]
where the conditional-expectation equality is understood under the bridge sampling law. There exists a canonical weak solution $(\overrightarrow X,B)$ on $[0,1)$ such that $\overrightarrow X_0\sim\pdata(X_0)$, $B$ is Brownian with respect to the solution filtration and independent of $\overrightarrow X_0$, and, on every $[0,T]$ with $T<1$,
\begin{equation}\label{eqn:data_gen}
    d\overrightarrow{X}_t = \sigma(t)^2\mathbf{f}^\star(\overrightarrow{X}_t, t; \overrightarrow{X}_0)dt + \sigma(t)dB_t,
    \qquad 0\le t<1.
\end{equation}
The process has an almost-sure continuous extension $\overrightarrow X_1=\lim_{t\uparrow1}\overrightarrow X_t$, and the law of the extended path on $C([0,1];\mathbb R^d)$ is $\mathbb Q$. Moreover, $\mathbf f^\star$ is the unrestricted weighted-$L^2$ minimizer of
\begin{equation}\label{eqn:loss_forward}
    \overrightarrow{\mathcal{L}}({\mathbf{f}}_\theta) = \underset{\substack{t \sim \mathcal{U}((0, 1)) \\ (x_0, x_1) \sim \pdata \\ x_t \sim \pbase(x_t | x_0, x_1)}}{\mathbb{E}}\left[ w(t)\left\| \mathbf{f}_\theta(x_t, t; x_0) - \nabla_{x_t}\text{\normalfont log }p_\text{base}(x_1 | x_t) \right\|^2_{2}\right].
\end{equation}
The weight function $w:(0,1) \to (0,\infty)$ is chosen subject to Assumption~\ref{assumption:weight}. Under realizability, every exact parameterized minimizer agrees with $\mathbf f^\star$ under the weighted training-input law.
\end{theorem}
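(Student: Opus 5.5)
We condition on $X_0=x_0$ throughout and reduce the claim to a Doob $h$-transform for the scaled Brownian motion \eqref{eqn:base_process}. Write $S(s,t)=\int_s^t\sigma(u)^2\,du$, which is finite and positive for $s<t$ by Assumption~\ref{assumption:1}. Under $\mathbb P$ the transition density is $p_{\rm base}(x_1\mid x_t)=\mathcal N(x_1;x_t,S(t,1)I)$. The bridge marginal $p_{\rm base}(x_t\mid x_0,x_1)$ is then Gaussian with mean $x_0+\tfrac{S(0,t)}{S(0,1)}(x_1-x_0)$ and variance $\tfrac{S(0,t)S(t,1)}{S(0,1)}I$; this follows from Bayes and the Markov property.

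Set
\[
h(x,t;x_0)=\int p_{\rm base}(x_1\mid x)\,\frac{\pdata(dx_1\mid x_0)}{p_{\rm base}(x_1\mid x_0)},
\]
which is the density of the $\mathbb Q$-conditional law of $X_1$ given $X_t=x$, relative to the base transition. Equation~\eqref{eqn:pinned_path_measure} says that $\mathbb Q(\cdot\mid X_0=x_0)$ is the $\pdata(\cdot\mid x_0)$-mixture of Gaussian bridges. Hence on $\mathcal F_T$, $T<1$, we have $d\mathbb Q(\cdot\mid x_0)/d\mathbb P(\cdot\mid x_0)=h(X_T,T;x_0)/h(x_0,0;x_0)$, and $h(x_0,0;x_0)=1$. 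The function $h$ is space-time harmonic for the generator $\tfrac12\sigma^2\Delta$, because each $p_{\rm base}(x_1\mid\cdot)$ is. It is also smooth and strictly positive for $t<1$, since it is a Gaussian mixture. It\^o's formula gives $h(X_t,t)=1+\int_0^t h\,\sigma\nabla\log h\cdot dB$. Girsanov on $[0,T]$ then yields that under $\mathbb Q$ the process $\tilde B_t=B_t-\int_0^t\sigma\nabla\log h\,ds$ is Brownian. Thus $dX_t=\sigma^2\nabla_x\log h\,dt+\sigma\,d\tilde B_t$.

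Differentiating under the integral (dominated convergence with Gaussian tails, for $t\le T<1$) gives
\[
\nabla_x\log h=\frac{\int\nabla_x\log p_{\rm base}(x_1\mid x)\,p_{\rm base}(x_1\mid x)\,\pdata(dx_1\mid x_0)/p_{\rm base}(x_1\mid x_0)}{h}.
\]
By Bayes this is the $\mathbb Q$-conditional expectation $\mathbb E_{\mathbb Q}[\nabla_x\log p_{\rm base}(X_1\mid x)\mid X_t=x,X_0=x_0]$. Fixing this jointly Borel formula as $\mathbf f^\star$ makes the identity hold everywhere rather than only a.e.

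To build the solution, put $\overrightarrow X_0\sim\pdata(X_0)$ independent of a Brownian motion. For each $x_0$ take the law $\mathbb Q(\cdot\mid x_0)$ restricted to $[0,T]$, and mix. The Kolmogorov-consistent family over $T<1$ defines a canonical weak solution on $[0,1)$. The Girsanov step shows $\tilde B$ is Brownian in the solution filtration and independent of $X_0$, because its conditional law given $x_0$ does not depend on $x_0$. The \textbf{main obstacle} is the behaviour at $t=1$: the drift blows up and absolute continuity fails on $\mathcal F_1$. I would handle it by not passing through Girsanov at all. The law constructed on each $[0,T]$ coincides with the restriction of $\mathbb Q$, and the finite-dimensional distributions of a mixture of Gaussian bridges determine it. Under $\mathbb Q$, paths are continuous on $[0,1]$, since Gaussian bridges with $S$ continuous have continuous versions. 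Therefore $\lim_{t\uparrow1}X_t$ exists almost surely, and the extended law equals $\mathbb Q$ by agreement on cylinder sets with times $<1$, together with continuity.

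For the loss, for fixed $t$ the tuple $(x_0,x_1,x_t)$ sampled as in \eqref{eqn:loss_forward} has the $\mathbb Q$-joint law of $(X_0,X_1,X_t)$, by \eqref{eqn:q_endpoint_law}--\eqref{eqn:pinned_path_measure}. The standard $L^2$ projection argument then applies. Write the objective as
\[
\mathbb E\big[w\|\mathbf f-\mathbf f^\star\|^2\big]+\mathbb E\big[w\|\mathbf f^\star-\nabla\log p_{\rm base}(x_1\mid x_t)\|^2\big].
\]
The cross term vanishes by the tower property. The second term is finite by Assumption~\ref{assumption:weight}: $\nabla\log p_{\rm base}=(x_1-x_t)/S(t,1)$ has conditional second moment of order $S(0,t)/(S(0,1)S(t,1))$, which the weight must integrate. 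So $\mathbf f^\star$ is the unique minimizer up to null sets of the weighted training-input law. Under realizability the infimum over the parameterized class equals the unrestricted one, so any exact parameterized minimizer agrees with $\mathbf f^\star$ in that weighted $L^2$ sense.
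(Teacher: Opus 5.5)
Your proposal is correct and follows essentially the paper's route: the same $h$ (the $\pdata(dx_1\mid x_0)$-mixture of base transition-density ratios) as the density on each $\mathcal F_T$, $T<1$, Girsanov on $[0,T]$ with $\nabla_x\log h$ identified as the conditional expectation of the score target, mixing over $\pdata(X_0)$ with independence coming from the $x_0$-free conditional Wiener law of the innovation, the endpoint obtained from the mixture of continuously extended Gaussian bridges rather than from Girsanov, and the Pythagorean $L^2$ projection for the loss. There are two harmless slips: $h(x,t;x_0)$ is the density of the $\mathbb Q$-law of $X_t$ given $X_0=x_0$ relative to its base law (not a density for the conditional law of $X_1$), and finiteness of the weighted target second moment also needs the second-moment part of Assumption~\ref{assumption:1}, because the conditional second moment contains the term $\|x_1-x_0\|_2^2/a(1)^2$ in addition to the one you cite.
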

\begin{proof}
    See Appendix \ref{sec:forward_process_proof}.
\end{proof}

Thm.~\ref{thm:forward_process} gives us the mathematical machinery to simulate the forward data translation process, \textit{e.g.}, from text at $X_0$ to images at $X_1$, giving samples from a fixed version of $\pdata(dx_1\mid x_0)$ at the population oracle for $\pdata(X_0)$-almost every $x_0$. It does this via (1) a data-generating SDE (Eq.~\ref{eqn:data_gen}) on finite horizons, with its endpoint supplied by continuous extension, that we can discretize numerically; (2) a neural network loss function (Eq.~\ref{eqn:loss_forward}) that allows us to estimate this SDE's drift from data. We parameterize $\mathbf{f}_\theta$ with a transformer \citep{peebles2023scalable}; behavior at out-of-distribution source endpoints is extrapolative rather than part of the exact theorem.

If instead we remove conditioning on $\overrightarrow{X}_0$, the population objective uses the loss integrand $w(t)\left\| \mathbf{f}_\theta(x_t, t; \varnothing) - \nabla_{x_t}\text{\normalfont log }p_\text{base}(x_1 | x_t) \right\|^2_{2}$. The canonical projected drift $\bar{\mathbf f}$ is state-only. Corollary~\ref{prop:marginal-match} constructs at least one weak solution with this fixed drift and the same one-time marginals as $\mathbb Q$; it makes no uniqueness or Markov-property claim about the selected law or other weak solutions. This marginal equivalence provides aggregate target calibration only: it does not identify the transition kernel from a fixed intermediate state or guarantee conditional round-trip fidelity, which we evaluate empirically. This state-only version does not require endpoint conditioning and is useful for the sampling schemes in Sections~\ref{sec:cross_modal_variation} and~\ref{sec:miscellaneous_comparisons}. See Appendix~\ref{sec:marginal_sdes} for details.

\subsection{Reverse-Time Model and Training Objective (Image-to-Text)}

%The previous section only let us generate in one direction: text-to-image. 
%We also wish to trace the path in reverse, giving us image-to-text generation.

\begin{theorem}[Reverse-Time SDE and Objective]\label{thm:reverse_process}
Fix a regular conditional kernel $x_1\mapsto\pdata(dx_0\mid x_1)$, and let $\mathbf r^\star(x,t;x_1)$ be the resulting fixed Borel $h$-transform version of
\[
\mathbb E_{\mathbb Q}\!\left[\nabla_x\log p_{\rm base}(x\mid X_0)
\,\middle|\,X_t=x,\ X_1=x_1\right],\qquad 0<t<1.
\]
The conditional-expectation equality is understood under the bridge sampling law. There exists a canonical weak solution $(\overleftarrow X,B)$ on reverse time $u\in[0,1)$ such that $\overleftarrow X_0\sim\pdata(X_1)$, $B$ is Brownian w.r.t. the solution filtration and independent of $\overleftarrow X_0$, and, on every $[0,T]$ with $T<1$,
\begin{equation}\label{eqn:reverse}
    d\overleftarrow{X}_u = \sigma(1-u)^2\mathbf{r}^\star(\overleftarrow{X}_u, 1-u; \overleftarrow{X}_0)du + \sigma(1-u)dB_u,
    \qquad 0\le u<1.
\end{equation}
The process has an almost-sure continuous extension at $u=1$, and
\[
\operatorname{Law}\bigl((\overleftarrow X_{1-t})_{0\le t\le1}\bigr)=\mathbb Q.
\]
Moreover, $\mathbf r^\star$ is the unrestricted weighted-$L^2$ minimizer of
\begin{equation}\label{eqn:reverse_time_loss}
    \overleftarrow{\mathcal{L}}(\mathbf{r}_\theta) = \underset{\substack{t \sim \mathcal{U}((0, 1)) \\ (x_0, x_1) \sim \pdata \\ x_t \sim \pbase(x_t | x_0, x_1)}}{\mathbb{E}}\left[ w(t)\left\| \mathbf{r}_\theta(x_t, t; x_1) - \nabla_{x_t}\text{\normalfont log }p_\text{base}(x_t | x_0) \right\|^2_{2}\right]
\end{equation}
with the same weight function as in Theorem~\ref{thm:forward_process}. Under realizability, every exact parameterized minimizer agrees with $\mathbf r^\star$ under the weighted training-input law.
\end{theorem}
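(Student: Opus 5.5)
The plan is to reduce Theorem~\ref{thm:reverse_process} to Theorem~\ref{thm:forward_process} by time reversal at the level of the base process, not by Anderson-type reversal of $\mathbb Q$. Set $Y_u = X_{1-u}$ and $\tilde\sigma(u)=\sigma(1-u)$. The key observation is that a Gaussian bridge of $dX_t=\sigma(t)dB_t$ pinned at $(x_0,x_1)$, read backwards, is again a Gaussian bridge, now for the time-changed base process $dY_u=\tilde\sigma(u)dW_u$ pinned at $(x_1,x_0)$. Both are centered-Gaussian perturbations of the same deterministic interpolation. Their covariance functions agree because
\[
\operatorname{Cov}(X_s,X_t\mid x_0,x_1)=\frac{\Sigma(s\wedge t)\,(\Sigma(1)-\Sigma(s\vee t))}{\Sigma(1)},\qquad \Sigma(t)=\int_0^t\sigma^2,
\]
is symmetric under $t\mapsto 1-t$ once $\Sigma$ is replaced by $\tilde\Sigma(u)=\Sigma(1)-\Sigma(1-u)$. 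Consequently the reversed measure $\tilde{\mathbb Q}=\operatorname{Law}(Y)$ is exactly the mixture of $\tilde\sigma$-base bridges over the swapped data law $\tilde p_{\rm data}(y_0,y_1)=\pdata(y_1,y_0)$. This puts $\tilde{\mathbb Q}$ in the setting of Section~\ref{sec:q_law}, with $\tilde\sigma$ in place of $\sigma$. Before using it, I would check that Assumption~\ref{assumption:1} is invariant under $\sigma\mapsto\sigma(1-\cdot)$; this should hold since it presumably only asks for positivity and integrability of $\sigma^2$ on $[0,1]$.

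With the identification in hand, I apply Theorem~\ref{thm:forward_process} to $(\tilde\sigma,\tilde p_{\rm data})$, using the regular kernel $x_1\mapsto \pdata(dx_0\mid x_1)$ as the fixed kernel. This yields a canonical weak solution on $[0,1)$ with initial law $\pdata(X_1)$, a Brownian motion independent of it, an a.s.\ continuous extension at $u=1$, and extended path law $\tilde{\mathbb Q}$. Reversing time gives $\operatorname{Law}((\overleftarrow X_{1-t}))=\mathbb Q$. The drift it supplies is $\tilde\sigma(u)^2\tilde{\mathbf f}^\star(y,u;y_0)$, where
\[
\tilde{\mathbf f}^\star(y,u;y_0)=\mathbb E\bigl[\nabla_y\log\tilde p_{\rm base}(Y_1\mid Y_u=y)\,\big|\,Y_u=y,Y_0=y_0\bigr].
\]
It remains to show $\tilde{\mathbf f}^\star(y,u;x_1)=\mathbf r^\star(y,1-u;x_1)$. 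Since $\tilde p_{\rm base}(y_1\mid y)$ is the density of $X_0=y_1$ given $X_{1-u}=y$ under the time-reversed base, it is the Gaussian $\mathcal N(y,\Sigma(1-u)I)$. Its $y$-gradient is therefore $(y_1-y)/\Sigma(1-u)$, which equals $\nabla_x\log p_{\rm base}(x\mid y_1)$ evaluated at $x=y$, $t=1-u$. Conditioning on $Y_u=x$, $Y_0=x_1$ is the same as conditioning on $X_t=x$, $X_1=x_1$ under $\mathbb Q$, so the two conditional expectations coincide as $h$-transform versions. This gives \eqref{eqn:reverse}.

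For the loss, I expand the square in \eqref{eqn:reverse_time_loss} and condition on $(x_t,t,x_1)$, which is the standard $L^2$ projection argument. The minimizer is the conditional expectation of $\nabla_{x_t}\log p_{\rm base}(x_t\mid x_0)$ given $(X_t,X_1)$ under the bridge sampling law, which is exactly $\mathbf r^\star$. The cross term is finite because Assumption~\ref{assumption:weight} makes $w(t)\,\mathbb E\|\nabla\log p_{\rm base}\|^2 \asymp w(t)d/\Sigma(t)$ integrable near $t=0$, just as near $t=1$ in the forward case. The realizability statement then follows verbatim from the forward proof.

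I expect the main obstacle to be the measure-theoretic identification of versions, not the Gaussian algebra. One must show that the fixed Borel $h$-transform version built from the reversed problem is the same Borel function as $\mathbf r^\star$ everywhere, not just almost everywhere, because the convention insists the exact path-law claim uses the oracle version. I would handle this by constructing both from the same explicit formula: a ratio of integrals of Gaussian kernels against $\pdata(dx_0\mid x_1)$, which is defined pointwise once the kernel is fixed. The weight condition also needs to cover the $t\to0$ endpoint; if Assumption~\ref{assumption:weight} is one-sided, I would add the symmetric requirement there.
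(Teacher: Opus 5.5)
Your proposal is correct and follows essentially the same route as the paper. You swap the endpoints, reflect the volatility to $\sigma(1-\cdot)$ (so $a'(u)=a(1)-a(1-u)$), use the two-time bridge covariance to identify the reflected bridge with the time-reversed original, apply Theorem~\ref{thm:forward_process}, and get the loss minimizer from the same $L^2$/Pythagorean projection. Your two contingency worries are already settled in the paper: Assumption~\ref{assumption:weight} bounds $w$ at both endpoints, and Assumption~\ref{assumption:1} (uniform bounds and continuity of $\sigma$, finite second moments) is invariant under $\sigma\mapsto\sigma(1-\cdot)$ and the endpoint swap.
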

\begin{proof}
    See Appendix~\ref{sec:reverse_process_proof}. Effectively, this is just a time-index inversion of Theorem~\ref{thm:forward_process}.
\end{proof}

This is Thm.~\ref{thm:forward_process}'s dual. It lets us translate data in the opposite direction: from image at $\overrightarrow{X_1}=\overleftarrow{X_0}$ to text at $\overrightarrow{X_0}=\overleftarrow{X_1}$, giving samples from a fixed version of $\pdata(dx_0\mid x_1)$ for $\pdata(X_1)$-almost every $x_1$. Similarly, it provides an SDE we can simulate in reverse-time, and a training objective for the neural network that provides the data-translating drift; there is also a selected state-only reverse-time weak solution with matching marginals in App.~\ref{sec:marginal_sdes} (Corollary~\ref{prop:marginal-match-rev}). Inputs outside the population support are not covered by the exact conditional-law statement.

\subsection{Classifier-Free Guidance}\label{sec:cfg}

Classifier-free guidance improves sample quality in diffusion models \citep{ho2022classifier}.
% \begin{equation}\label{eqn:cfg}
%     \mathbf{s}_\theta(x_t, t, c, w) = (1+w)\mathbf{s}_\theta(x_t, t; c) - w\mathbf{s}_\theta(x_t, t; \emptyset)
% \end{equation}
We use the following CFG-inspired extrapolation between the endpoint-conditioned and unconditioned bridge drifts from Theorems~\ref{thm:forward_process}--\ref{thm:reverse_process} and Corollaries~\ref{prop:marginal-match}--\ref{prop:marginal-match-rev}. Here $\omega\ge 0$ is the guidance strength:
\begin{align}
    \mathbf{f}_\theta(x_t, t, x_0, \omega) &= (1+\omega)\mathbf{f}_\theta(x_t, t; x_0) - \omega\mathbf{f}_\theta(x_t, t; \emptyset) \\
    \mathbf{r}_\theta(x_t, t, x_1, \omega) &= (1+\omega)\mathbf{r}_\theta(x_t, t; x_1) - \omega\mathbf{r}_\theta(x_t, t; \emptyset).
\end{align}
Unlike standard diffusion CFG, this bridge-drift extrapolation is used here as an empirical heuristic; we do not claim that it samples from an analytically identified guided distribution.

% \noindent\textbf{\textcolor{red}{Probability Flow ODE:}}

\section{From Discrete to Continuous Token Embeddings}

\noindent\textbf{Criterion:}
We have one snag in this otherwise elegant continuous-space mathematical framework: text is discrete. We circumvent this by embedding text tokens into continuous space.
This continuous representation must (1) form a near-invertible mapping with discrete text, so we get distinguishable outputs; (2) have the same dimensionality as image data, as per our mathematical framework.

\noindent\textbf{Framework:}
(1) Towards invertibility, we embed each token \textit{independently} without attending to the other tokens in the sequence. If we naively used the hidden state of each token after attending to all the other tokens, there would be too many possibilities for which continuous embedding corresponds to which token, complicating the learning task. Embedding each token independently effectively creates a lookup table whose number of entries is the same as the tokenizer vocabulary size, greatly simplifying the problem. Then, we just need a backbone language model that is powerful enough to create a distinct representation for each token, which we confirm empirically.
(2) Towards dimensionality matching, we fix the number of tokens (by truncation if the caption is too long, by padding if the caption is too short) per caption, and fix the embedding dimension of each token; such that the total size is the same as the number of components in the image latent code.

\noindent\textbf{Instantiation:}
(1) For token-to-embedding, we use \texttt{Qwen3-Embedding-8B}~\citep{zhang2025qwen3} to generate the token-by-token embedding lookup table, because it (a) is very expressive (b) supports truncation of the embedding dimensions to our desired size, via Matryoshka representations~\citep{kusupati2022matryoshka}.
For embedding-to-token, we train a small MLP with cross-entropy loss to convert embeddings into tokens: this achieves $>99\%$ accuracy, even under noisy embeddings, demonstrating empirical near-decodability.
(2) Regarding dimensionality, we use the Stable Diffusion VAE on $256\times256$ images, so the image latents have size $4\times32\times32 = 4096$. To match this, we have $64$ text tokens per image (after caption truncation or padding), each embedded into a truncated representation of $64$ dimensions, making for $64\times64=4096 \checkmark$ total dimensionality in the text side. Then, we can proceed with our diffusion bridge framework. Read App.~\ref{app:text_bridge}.

\section{Comparison to Other Model Classes}\label{sec:comparison}

\noindent\textbf{Noise-to-Data:}
Intuitively, starting the image/text generative process from a text/image representation should be semantically richer and more informative than starting from Gaussian noise. %Theorem~\ref{thm:mutual_info} verifies this from an information-theoretic perspective.
\begin{theorem}[Local mutual-information advantage of a data-to-data bridge]\label{thm:mutual_info}
Let $T$ denote a text representation, let $Y$ denote its paired image
representation, and let $N$ be noise independent of $(T,Y)$. Consider the
text-to-image and noise-to-image bridges
\begin{align}
    X_t^{\mathrm{TI}}
        &= a_t T + b_t Y + \eta_t Z,\label{eq:ti_bridge} \\
    X_t^{\mathrm{NI}}
        &= a_t N + b_t Y + \eta_t Z',\label{eq:ni_bridge}
\end{align}
where $Z$ and $Z'$ are independent noise variables, independent of all
endpoints. Suppose that
% \begin{enumerate}
    $a_0=1$, $b_0=0$, and $\eta_0=0$;
    $0<I(T;Y)<\infty$; and
    the functions
    $
        f(t) \coloneqq I(X_t^{\mathrm{TI}};Y),
        %\qquad
        g(t) \coloneqq I(X_t^{\mathrm{NI}};Y)
    $
    are right-continuous at $t=0$.
% \end{enumerate}
Then there exists $\tau>0$ such that, for every $t\in[0,\tau)$,
\[
    I(X_t^{\mathrm{TI}};Y)
    >
    I(X_t^{\mathrm{NI}};Y).
\]
\end{theorem}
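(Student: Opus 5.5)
The plan is to evaluate both mutual-information functions at the endpoint $t=0$, where they differ strictly, and then carry the strict inequality to a small neighborhood using the assumed right-continuity. The argument has three steps: compute $f(0)$, compute $g(0)$, and invoke continuity. No properties of the bridge coefficients other than their values at $t=0$ are needed.

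\textbf{Step 1 (text-to-image endpoint).} Since $a_0=1$, $b_0=0$, and $\eta_0=0$, \eqref{eq:ti_bridge} gives $X_0^{\mathrm{TI}}=T$ almost surely. Therefore $f(0)=I(X_0^{\mathrm{TI}};Y)=I(T;Y)$. By hypothesis this value lies in $(0,\infty)$, so it is a finite positive real number.

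\textbf{Step 2 (noise-to-image endpoint).} By \eqref{eq:ni_bridge}, $X_0^{\mathrm{NI}}=N$ almost surely. Because $N$ is independent of $(T,Y)$, it is in particular independent of $Y$, so $g(0)=I(N;Y)=0$.

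\textbf{Step 3 (continuity).} The functions $f$ and $g$ are real-valued near $t=0$; at $0$ this was checked in Steps 1 and 2, and the right-continuity hypothesis is meaningful only for finite values. Set $h(t)=f(t)-g(t)$. Then $h$ is right-continuous at $0$ with $h(0)=I(T;Y)>0$. Take $\varepsilon=I(T;Y)/2$. Right-continuity yields $\tau>0$ such that $|h(t)-h(0)|<\varepsilon$ for all $t\in[0,\tau)$, and hence $h(t)>I(T;Y)/2>0$ on that interval. This is exactly the claimed inequality $I(X_t^{\mathrm{TI}};Y)>I(X_t^{\mathrm{NI}};Y)$ for every $t\in[0,\tau)$.

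The main obstacle lies in the bookkeeping rather than in any analysis. Specifically, one must ensure that mutual information between a general, not necessarily density-admitting, random element and $Y$ is well defined (as a supremum over partitions, or as a relative entropy of the joint law against the product law). One must also check that almost-sure equality $X_0=T$ transfers $I$ exactly; it does, because $I$ depends only on the joint law. The genuinely analytic content, namely why $I(X_t^{\mathrm{TI}};Y)$ is continuous as $t\downarrow 0$ (which would involve lower semicontinuity of relative entropy together with a data-processing upper bound), is assumed away by the right-continuity hypothesis. The proof should therefore state clearly that the result is local and relies on that assumption.
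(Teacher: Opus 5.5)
Your proposal is correct and matches the paper's proof: both use $X_0^{\mathrm{TI}}=T$ and $X_0^{\mathrm{NI}}=N$, so the gap at $t=0$ equals $I(T;Y)-I(N;Y)=I(T;Y)>0$, and right-continuity of the gap carries strict positivity to some interval $[0,\tau)$. Your added remark that $f(0)$ and $g(0)$ are finite, so that right-continuity makes $f(t)-g(t)$ well-defined near $0$, is a small piece of bookkeeping the paper leaves implicit.
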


\begin{proof}
See Section \ref{sec:local_mutual_info_proof} for proof and statement of the opposite direction.
\end{proof}
% \note{This assumed right-continuity is very strong assumption. Maybe we can move some stronger results to here, ie prop 1 and cor 4 in Appendix G.}

Thm.~\ref{thm:mutual_info} concerns settings in which the evolving state itself must retain the source information, as in the round-trip procedures of Sections~\ref{sec:cross_modal_variation} and~\ref{sec:bridge_editing}. It does not compare against a standard conditional T2I sampler that retains the text prompt as side information throughout the trajectory, and with retained side information the theorem gives no ordering. The state-only comparison is relevant to the evaluated round-trip setting, where a suitable auxiliary condition need not be supplied a priori (\textit{e.g.}, for an uncaptioned source image).

WLOG, Thm.~\ref{thm:mutual_info} is presented for the text-to-image direction, but it also applies to image-to-text (with flipped time).
% The $\tau$ bound could be vacuous, depending on the bridge geometry. 
App.~\ref{app:gaussian-mi} shows that for positively correlated jointly Gaussian scalar random variables and \textit{under \bit{}'s choice of interpolation coefficients, strict dominance holds for every $t\in[0,1)$, so $\tau=1$}.
For arbitrary joint distributions, how far $\tau$ extends is an empirical question; our experiments (Sec.~\ref{sec:cross_modal_variation}) are consistent with the hypothesis that $\tau$ covers a large range for text-to-image. 
%Particularly, in the near-endpoint regime (when we are close to text), we see a clear benefit in translation adherence, which is exactly the $[0, \tau)$ regime the theory is strongly supported for. 

\noindent\textbf{ODE vs. SDE:}
ODEs linking text modalities directly to image modalities should fail at modeling multi-peaked distributions, since ODEs are deterministic. So, distribution-to-distribution flow matching (without hacks like endpoint noise injection~\citep{albergo2023stochastic, he2025flowtok}) is unsuitable for modeling the range of images that could correspond to a text prompt, and vice versa.
In contrast, SDEs (due to noise injection) have hope for modeling multi-peaked distributions.

\section{Experiments}\label{sec:experiments}

%\input{details/training}

%\subsection{Setup}

\noindent\textbf{Comparisons}
In choosing baselines, we assess the impact of (1) data-to-data versus noise-to-data interpolation in the generative process (corresponding to diffusion~\citep{song2020score}); and (2) probabilistic versus deterministic modeling in the cross-modality bridge (corresponding to flow-matching~\citep{albergo2025stochastic}).
The first baseline is therefore an ablation that replaces one endpoint of the bridge with i.i.d.\ Gaussian noise and passes the cross-modality information through cross-attention conditioning. For text-to-image generation, for example, we start from noise and condition the score network on the text tokens, as in common text-to-image models~\citep{rombach2022high}. For fairness, we keep the SDE volatility schedule fixed.
The second baseline is an ablation where, instead of a stochastic process, we construct a deterministic ODE by flow matching~\citep{albergo2025stochastic} between coupled text-image pairs.

\noindent\textbf{Model Architecture} We build upon \cite{guo2026abc}'s diffusion transformers. We use consistent settings across baselines, to isolate the effects of transport stochasticity and endpoint. See App.~\ref{app:model_architecture}.

\noindent\textbf{Dataset} We use approximately 100 million images from the GPIC text-to-image dataset~\citep{chandrasegaran2026gpic}, with images preprocessed into Stable Diffusion VAE latents~\citep{rombach2022high} and text preprocessed into Qwen embeddings~\citep{zhang2025qwen3}. See Appendix~\ref{app:dataset}.

\noindent\textbf{Training and Inference Details} See App.~\ref{app:training_details} for hyperparameters. See App.~\ref{app:train_alg} for training, and App.~\ref{app:inference_alg} for inference pseudocode. Generally, we use the same hyperparameters across baselines, so that the performance difference can be attributed to the endpoint choice and stochasticity in transport.

\begin{figure}[t]
\centering
\begin{subfigure}[]{0.49\linewidth}
\centering
    \includegraphics[width=0.49\linewidth]{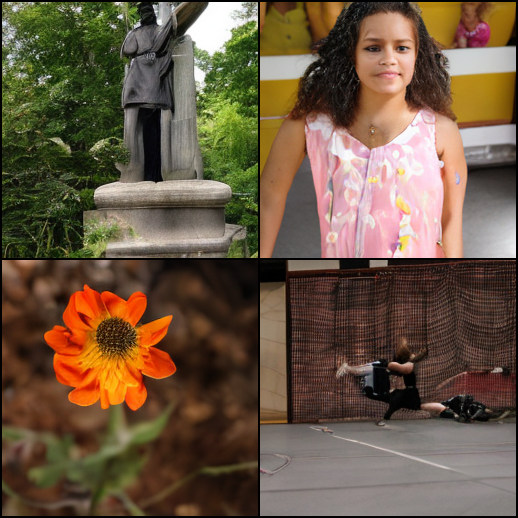}
    \includegraphics[width=0.49\linewidth]{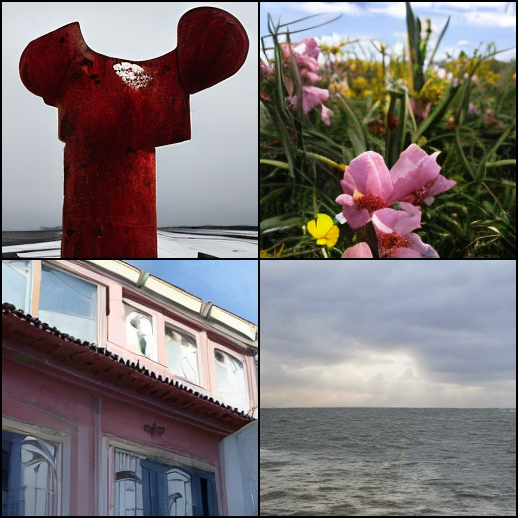}
    \includegraphics[width=0.49\linewidth, trim={0 0 0 0.06\linewidth},clip]{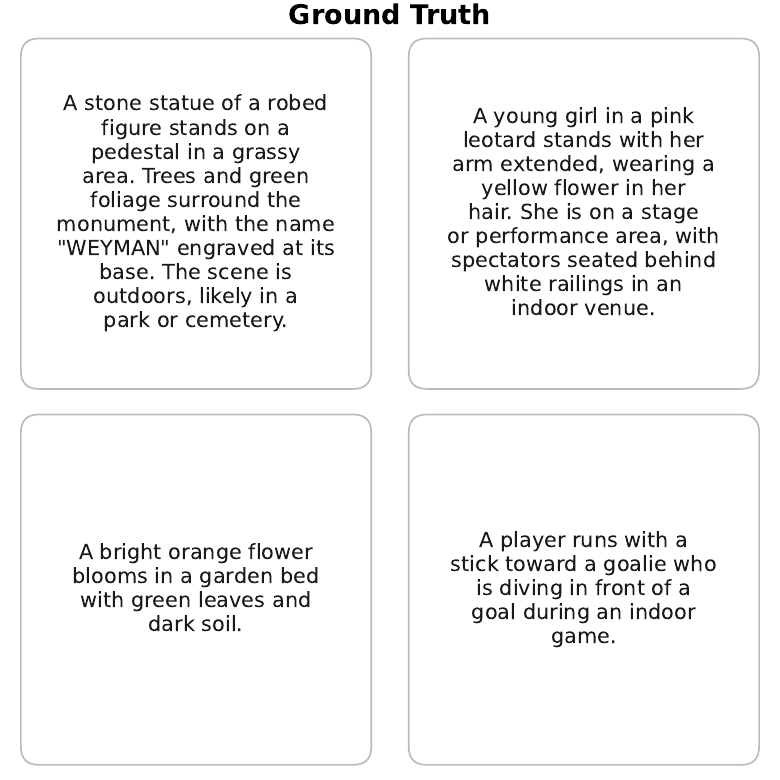}
    \includegraphics[width=0.49\linewidth, trim={0 0 0 0.06\linewidth},clip]{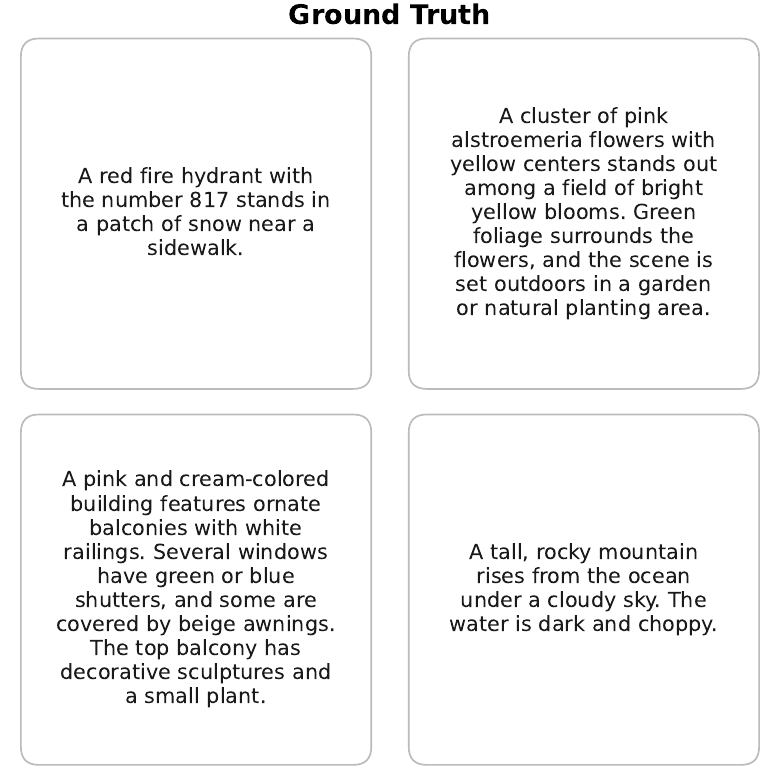}
    \caption{\textbf{Text-to-Image}: Images generated by forward SDE (Eq.~\ref{eqn:data_gen}), captions come from dataset.}\label{fig:t2i_results}
\end{subfigure}
\hfill
\begin{subfigure}[]{0.49\linewidth}
\centering
    \includegraphics[width=0.49\linewidth]{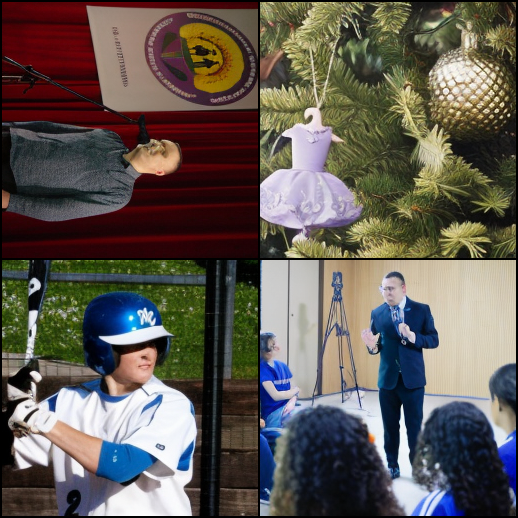}
    \includegraphics[width=0.49\linewidth]{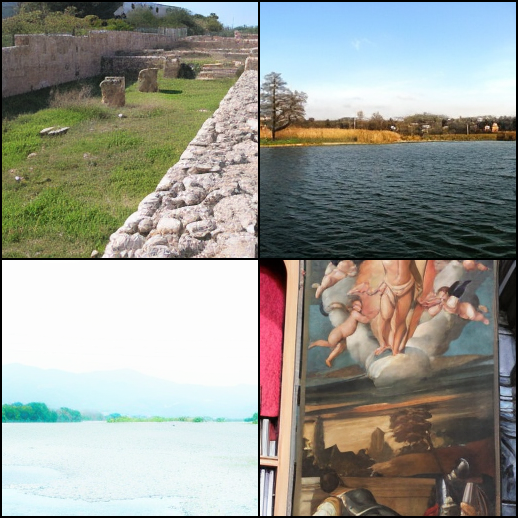}
    \includegraphics[width=0.49\linewidth, trim={0 0 0 0.06\linewidth},clip]{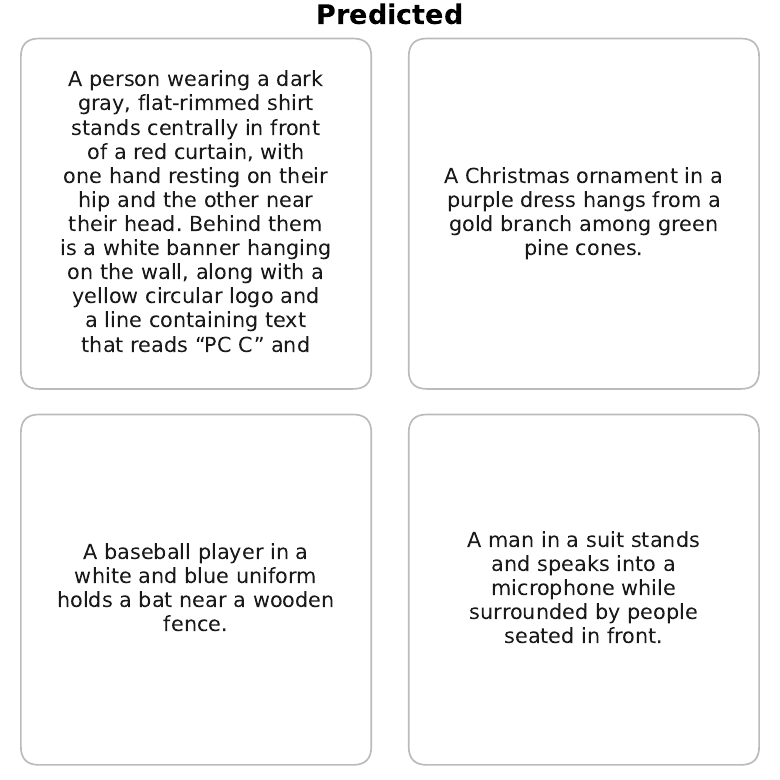}
    \includegraphics[width=0.49\linewidth, trim={0 0 0 0.06\linewidth},clip]{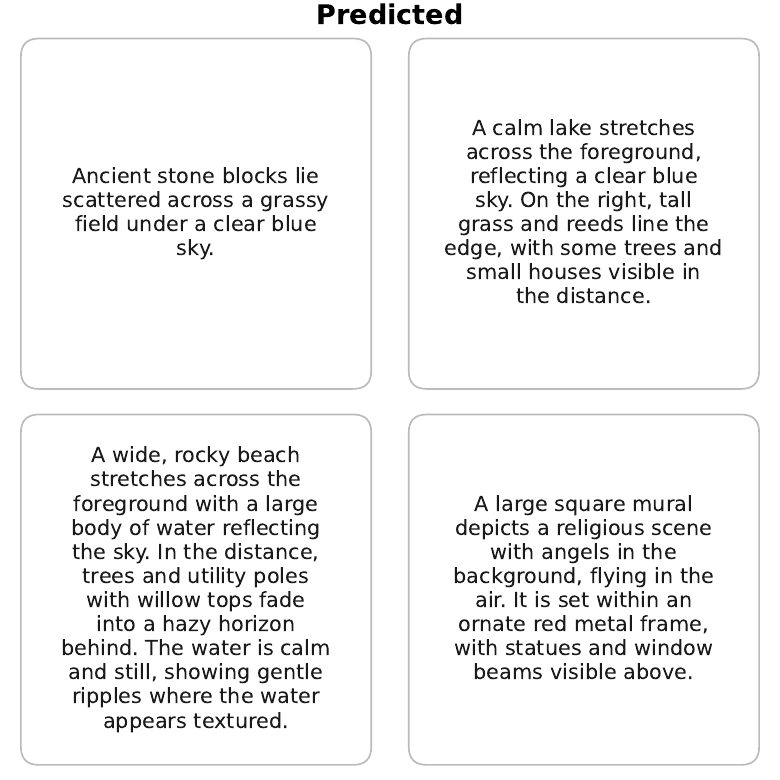}
    \caption{\textbf{Image-to-Text}: Captions generated by reverse SDE (Eq.~\ref{eqn:reverse}), images come from dataset.}\label{fig:i2t_results}
\end{subfigure}
\caption{\textbf{Sample Generations:} Using our DiT-XL/2 model (1,054,621,840 parameters) trained for 122,500 iterations at batch size 1792. CFG scale $\omega=0.5$, 500 SDE steps. %\textbf{\textit{See \texttt{\href{https://bit-diffusion.github.io}{https://bit-diffusion.github.io}} for demo videos.}}
}
\end{figure}

\subsection{Conditional Text-to-Image and Image-to-Text Generation}\label{sec:gen_results}

\noindent\textbf{Goals} 
We benchmark generation quality and prompt adherence in both T2I and I2T directions.

\noindent\textbf{Results}
See Tab.~\ref{tab:gen_results}. \bit{} performs favorably compared to other methods. See Sec.~\ref{subsec:gen_metrics} for metrics.

\noindent\textbf{\textit{Effect of Data-to-Data:}} 
For text-to-image generation, \bit{} is very close to noise-to-data diffusion's performance on FID and CLIP. On image-to-text, \bit{} is by far the best on generative perplexity, and second-best on CLIP. 
The performance advantage of noise-to-data diffusion is small but statistically significant at a confidence level equal to 95\%.
\begin{wraptable}[10]{r}{0.65\textwidth}
% \begin{table}[h]
    \centering
    \resizebox{\linewidth}{!}{
        \begin{tabular}{lrrrr}
            \toprule
            Method & FID ($\downarrow$) & CLIP T2I ($\uparrow$) & Gen PPL ($\downarrow$) & CLIP I2T ($\uparrow$) \\
            \midrule
            \textbf{\bit{}} & 6.23 & \textit{\textbf{27.10}} & \textbf{123.4} & \textit{\textbf{26.99}} \\
            Diffusion (ablate data end) & \textbf{5.73} & \textbf{27.13} & 178.2 & 26.42 \\
            Flow (ablate stochasticity) & 279.13 & 20.13 & 169.2 & \textbf{27.25} \\
            \bottomrule
        \end{tabular}
    }
    \caption{Quality comparison between \bit{}, noise-to-data diffusion \citep{song2020score}, and cross-modality flow matching \citep{albergo2025stochastic}. We use 500 SDE steps, with $\omega=0$. T2I measured at 50k samples, I2T measured at 10k samples.}
    \label{tab:gen_results}
\end{wraptable}
This suggests that the Gaussian source used in diffusion models is not inherently special.
% The marginal per-pair CLIP-score standard deviations for \bit{} and noise-to-data diffusion were approximately $3.3$. \textcolor{red}{\textbf{Author note:} Because both methods are evaluated on the same prompts, a hypothesis test for their mean CLIP difference requires a variance estimator for the paired per-prompt differences (or an equivalent paired bootstrap). Compute this from the per-prompt scores before reporting a p-value.}

\noindent\textbf{\textit{Effect of Stochasticity:}}
Data-to-data flow matching fares the worst on every metric besides image-to-text CLIP, in line with our expectations: multimodality translation is inherently a stochastic task, so a deterministic ODE model is unsuitable.
Without auxiliary randomness, a deterministic source-to-target flow cannot represent non-degenerate conditional distributions; this expressivity limitation helps explain the orders-of-magnitude worse text-to-image FID observed for the flow baseline.
%\textit{it is an intrinsic consequence of the marginal velocity field that averages all the possible targets from the multi-peak distribution into a blur.} 

We interpret this not as a SoTA competitor that \bit{} defeats, but as a measurement that isolates one design axis. FlowTok~\citep{he2025flowtok} and CrossFlow~\citep{liu2025flowing} report strong text-to-image results using data-to-data couplings, but they additionally learn variational encoders with auxiliary contrastive objectives. Because the variational encoder injects noise, these components restore stochasticity to an otherwise deterministic transport and address the conditional-diversity limitation of a bare deterministic map. \bit{} instead obtains stochasticity natively from the bridge SDE. \textit{Our controlled comparison of the ODE and SDE baselines shows that the bare deterministic transport is unsuitable in this multimodal setting.}
That being said, its good performance on image-to-text CLIP indicates that captioning (at least within the GPIC distribution) may be a fairly low entropy task given image conditioning. %; this is also attributable to the sharing of REPA regularization~\citep{yu2024representation} with the image generation direction, since they share the same velocity field. 
However, its generative perplexity is the worst, indicating it has not mastered the syntactic nuances of text.

\subsection{Cross-Modal Round-Trip Stochastic Variation}\label{sec:cross_modal_variation}
\noindent\textbf{Goals} An important task in AI-aided art and design is to create variations of input data; \textit{e.g.}, Midjourney can generate image variations~\citep{midjourney_variation}, while ChatGPT can paraphrase documents.
\begin{wrapfigure}[25]{r}{0.65\linewidth}
\centering
    \includegraphics[width=\linewidth]{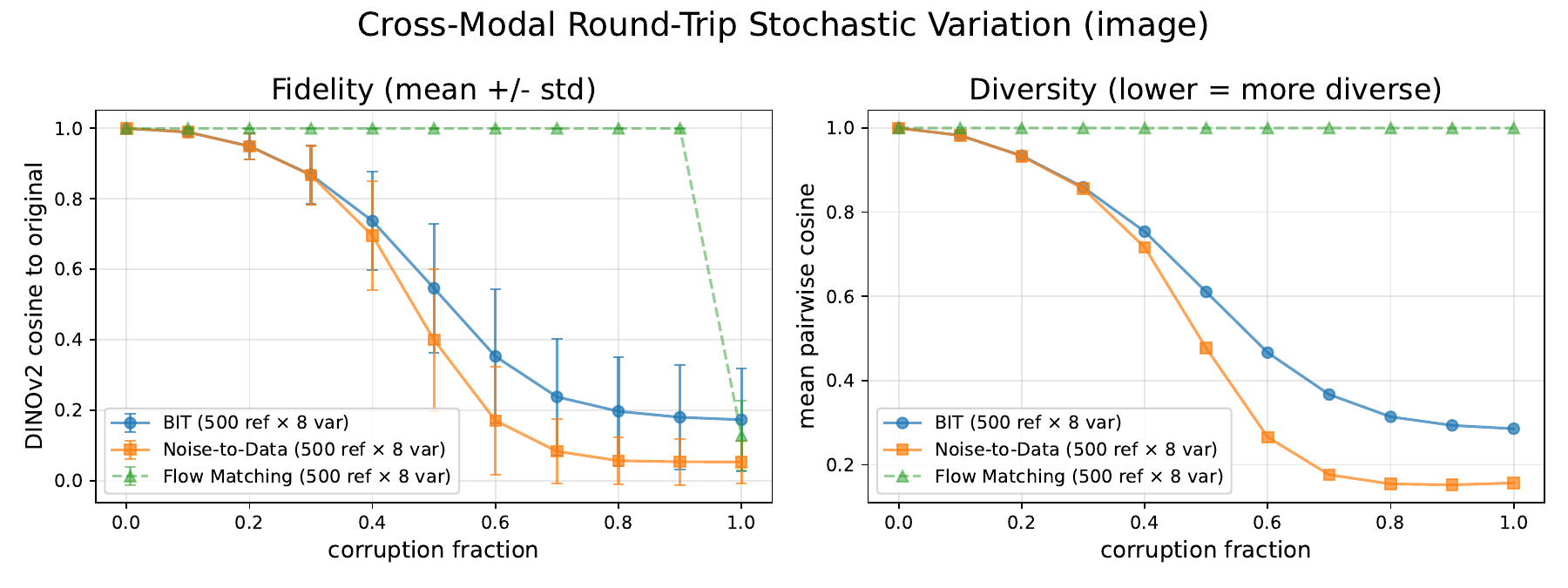}%
    \\
    \includegraphics[width=\linewidth]{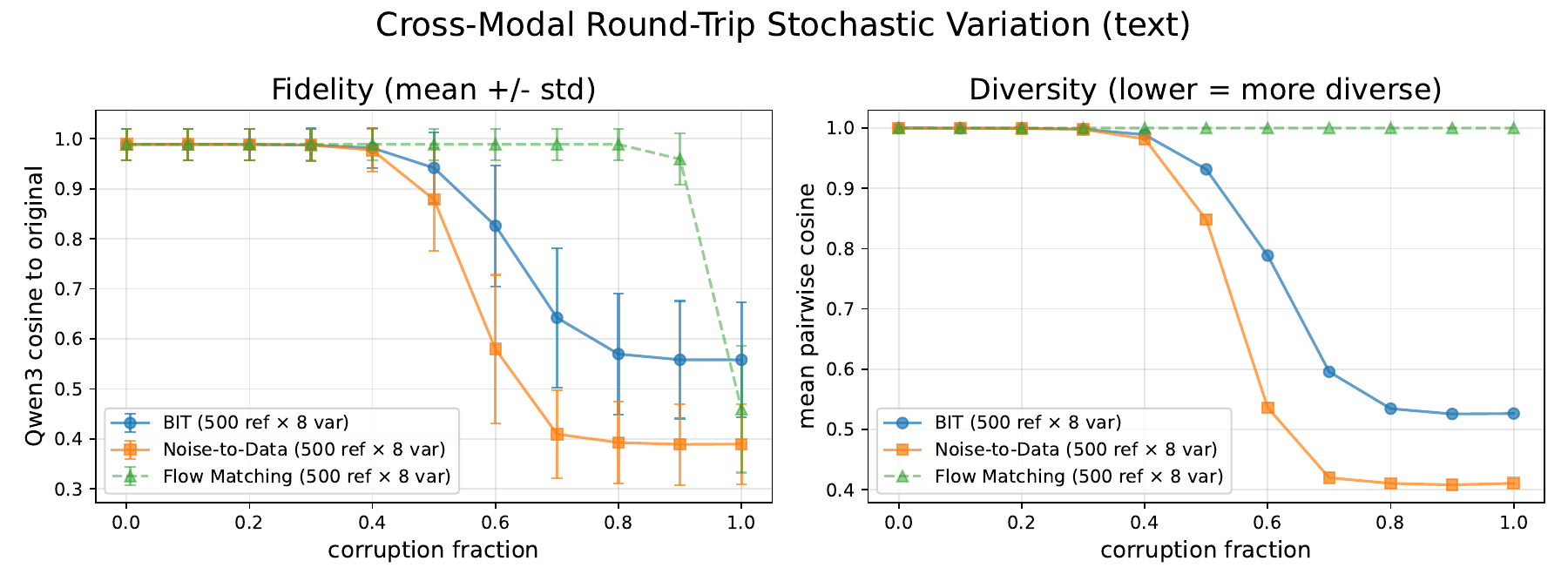}%
    \caption{\textbf{Cross-Modal Round-Trip Stochastic Variation:} We want to generate semantically related variants of a given data sample.%, \textit{e.g.}, paraphrase sentences, drawing dogs in different poses. 
    We run Alg. \ref{alg:roundtrip-image-editing} with $dt=0.004$ with Euler-Maruyama simulation. We compare \bit{} to denoising diffusion and data-to-data flow matching. App.~\ref{sec:qual_results_variation} shows qualitative examples.}\label{fig:variation}
\end{wrapfigure}
With SDE-based (diffusion) generative models, a popular method to create data variations is to run a forward ``noising'' process, then restore the data through a time-reversed score-driven process, as popularized by SDEdit~\citep{meng2021sdedit}. The forward process removes fine details while retaining some high-level structure; time-reversed ``denoising'' then stochastically restores details consistent with that structure.

With noise-to-data diffusion models, increasing corruption progressively removes information about the original data. Thm.~\ref{thm:mutual_info} motivates testing whether \bit{} retains useful source information for longer because its two data endpoints are dependent; the theorem itself is local near the source endpoint, so performance at high corruption levels is an empirical question.

\noindent\textbf{Procedure} We run Alg.~\ref{alg:roundtrip-image-editing} on ground-truth samples at varying corruption levels. The marginal-equivalence corollaries motivate aggregate target calibration but don't guarantee source-conditioned reconstruction; semantic round-trip fidelity is evaluated empirically here.
%\noindent\textbf{Metrics} 
See Sec.~\ref{subsec:cross_modal_round_trip_metrics} for metrics.

\noindent\textbf{Results} See Figures~\ref{fig:variation}, \ref{fig:variation_qual}, and~\ref{fig:variation_text_qual}. 
Consistent with the information-retention motivation of Theorem~\ref{thm:mutual_info}, \bit{}'s fidelity curve degrades more gracefully than that of the noise-to-data baseline at higher corruption fractions in both directions. Greater corruption also produces more diverse restorations. Noise-to-data diffusion exhibits greater diversity because its noise endpoint discards all source-state information, whereas \bit{} trades off fidelity and diversity.

% \subsection{Bridge-Based Image Editing}
% See Sec.~\ref{sec:bridge_editing} for results, Sec.~\ref{app:bridge_editing} for setup.

% \input{experiments/bridge_based_editing}

\subsection{Miscellaneous Comparisons}\label{sec:miscellaneous_comparisons}

This subsection uses the billion-parameter model (trained on about 2x as many effective examples, with one network for both directions) from Sec.~\ref{app:large_scale_hparams}.

\noindent\textbf{Impact of CFG:} See Table~\ref{tab:cfg}. The CFG from Section~\ref{sec:cfg} improves model performance for both text and image generation. 

\noindent\textbf{Impact of Scale and Engineering:} Performance gains are possible with scale and engineering, as even $\omega=0$ in Tab.~\ref{tab:cfg} exceeds all results in Tab.~\ref{tab:gen_results}. This gives hope for industrial-scale \bit{} versions.

\noindent\textbf{Anchors Against External Baselines:} The previous subsections provide controlled scientific comparisons. Table~\ref{tab:cfg} additionally reports external reference points, but these are not like-for-like comparisons: the models differ in training data, compute, architecture, and inference settings. 
\begin{wraptable}{r}{0.65\linewidth}
    \centering
    \resizebox{\linewidth}{!}{
    \begin{tabular}{lcccc}
        \toprule
        Method & FID ($\downarrow$) & CLIP T2I ($\uparrow$) & Gen PPL ($\downarrow$) & CLIP I2T ($\uparrow$) \\
        \midrule
        \bit{} \textbf{$\omega=0.0$} & 4.39 & 27.75 & 78.1 & 29.4 \\
        \bit{} \textbf{$\omega=0.5$} & \textbf{2.54} & \textbf{28.80} & \textbf{61.1} & \textbf{29.5} \\
        \midrule 
        Qwen-3-VL-4B-Instruct & n/a & n/a & 37.7 & 31.9 \\
        Stable Diffusion 1.5 & 9.92 & 33.15 & n/a & n/a \\
        \bottomrule
    \end{tabular}
    }
    \caption{\textbf{Impact of CFG, and Comparison to Foundation Models.} We use 500 SDE steps.}
    \label{tab:cfg}
\end{wraptable}
For I2T, \bit{} remains behind Qwen-3-VL-4B-Instruct~\citep{bai2025qwen3}, the model used to produce the GPIC reference captions, on both reported metrics. For T2I, \bit{} has a better FID than Stable Diffusion 1.5~\citep{rombach2022high} on the GPIC held-out distribution but a lower CLIP score. 
% Because \bit{} was trained on GPIC whereas Stable Diffusion 1.5 was not, the favorable FID partly reflects in-distribution matching and should not be interpreted as an overall quality win.

\subsection{Scientific Domain: Cell Fate Modeling}

\noindent\textbf{Goals} Cell fate modeling studies how an early cellular state gives rise to later, differentiated outcomes. We evaluate \bit's ability to learn a conditional generative map between early progenitor cells and their later descendants in the \textsc{Larry} lineage-tracing dataset~\citep{larry}. The model should generate realistic day-6 cell states from day-2 progenitors while preserving the fate and clone structure implied by lineage barcodes. So, a strong model must do more than match the marginal distribution of later cells: it must capture branching differentiation, place generated cells on the correct fate-specific trajectories, and support meaningful forward and reverse transport between progenitor and descendant states.
See Section~\ref{sec:larry_details} for metrics.
%\noindent\textbf{Metrics} See Section~\ref{sec:larry_details}.

\noindent\textbf{Results} 
Table~\ref{tab:larry_state_fate} reports final test-set metrics. The cosine-volatility bridge has the best average rank (2.67), while different methods lead individual metrics. In particular, the cosine-volatility bridge obtains the highest fate kNN accuracy (0.445), and the separate-network uniform-volatility bridge has the best clone kNN and distributional MMD. These single-run ranks are descriptive; without uncertainty estimates, they do not establish statistical superiority. %and improves cycle consistency relative to the uniform bridge, while maintaining similar forward MSE and MMD.
%This suggests that the choice of bridge noise schedule matters: the cosine bridge preserves state-fate structure more reliably than the uniform schedule in this setting.
While endpoint regression obtains the lowest pointwise MSE, it substantially degrades distributional and biological structure, with the worst fate kNN accuracy and MMD among the methods. %Thus, direct endpoint prediction can recover paired latent coordinates without faithfully modeling the differentiated cell population. 
Rectified flow gives competitive MMD and cycle error, but has much worse forward and reverse MSE, indicating that its generated states match some marginal structure while failing to preserve the paired state-fate map. 

\begin{table}[t]
\centering
\small
\resizebox{\columnwidth}{!}{
\begin{tabular}{lcccccc c}
\toprule
Method & Fwd MSE $\downarrow$ & Rev MSE $\downarrow$ & Fwd MMD $\downarrow$
& Fate kNN $\uparrow$ & Clone kNN $\uparrow$ & Cycle MSE $\downarrow$ & Avg.\ Rank $\downarrow$ \\
\midrule
\textbf{\bit{}, uniform shared} & 1.388 & 1.365 & 0.0446 & 0.404 & 0.182 & 1.307 / 1.418 & 3.33 \\
\textbf{\bit{}, uniform separate} & 1.415 & 1.394 & \textbf{0.0352} & 0.377 & \textbf{0.209} & 1.230 / 1.217 & 3.42 \\
\textbf{\bit{}, cosine shared} & 1.388 & 1.262 & 0.0431 & \textbf{0.445} & 0.160 & 1.042 / 1.136 & \textbf{2.67} \\
Noise-to-data diffusion & 1.329 & 1.381 & 0.0517 & 0.377 & 0.176 & 1.270 / 1.474 & 4.00 \\
Rectified flow & 2.444 & 1.694 & 0.0406 & 0.398 & 0.141 & \textbf{0.621 / 1.020} & 3.83 \\
Endpoint regression & \textbf{1.195} & \textbf{1.176} & 0.0847 & 0.336 & 0.047 & 1.076 / 1.102 & 3.75 \\
\bottomrule
\end{tabular}
}
\caption{State-fate bridge results on LARRY day-2 to day-6 clone-paired endpoints.
Average rank is computed per metric column (ties broken by midranks) and averaged over the six metrics; Cycle MSE sub-columns ranked separately and averaged so cycle consistency contributes a single metric.
%Overall, \bit{} models perform the best.
}
\label{tab:larry_state_fate}
\end{table}
  
% \begin{figure}[t]
%   \centering
%   \includegraphics[width=\linewidth]{trajectory.pdf}
%   \caption{
%   Qualitative state-fate transport learned by \bit{} on the LARRY day-2 to day-6 benchmark.
%   Points show a shared PCA projection of day-2 progenitors, \bit{}-generated day-6 descendants, and real day-6 descendants, colored by fate label.
%   The generated day-6 distribution recovers the main fate-structured regions of the real descendant population while preserving the branching organization from early progenitor states.
%   }
%   \label{fig:larry_bit_cosine_bridge}
% \end{figure}

\subsection{Additional Results}
See Sec.~\ref{sec:bridge_editing} for qualitative image-editing results, Sec.~\ref{sec:repa} for ablation on REPA.

\section{Related Works, Discussion, Conclusion}
Our work builds on advances in diffusion bridges~\citep{guo2026abc, zhou2023denoising, kieu2025bidirectional}, data-to-data translation models~\citep{lee2026flow, liu2025flowing}, and continuous-time and space language generation~\citep{hu2026elf, lee2026flow, potaptchik2026discrete}. \noindent\textbf{\textit{See Section~\ref{sec:related_works} for more related work.}} 
In future work, we hope to explore translation across many domains, \textit{e.g.}, text-image-audio. We also want to conduct a systematic study of how endpoint geometry affects bridge performance.
In this work, we introduced \bit{}: \textbf{B}idirectional \textbf{I}mage-\textbf{T}ext Diffusion Bridges. \bit{} challenges the traditional noise-to-data paradigm for T2I generation: in our controlled experiments, starting from a text representation and interpolating toward images is competitive with the noise-to-data baseline and performs better on several downstream tasks. The endpoint-conditioned construction supports generation in both directions, while its source-aware path enables flexible round-trip sampling within a unified T2I and I2T framework.

\newpage
\clearpage

\subsection*{AI use statement}

In this work, we used generative AI tools for helping to: write mathematical proofs, write code, and draft some descriptions of experimental details. We have manually reviewed all AI-assisted work (proofs, code, writing), and affirm to the best of our knowledge that it is correct.

\subsubsection*{Acknowledgments}
\textcolor{red}{This material is based upon work supported by the U.S. Department of Energy, Office of Science, Office of Advanced Scientific Computing Research, Department of Energy Computational Science Graduate Fellowship under Award Number DE-SC0025528. This research used resources of the National Energy Research Scientific Computing Center (NERSC), a Department of Energy User Facility (projects m5319-2026, m1266-2026). J. Blanchet gratefully acknowledges support from ONR under award N00014-24-1-2655, and the National Science Foundation (NSF) under grants 2312204 and 2403007.}

\textcolor{red}{We thank Yangyi Shen, Lutong Hao, and Miguel Liu-Schiaffini for helpful discussion.}

\bibliography{iclr2027_conference}
\bibliographystyle{iclr2027_conference}

\newpage
\clearpage

\appendix

\section{Conditional Stochastic Process Derivation}

This section provides a self-contained derivation of the theoretical results underlying our algorithmic approach, prioritizing clarity and accessibility over maximum generality. Throughout this section, we denote the population data-generating law as $\Pi \in \Delta((\mathbb{R}^d)^2)$. %in order to fully distinguish it from its probability distribution function.

Under the base measure $\mathbb{P}$, the process $X_{\cdot}$ defined in Equation~\ref{eqn:base_process} is conditionally Gaussian: $(X_t)_{t \in [0,1]} \vert X_0$ is a Gaussian process. Therefore, conditional on both endpoints, $(X_t)_{t \in [0,1]} \vert X_0, X_1$ is also a Gaussian process.

This conditional Gaussian process is a bridge pinned at both endpoints. We denote the law of the Gaussian bridge with endpoints $(x_0, x_1) \in (\mathbb{R}^d)^2$ by $\mathbb{P}^{(x_0,x_1)}$. Therefore,
\begin{align}
    \mathbb{Q}(A) = \int_{(x_0, x_1) \in (\mathbb{R}^d)^2} \mathbb{P}^{(x_0,x_1)}(A) \Pi(dx_0, dx_1)\label{eqn:q_measure_full}
\end{align}
for any measurable set $A$.
Write $\Pi_0$ and $\Pi_1$ for the first and second marginals of $\Pi$.

\begin{assumption}
\label{assumption:1}
    Assume that $0 < \underline{\sigma} \le \sigma(t) \le \bar{\sigma}$ for all $t \in [0,1]$, that $\sigma$ is continuous, and that the endpoint coupling has finite second moments:
    \[
        \mathbb{E}_{\Pi}\!\left[\lVert X_0\rVert_2^2+\lVert X_1\rVert_2^2\right]<\infty.
    \]
\end{assumption}

Under this assumption, $\sigma$ is square-integrable. Moreover, we define
\begin{align*}
    a(t) := \int_{s=0}^t \sigma(s)^2 ds,
\end{align*}
so $a:[0,1] \to [0, \infty)$ is increasing and Lipschitz continuous. Because $\sigma^2$ is continuous, the fundamental theorem of calculus also gives $a\in C^1([0,1])$ with $a'(t)=\sigma(t)^2$.

\paragraph{Gaussian/Brownian Bridge Construction}

Regarding the proofs to follow, it is useful to anchor them in the context of the well-known Brownian bridge machinery~\citep{oksendal2003stochastic}.

It is known that~\citep{doob1984classical}, by defining $(X^{(x_0, x_1)}_t)_{t\in [0,1)}$ via the SDE
\begin{align*}
    \begin{cases}
        X^{(x_0, x_1)}_0 &= x_0\\
        dX^{(x_0, x_1)}_t &= \frac{\sigma(t)^2}{a(1)-a(t)}[x_1 - X^{(x_0, x_1)}_t]dt + \sigma(t) dB_t
    \end{cases},
\end{align*}
we will have that a strong solution exists on $[0,1)$ with a unique continuous extension throughout $[0,1]$; such solution has law $\mathbb{P}^{(x_0, x_1)}$.

Now fix only the initial endpoint $x_0$ and let $X_1$ have a distribution $F_1 \in \Delta(\mathbb{R}^d)$ with finite second moments. On the half-open interval $[0,1)$, consider the SDE
\begin{align}
\label{SDE:x0F1}
    \begin{cases}
        X^{(x_0, F_1)}_0 &= x_0\\
        dX^{(x_0, F_1)}_t &= 
        \frac{\sigma(t)^2}{a(1)-a(t)}
        [m(t, X^{(x_0, F_1)}_t; x_0, F_1) - X^{(x_0, F_1)}_t]dt + \sigma(t) dB_t
    \end{cases},
\end{align}
where the equation is understood on every finite horizon $[0,T]$, $T<1$ (the value of the drift at the single time $t=0$ may be chosen arbitrarily), and where, for $0<t<1$, the tilted mean function is
\begin{align*}
    m(t, x_t; x_0, F_1) :=
    \frac{\int_{y \in \mathbb{R}^d} y
    e^{-\frac{\left\Vert (y-x_0) - (x_t-x_0)\frac{a(1)}{a(t)}\right\Vert_2^2}{2 \frac{a(1)(a(1)-a(t))}{a(t)}}}
    F_1(dy)}{\int_{y \in \mathbb{R}^d}
    e^{-\frac{\left\Vert (y-x_0) - (x_t-x_0)\frac{a(1)}{a(t)}\right\Vert_2^2}{2 \frac{a(1)(a(1)-a(t))}{a(t)}}}
    F_1(dy)}.
\end{align*}
The mixture measure
\begin{align*}
    \mathbb P^{(x_0,F_1)}
    :=\int_{\mathbb R^d}\mathbb P^{(x_0,y)}F_1(dy)
\end{align*}
will provide a canonical weak solution law of this SDE. Its endpoint is supplied by the continuous extension, not by evaluating the singular drift at $t=1$.

A heuristic way to motivate the correctness of this is that the tilted mean (which was written as a weighted average over Gaussian PDFs) is indeed
\begin{align}
    m(t, x_t; x_0,F_1) &= \mathbb{E}_{(X_r)_{r\in [0,1]} \sim \mathbb{P}^{(x_0, F_1)}}\left[X_1 \vert X_t = x_t\right]\\
    &= \mathbb{E}_{(X_r)_{r\in [0,1]} \sim \mathbb{P}^{(x_0, F_1)}}\left[X_1 \vert X_t = x_t, (X_s)_{s=0}^t\right],
\end{align}
so $m(t, X^{(x_0, F_1)}_t;x_0,F_1)$ is the mean of the endpoint $X^{(x_0, F_1)}_1$ given the Brownian filtration at the current time $\mathcal{F}_t$. To see why, we note (and will later show rigorously) that $X^{(x_0, x_1)}_t$ is distributed according to the pdf
\begin{align*}
    p(x_t; x_0, x_1) &=
    \left(2\pi \frac{a(t)(a(1)-a(t))}{a(1)}\right)^{-\frac{d}{2}} e^{-\frac{\left\Vert (x_t-x_0) - (x_1-x_0)\frac{a(t)}{a(1)}\right\Vert_2^2}{2 \frac{a(t)(a(1)-a(t))}{a(1)}}}\\
    &=
    \left(2\pi \frac{a(t)(a(1)-a(t))}{a(1)}\right)^{-\frac{d}{2}} e^{-\frac{\left\Vert (x_1-x_0) - (x_t-x_0)\frac{a(1)}{a(t)}\right\Vert_2^2}{2 \frac{a(1)(a(1)-a(t))}{a(t)}}}.
\end{align*}

\begin{lemma}
\label{lemma:mixture of bridges}
For any $x_0\in\mathbb R^d$, under $\mathbb P^{(x_0,F_1)}$ the coordinate process is a weak solution of~\ref{SDE:x0F1} on every $[0,T]$, $T<1$. It has an almost-sure continuous extension to $t=1$, with $X_1\sim F_1$. Thus $\mathbb P^{(x_0,F_1)}$ is the canonical weak solution law associated with~\ref{SDE:x0F1}; no uniqueness assertion is made for arbitrary pointwise versions of its drift.
\end{lemma}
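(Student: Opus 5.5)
We prove Lemma~\ref{lemma:mixture of bridges} by treating $\mathbb P^{(x_0,F_1)}$ as a mixture of the Gaussian bridges $\mathbb P^{(x_0,y)}$ and passing to the filtration generated by the coordinate process. The strategy is filtering: integrate out the unobserved endpoint $X_1$. The key steps are as follows.

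\textbf{Step 1 (enlarged filtration).} Under each $\mathbb P^{(x_0,y)}$ the coordinate process satisfies
\[
dX_t=\frac{\sigma(t)^2}{a(1)-a(t)}(y-X_t)\,dt+\sigma(t)\,d\beta_t
\]
on $[0,T]$, with $\beta$ a Brownian motion. This holds by the known strong-solution result cited above. Since $y\mapsto\mathbb P^{(x_0,y)}$ is measurable (the bridge is an explicit Gaussian functional of $y$), the mixture is well defined. In the enlarged filtration $\mathcal G_t=\sigma(X_1)\vee\sigma(X_s:s\le t)$, the process
\[
\beta_t:=\int_0^t\sigma(s)^{-1}\Bigl(dX_s-\tfrac{\sigma(s)^2}{a(1)-a(s)}(X_1-X_s)\,ds\Bigr)
\]
is a $\mathcal G$-Brownian motion independent of $X_1$. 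This follows by conditioning on $X_1=y$. Here Assumption~\ref{assumption:1} ensures $\sigma^{-1}$ is bounded, and the drift is integrable on $[0,T]$ because $a(1)-a(T)>0$.

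\textbf{Step 2 (projection onto the natural filtration).} Let $\mathcal F_t=\sigma(X_s:s\le t)$. We need $m(t,X_t;x_0,F_1)=\mathbb E[X_1\mid\mathcal F_t]$, and this is the main obstacle. The plan is to compute the conditional law of $X_1$ given $\mathcal F_t$ by Bayes' rule. Under $\mathbb P^{(x_0,y)}$, the law of the path on $[0,t]$ has density relative to the unpinned base law $\mathbb P^{x_0}$ (scaled Brownian motion started at $x_0$) equal to
\[
p_{\rm base}(y\mid X_t)/p_{\rm base}(y\mid x_0).
\]
This is the standard $h$-transform identity, valid on $\mathcal F_t$ for $t<1$ by the Markov property of Brownian motion with deterministic volatility. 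Consequently the posterior of $X_1$ given $\mathcal F_t$ is proportional to
\[
p_{\rm base}(y\mid X_t)\,F_1(dy)/p_{\rm base}(y\mid x_0).
\]
One caveat must be checked: the tilted mean in the statement weights by the bridge density $p(x_t;x_0,y)$. Writing $p(x_t;x_0,y)=p_{\rm base}(x_t\mid x_0)\,p_{\rm base}(y\mid x_t)/p_{\rm base}(y\mid x_0)$ and completing the square shows that these weights agree with the Gaussian expression in $m$ up to factors independent of $y$, which cancel in the ratio. Finite second moments of $F_1$ make $m$ well defined and finite. The posterior depends only on $X_t$, which also justifies the displayed equality of the two conditional expectations.

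\textbf{Step 3 (innovation Brownian motion).} Define $W_t=\beta_t+\int_0^t\sigma(s)^{-1}\frac{\sigma(s)^2}{a(1)-a(s)}\bigl(X_1-m(s,X_s)\bigr)ds$. This process is $\mathcal F$-adapted, since it equals $\int\sigma^{-1}\bigl(dX-\text{drift}(m)\,ds\bigr)$. It is continuous with quadratic variation $t$. It is an $\mathcal F$-martingale because
\[
\mathbb E\Bigl[\int_s^t(\cdots)(X_1-m(r,X_r))\,dr\,\Big|\,\mathcal F_s\Bigr]=0
\]
by the tower property and Step~2. Integrability on $[0,T]$ follows from $\mathbb E\|X_1\|^2<\infty$ and the bounded coefficient $\sigma(r)/(a(1)-a(r))$. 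By L\'evy's characterization, $W$ is an $\mathcal F$-Brownian motion, so $(X,W)$ is a weak solution of~\ref{SDE:x0F1} on each $[0,T]$ with $X_0=x_0$.

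\textbf{Step 4 (endpoint).} Each bridge has continuous paths on $[0,1]$ with $X_1=y$ almost surely. The set of paths in $C([0,1];\mathbb R^d)$ whose value at $1$ equals the limit is measurable. Mixing over $y$ therefore gives an almost-sure continuous extension with $X_1\sim F_1$. Since no uniqueness is claimed, nothing further is needed.
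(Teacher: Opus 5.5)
Your proof is correct, but it follows a different route from the paper's.

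The paper never enlarges the filtration. It integrates the same finite-horizon Bayes density $Z^{(x_0,y)}_T=p_{\rm base}(y\mid X_T)/p_{\rm base}(y\mid x_0)$ against $F_1$. This gives the space--time harmonic function $h$ with $\left.d\mathbb P^{(x_0,F_1)}/d\mathbb P^{(x_0)}\right|_{\mathcal F_T}=h(T,X_T)$. The paper then checks $h>0$ and $h\in C^{1,2}$ by differentiating under the integral, applies Girsanov, and reads off the drift $\sigma^2\nabla_x\log h=\sigma^2(m-x)/(a(1)-a(t))$.

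You instead start from the pinned-bridge SDE for each component. Mixing lifts it to a Brownian motion in $\sigma(X_1)\vee\mathcal F_t$, and you project onto the natural filtration, which is the innovation/filtering argument. L\'evy's characterization replaces Girsanov.

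What your route buys:
\begin{itemize}
\item You need no smoothness of $h$ and no harmonic equation.
\item You actually prove $m(t,X_t)=\mathbb E[X_1\mid\mathcal F_t]$, which the paper only offers as a heuristic.
\item The drift appears directly as the conditional expectation of the per-pair target $(X_1-X_t)/(a(1)-a(t))$. This is exactly the form in which $\mathbf f^\star$ enters Theorem~\ref{thm:forward_process}.
\end{itemize}

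What it costs:
\begin{itemize}
\item You take the single-bridge SDE as an input. The paper's Girsanov argument recovers it as the special case $F_1=\delta_y$.
\item You use $\mathbb E\|X_1\|<\infty$ for the martingale property of $W$. That is available here, or could be replaced by localization.
\item You obtain the local absolute continuity of the mixture with respect to the base law only implicitly, through Step~2. The paper's proof makes the $\mathcal F_T$-density explicit, and it relies on this to stress that nothing is asserted on $\mathcal F_1$.
\end{itemize}
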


Intuitively, we have Lemma~\ref{lemma:mixture of bridges} to rigorously show that the SDE described in Equation~\ref{SDE:x0F1} does indeed have the correct data-generating path measure.

\begin{proof}
    Denote $\mathbb{P}^{(x_0)}$ as a path measure of $X_t := x_0 + \int_{0}^{t}\sigma(s)dB_s$, so
    \begin{align*}
        \mathbb{P}^{(x_0, x_1)}(\cdot) = \mathbb{P}^{(x_0)}(\cdot \vert X_1=x_1).
    \end{align*}

    The distribution of $X_t$ under $\mathbb{P}^{(x_0)}$ is $N(x_0, a(t)I)$ for any $t \in [0,1]$, and the distribution of $X_t \vert X_s$ where $0 \le s\le t \le 1$ is $N(X_s, (a(t)-a(s))I)$.

    This means that, with $\lambda$ being the Lebesgue measure, for any $t \in [0,1)$,
    \begin{align}
        \frac{d\mathbb{P}^{(x_0)}(X_1=\cdot \vert X_t)}{d\mathbb{P}^{(x_0)}(X_1=\cdot)} (x_1)
        &=
        \frac{
        \frac{d\mathbb{P}^{(x_0)}(X_1=\cdot \vert X_t)}{d\lambda(\cdot)} (x_1)
        }
        {
        \frac{d\mathbb{P}^{(x_0)}(X_1=\cdot)}{d\lambda(\cdot)} (x_1)
        }\\
        &= 
        \frac{
        (2\pi (a(1)-a(t)))^{-\frac{d}{2}} e^{- \frac{\Vert x_1 - X_t\Vert_2^2}{2(a(1)-a(t))}}
        }
        {
        (2\pi a(1))^{-\frac{d}{2}} e^{- \frac{\Vert x_1 - x_0\Vert_2^2}{2a(1)}}
        }\\
        &= 
        \left(\frac{a(1)}{a(1)-a(t)}\right)^{\frac{d}{2}}
        e^{\frac{\Vert x_1 - x_0\Vert_2^2}{2a(1)}- \frac{\Vert x_1 - X_t\Vert_2^2}{2(a(1)-a(t))}}\label{eqn:change_of_measure_form}
    \end{align}
    almost everywhere $x_1$ and almost surely $X_t$. 

    Thus, we define, for any $x_1\in \mathbb{R}^d$, a random process
    \begin{align}
        Z^{(x_0,x_1)}_t := \left(\frac{a(1)}{a(1)-a(t)}\right)^{\frac{d}{2}}
        e^{\frac{\Vert x_1 - x_0\Vert_2^2}{2a(1)}- \frac{\Vert x_1 - X_t\Vert_2^2}{2(a(1)-a(t))}}.\label{eqn:process_equal_change_of_measure}
    \end{align}
    Crucially, Equation~\ref{eqn:process_equal_change_of_measure} has the same expression as Equation~\ref{eqn:change_of_measure_form}. We defined Equation~\ref{eqn:process_equal_change_of_measure} because Equation~\ref{eqn:change_of_measure_form} is technically not defined everywhere (although its undefined set has measure zero), which makes operations such as integration ill-defined without choosing a version. In contrast, Equation~\ref{eqn:process_equal_change_of_measure} is defined everywhere and provides a convenient version of this fixed-time Bayes density.

    Now, we want to show that this random process (which corresponds to the change-of-measure) is a martingale. For $0 \le s<t < 1$, we have that
    \begin{align*}
        \mathbb{E}^{\mathbb{P}^{(x_0)}}\left[
        Z^{(x_0,x_1)}_t \middle\vert \mathcal{F}_s
        \right]
        &=
        \frac{
            \mathbb{E}^{\mathbb{P}^{(x_0)}}\left[
            (2\pi (a(1)-a(t)))^{-\frac{d}{2}} e^{- \frac{\Vert x_1 - X_t\Vert_2^2}{2(a(1)-a(t))}} \middle\vert \mathcal{F}_s
            \right]
        }
        {
        (2\pi a(1))^{-\frac{d}{2}} e^{- \frac{\Vert x_1 - x_0\Vert_2^2}{2a(1)}}
        }\\
        &=
        \frac{
            \mathbb{E}_{Z \sim N(0,I)}\left[
            (2\pi (a(1)-a(t)))^{-\frac{d}{2}} e^{- \frac{\Vert x_1 - X_s+\sqrt{a(t)-a(s)}Z\Vert_2^2}{2(a(1)-a(t))}}
            \middle\vert X_s
            \right]
        }
        {
        (2\pi a(1))^{-\frac{d}{2}} e^{- \frac{\Vert x_1 - x_0\Vert_2^2}{2a(1)}}
        }\\
        &=
        \frac{
            (2\pi (a(1)-a(s)))^{-\frac{d}{2}} e^{- \frac{\Vert x_1 - X_s\Vert_2^2}{2(a(1)-a(s))}}
        }
        {
        (2\pi a(1))^{-\frac{d}{2}} e^{- \frac{\Vert x_1 - x_0\Vert_2^2}{2a(1)}}
        }\\
        &=
        Z^{(x_0,x_1)}_s.
    \end{align*}
    The cancellation of the auxiliary random variable $Z$ follows from the convolution of two Gaussian densities.
    
    The displayed conditional-expectation identity proves that $\left(Z^{(x_0,x_1)}_t\right)_{t \in [0,1)}$ is a martingale with respect to $\mathbb{P}^{(x_0)}$; its positivity ensures integrability once $\mathbb{E}[Z_t]=Z_0=1$ is established.

    We now work from the already defined bridge mixture rather than trying to
    extend a finite-horizon density to the pinned terminal sigma-field. Define
    the harmonic function~\citep{doob1984classical}, for all
    $t\in(0,1)$ and $x\in\mathbb R^d$, by
    \begin{align*}
        h(t, x) := \int_{y \in \mathbb{R}^d}
        \left(\frac{a(1)}{a(1)-a(t)}\right)^{\frac{d}{2}}
        e^{\frac{\Vert y - x_0\Vert_2^2}{2a(1)}- \frac{\Vert y - x\Vert_2^2}{2(a(1)-a(t))}}
        F_1(dy).
    \end{align*}
    This is the $F_1$-mixture of the martingales
    $Z_t^{(x_0,y)}$. For each compact subinterval of $(0,1)$, the negative
    quadratic term provides an integrable bound for the kernel and its required
    derivatives. Hence Tonelli's theorem and differentiation under the integral
    give $h>0$, $h\in C^{1,2}$, and the backward harmonic equation.

    Fix $T<1$. By the finite-horizon Bayes identity above, for every bounded
    $\mathcal F_T$-measurable random variable $G$,
    \begin{align*}
    \mathbb E^{\mathbb P^{(x_0,F_1)}}[G]
    &=\int_{\mathbb R^d}\mathbb E^{\mathbb P^{(x_0,y)}}[G]F_1(dy)\\
    &=\mathbb E^{\mathbb P^{(x_0)}}\!\left[
        G\int_{\mathbb R^d}Z_T^{(x_0,y)}F_1(dy)\right]\\
    &=\mathbb E^{\mathbb P^{(x_0)}}[G h(T,X_T)].
    \end{align*}
    Consequently,
    \begin{align*}
       \left.\frac{d\mathbb P^{(x_0,F_1)}}{d\mathbb P^{(x_0)}}
       \right|_{\mathcal F_T}=h(T,X_T),
    \end{align*}
    a statement made only for $T<1$. In particular, this calculation neither
    asserts nor requires absolute continuity on $\mathcal F_1$; for singular
    $F_1$ such terminal absolute continuity is false, and the density martingale
    need not be uniformly integrable.

    The finite-horizon $h$-transform/Girsanov theorem~\citep{girsanov1960transforming}
    now shows that, for every $T<1$,
    \begin{align*}
        B_t^{(x_0,F_1)}
        :=B_t-\int_0^t\sigma(s)\nabla_x\log h(s,X_s)\,ds,
        \qquad 0\le t\le T,
    \end{align*}
    is Brownian under $\mathbb P^{(x_0,F_1)}$. These definitions are consistent
    as $T$ varies and therefore define one Brownian motion on $[0,1)$.

    Direct differentiation of $h$ yields
    $\nabla_x\log h(t,x)=[m(t,x;x_0,F_1)-x]/[a(1)-a(t)]$.
    Thus, on every finite horizon,
    \begin{align*}
        dX_t
        &=\frac{\sigma(t)^2}{a(1)-a(t)}
        [m(t,X_t;x_0,F_1)-X_t]dt
        +\sigma(t)dB_t^{(x_0,F_1)}.
    \end{align*}
    Finally, the continuous extension and its terminal law $F_1$ hold by the
    definition of $\mathbb P^{(x_0,F_1)}$ as a mixture of continuously extended
    pinned bridge laws. This proves the claimed canonical weak-solution result.
\end{proof}

Next, we will prove the main theorem (Theorem~\ref{thm:forward_process}).

\subsection{Proof of Forward Process SDE and Objective}\label{sec:forward_process_proof}

Before proving the theorem, we first provide the regularity assumption we impose on the weight function $w: (0,1) \to (0, \infty)$.

\begin{assumption}
\label{assumption:weight}
    There exist constants $C_0,C_1,\beta_0,\beta_1>0$ such that
    \begin{align*}
        w(t) \le \min\!\left(C_0t^{\beta_0}, C_1(1-t)^{\beta_1}\right),
    \end{align*}
    and $w(\cdot)$ is continuous.
\end{assumption}
This assumption controls the contribution of the sample close to the two endpoints. Moreover, this can also be incorporated as importance sampling as well as other variance reduction methods.

Under this assumption, we can prove the following lemma.
\begin{lemma}
\label{lemma:2}
    Under the assumptions~\ref{assumption:1},\ref{assumption:weight}, we have that
    \begin{align*}
        \int_{t=0}^1 \frac{1}{a(t)(a(1)-a(t))} w(t)dt < \infty.
    \end{align*}
\end{lemma}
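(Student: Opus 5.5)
We need to show $\int_0^1 w(t)/(a(t)(a(1)-a(t)))\,dt<\infty$. The plan is to split the interval at $t=1/2$ and compare the singular denominator with powers of $t$ and $1-t$ using the two-sided bounds on $\sigma$ from Assumption~\ref{assumption:1}. Since $a(t)=\int_0^t\sigma(s)^2ds$ and $\underline\sigma\le\sigma\le\bar\sigma$, we get
\[
\underline\sigma^2 t\le a(t)\le \bar\sigma^2 t,
\qquad
\underline\sigma^2(1-t)\le a(1)-a(t)\le \bar\sigma^2(1-t).
\]
Hence
\[
\frac{1}{a(t)(a(1)-a(t))}\le \frac{1}{\underline\sigma^4\, t(1-t)}\qquad\text{for } t\in(0,1).
\]
The whole claim therefore reduces to showing $\int_0^1 w(t)/(t(1-t))\,dt<\infty$.

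On $(0,1/2]$ we have $1-t\ge 1/2$, and we use the first branch of Assumption~\ref{assumption:weight}, $w(t)\le C_0t^{\beta_0}$. This gives an integrand bounded by $2C_0t^{\beta_0-1}$, which is integrable near $0$ because $\beta_0>0$.

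On $[1/2,1)$ we have $t\ge1/2$, and we use $w(t)\le C_1(1-t)^{\beta_1}$. This gives a bound $2C_1(1-t)^{\beta_1-1}$, integrable near $1$ because $\beta_1>0$.

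Measurability is automatic: $w$ is continuous by assumption and $a$ is continuous. So the integrand is a nonnegative continuous function on $(0,1)$, and Tonelli-style monotone comparison applies without any issue.

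There is essentially no obstacle here. The only point to watch is that the lower bound $\underline\sigma>0$ is genuinely needed, since it is what makes $a(t)$ and $a(1)-a(t)$ vanish only linearly at the endpoints. The combined bound is
\[
\int_0^1\frac{w(t)\,dt}{a(t)(a(1)-a(t))}\le \frac{2}{\underline\sigma^4}\left(\frac{C_0 2^{-\beta_0}}{\beta_0}+\frac{C_1 2^{-\beta_1}}{\beta_1}\right)<\infty.
\]
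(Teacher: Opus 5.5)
Your proof is correct and takes essentially the same route as the paper: both use $\underline\sigma>0$ to get $a(t)\ge\underline\sigma^2 t$ and $a(1)-a(t)\ge\underline\sigma^2(1-t)$, then apply the appropriate branch of the weight bound near each endpoint, which reduces everything to integrability of $t^{\beta_0-1}$ and $(1-t)^{\beta_1-1}$. The only difference is cosmetic: you split at $t=1/2$ and get an explicit constant, whereas the paper splits off $[0,\epsilon]$ and $[1-\epsilon,1]$ and handles the middle piece by continuity.
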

% \begin{proof}
%     From continuity, it suffices to show that, for some $\epsilon > 0$,
%     $\int_{t=0}^{\epsilon} \frac{1}{a(t)(a(1)-a(t))}w(t)dt + \int_{t=1-\epsilon}^{1} \frac{1}{a(t)(a(1)-a(t))}w(t)dt < \infty$.

%     Since $\sigma(t) \in [\underline{\sigma}, \bar{\sigma}] \subseteq (0, \infty)$, we further have that the first term
%     \begin{align*}
%         \int_{t=0}^{\epsilon} \frac{1}{a(t)(a(1)-a(t))}w(t)dt
%         &\le 
%         \frac{\bar{\sigma}}{\underline{\sigma}(1-\epsilon)} \int_{t=0}^{\epsilon} \frac{w(t)}{t} dt\\
%         &\le 
%         \frac{a\bar{\sigma}}{\underline{\sigma}(1-\epsilon)} \int_{t=0}^{\epsilon} t^{b-1} dt\\
%         &\le 
%         \frac{a\bar{\sigma}}{\underline{\sigma}(1-\epsilon)} 
%         \frac{\epsilon^b}{b}\\
%         &< \infty.
%     \end{align*}
%     The same argument goes for the second integral.
% \end{proof}

\begin{proof}
Fix any $\epsilon\in(0,\tfrac12)$. From continuity, the integrand is bounded on
$[\epsilon,1-\epsilon]$, so it suffices to show that
\[
\int_{t=0}^{\epsilon}\frac{w(t)\,dt}{a(t)\bigl(a(1)-a(t)\bigr)}
+\int_{t=1-\epsilon}^{1}\frac{w(t)\,dt}{a(t)\bigl(a(1)-a(t)\bigr)}<\infty .
\]
Since $\sigma(t)\in[\underline{\sigma},\overline{\sigma}]\subseteq(0,\infty)$, integrating the
lower bound gives
\[
a(t)=\int_{0}^{t}\sigma(s)^{2}\,ds\;\ge\;\underline{\sigma}^{2}t,
\qquad
a(1)-a(t)=\int_{t}^{1}\sigma(s)^{2}\,ds\;\ge\;\underline{\sigma}^{2}(1-t).
\]
For $t\le\epsilon$ we have $1-t\ge 1-\epsilon$, hence
$a(1)-a(t)\ge\underline{\sigma}^{2}(1-\epsilon)$, and we further have that the first term
satisfies
\begin{align*}
\int_{t=0}^{\epsilon}\frac{w(t)\,dt}{a(t)\bigl(a(1)-a(t)\bigr)}
&\le\frac{1}{\underline{\sigma}^{4}(1-\epsilon)}\int_{t=0}^{\epsilon}\frac{w(t)}{t}\,dt\\
&\le\frac{C_0}{\underline{\sigma}^{4}(1-\epsilon)}\int_{t=0}^{\epsilon}t^{\,\beta_0-1}\,dt\\
&=\frac{C_0}{\underline{\sigma}^{4}(1-\epsilon)}\cdot\frac{\epsilon^{\,\beta_0}}{\beta_0}<\infty,
\end{align*}
where we used $w(t)\le C_0t^{\beta_0}$ from Assumption~\ref{assumption:weight} and $\beta_0>0$.

The same argument goes for the second integral, with the roles of the two endpoints
exchanged: for $t\ge 1-\epsilon$ we have $a(t)\ge\underline{\sigma}^{2}(1-\epsilon)$ and
$a(1)-a(t)\ge\underline{\sigma}^{2}(1-t)$, so with $w(t)\le C_1(1-t)^{\beta_1}$,
\[
\int_{t=1-\epsilon}^{1}\frac{w(t)\,dt}{a(t)\bigl(a(1)-a(t)\bigr)}
\le\frac{C_1}{\underline{\sigma}^{4}(1-\epsilon)}\int_{t=1-\epsilon}^{1}(1-t)^{\,\beta_1-1}\,dt
=\frac{C_1}{\underline{\sigma}^{4}(1-\epsilon)}\cdot\frac{\epsilon^{\,\beta_1}}{\beta_1}<\infty. \qedhere
\]
\end{proof}

\begin{lemma}[Weighted square-integrability of the score targets]
\label{lemma:score-target-l2}
Under Assumptions~\ref{assumption:1} and~\ref{assumption:weight}, both score
targets in Equations~\ref{eqn:loss_forward} and~\ref{eqn:reverse_time_loss}
belong to their weighted $L^2$ spaces.
\end{lemma}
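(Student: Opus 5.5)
We compute both score targets in closed form and reduce their weighted second moments to Lemma~\ref{lemma:2} together with the second-moment part of Assumption~\ref{assumption:1}.

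\textbf{Forward target.} Under $\mathbb P$, the transition density is $p_{\rm base}(x_1\mid x_t)=N(x_1;x_t,(a(1)-a(t))I)$. Hence
\[
\nabla_{x_t}\log p_{\rm base}(x_1\mid x_t)=\frac{x_1-x_t}{a(1)-a(t)}.
\]
For the sampling law in Eq.~\ref{eqn:loss_forward}, the bridge marginal computed earlier gives
\[
x_t=x_0+\tfrac{a(t)}{a(1)}(x_1-x_0)+\sqrt{v(t)}\,\xi,\qquad v(t)=\frac{a(t)(a(1)-a(t))}{a(1)},
\]
with $\xi\sim N(0,I)$ independent of $(x_0,x_1)$. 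Therefore
\[
x_1-x_t=\tfrac{a(1)-a(t)}{a(1)}(x_1-x_0)-\sqrt{v(t)}\,\xi .
\]
Dividing by $a(1)-a(t)$ gives the target
\[
\frac{x_1-x_0}{a(1)}-\sqrt{\frac{a(t)}{a(1)(a(1)-a(t))}}\;\xi.
\]
Using $\|u+v\|^2\le 2\|u\|^2+2\|v\|^2$, the conditional second moment given $t$ is at most
\[
\frac{2}{a(1)^2}\,\mathbb E_\Pi\|x_1-x_0\|^2+\frac{2d\,a(t)}{a(1)(a(1)-a(t))}.
\]
The first term is finite by Assumption~\ref{assumption:1}, since $\|x_1-x_0\|^2\le 2\|x_0\|^2+2\|x_1\|^2$.

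Now multiply by $w(t)$ and integrate over $t\in(0,1)$.
\begin{itemize}
\item The first term contributes at most a constant times $\int_0^1 w$. This is finite because $w$ is continuous and bounded by $\min(C_0t^{\beta_0},C_1(1-t)^{\beta_1})$, hence bounded on $(0,1)$.
\item For the second term, $a(t)\le a(1)$, so it is at most $\frac{2d}{a(1)-a(t)}\le \frac{2d\,a(1)}{a(t)(a(1)-a(t))}$. After weighting by $w$, its integral is finite by Lemma~\ref{lemma:2}.
\end{itemize}

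\textbf{Reverse target.} Here $p_{\rm base}(x_t\mid x_0)=N(x_t;x_0,a(t)I)$, so
\[
\nabla_{x_t}\log p_{\rm base}(x_t\mid x_0)=-\frac{x_t-x_0}{a(t)}=-\frac{x_1-x_0}{a(1)}-\sqrt{\frac{a(1)-a(t)}{a(1)a(t)}}\;\xi .
\]
The same splitting gives the bound
\[
\frac{2}{a(1)^2}\,\mathbb E\|x_1-x_0\|^2+\frac{2d(a(1)-a(t))}{a(1)a(t)}.
\]
The second term is at most $\frac{2d\,a(1)}{a(t)(a(1)-a(t))}$, since $(a(1)-a(t))^2\le a(1)^2$. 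It is therefore again controlled by Lemma~\ref{lemma:2}, and the first term is handled exactly as in the forward case.

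\textbf{Main obstacle.} The main care point is the endpoint singularity. The variance terms blow up like $1/(1-t)$ in the forward case and $1/t$ in the reverse case, and the proof must confirm that the decay of $w$ from Assumption~\ref{assumption:weight}, packaged in Lemma~\ref{lemma:2}, absorbs exactly this rate. Both endpoints are covered because Lemma~\ref{lemma:2} uses the product $a(t)(a(1)-a(t))$ in its denominator.

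A secondary point is measurability: the explicit Gaussian representation of $x_t$ is legitimate via Tonelli's theorem, because all integrands above are nonnegative.
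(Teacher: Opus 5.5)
Your proposal is correct and follows essentially the same route as the paper. You compute both targets in closed form from the Gaussian bridge representation of $x_t$, note that the endpoint term is finite by Assumption~\ref{assumption:1}, and dominate the variance terms by a multiple of $1/\bigl(a(t)(a(1)-a(t))\bigr)$ so that Lemma~\ref{lemma:2} applies, and the only differences are cosmetic: you use $\|u+v\|^2\le 2\|u\|^2+2\|v\|^2$ where the paper computes the second moment exactly via independence and centering of $Z$, and you explicitly justify $\int_0^1 w<\infty$ from the boundedness of $w$, which the paper simply asserts.
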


\begin{proof}
Write $A=a(1)$ and $\Delta=X_1-X_0$. The bridge sampling formula
\begin{equation}\label{eqn:cond_dist_sample}
X_t=\left(1-\frac{a(t)}A\right)X_0+\frac{a(t)}A X_1
    +\sqrt{\frac{a(t)(A-a(t))}{A}}\,Z,
\end{equation}
holds with $Z\sim N(0,I_d)$ independent of $(X_0,X_1)$. Consequently,
\begin{align*}
\Phi_f
&:=\nabla_{X_t}\log p_{\rm base}(X_1\mid X_t)
=\frac{\Delta}{A}
 -\sqrt{\frac{a(t)}{A(A-a(t))}}\,Z,\\
\Phi_r
&:=\nabla_{X_t}\log p_{\rm base}(X_t\mid X_0)
=-\frac{\Delta}{A}
 -\sqrt{\frac{A-a(t)}{Aa(t)}}\,Z.
\end{align*}
Independence and centering of $Z$ give
\begin{align*}
\mathbb E\|\Phi_f\|_2^2
&=\frac{\mathbb E\|\Delta\|_2^2}{A^2}
  +\frac{d\,a(t)}{A(A-a(t))},\\
\mathbb E\|\Phi_r\|_2^2
&=\frac{\mathbb E\|\Delta\|_2^2}{A^2}
  +\frac{d(A-a(t))}{Aa(t)}.
\end{align*}
The endpoint term is finite by Assumption~\ref{assumption:1}. Moreover,
\[
\frac{a(t)}{A-a(t)}\le
\frac{A^2}{a(t)(A-a(t))},\qquad
\frac{A-a(t)}{a(t)}\le
\frac{A^2}{a(t)(A-a(t))},
\]
and $\int_0^1w(t)\,dt<\infty$. Lemma~\ref{lemma:2} therefore makes the
weighted time integrals of both displayed second moments finite.
\end{proof}

\paragraph{Proof of Theorem \ref{thm:forward_process} (score-matching objective).} 

\begin{proof}
Let $\Pi_0$ be the first marginal of $\Pi$ and fix a regular conditional
kernel $x_0\mapsto\Pi_1^{x_0}$ for $X_1$ given $X_0=x_0$. The spaces are
standard Borel, so such a kernel and the jointly Borel versions below exist.
Under the sampling law in Equation~\ref{eqn:loss_forward}, set
\[
\mathbf f^\star(X_t,t;X_0)
:=\mathbb E[\Phi_f\mid X_t,t,X_0],
\]
choosing the $h$-transform version
\begin{equation}\label{eqn:canonical-forward-drift}
\mathbf f^\star(x,t;x_0)
=\frac{m(t,x;x_0,\Pi_1^{x_0})-x}{a(1)-a(t)}
\end{equation}
for $0<t<1$ and $\Pi_0$-almost every $x_0$. Lemma~\ref{lemma:score-target-l2}
justifies the $L^2$ conditional expectation. The Pythagorean identity gives,
for every square-integrable candidate $f$,
\[
\overrightarrow{\mathcal L}(f)
=\overrightarrow{\mathcal L}(\mathbf f^\star)
 +\mathbb E\!\left[w(t)\|f(X_t,t;X_0)
                   -\mathbf f^\star(X_t,t;X_0)\|_2^2\right].
\]
Thus $\mathbf f^\star$ is the unrestricted minimizer, and realizability implies
that every exact parameterized minimizer agrees with it under the weighted
training-input law. This conclusion concerns an $L^2$ equivalence class; the
SDE below uses the fixed version~\eqref{eqn:canonical-forward-drift}.

For $\Pi_0$-almost every $x_0$, Lemma~\ref{lemma:mixture of bridges} applied
with $F_1=\Pi_1^{x_0}$ makes the coordinate process under
$\mathbb P^{(x_0,\Pi_1^{x_0})}$ a weak solution of
Equation~\ref{eqn:data_gen} on every $[0,T]$, $T<1$. Mixing these laws over
$\Pi_0(dx_0)$ gives
\[
\int\mathbb P^{(x_0,\Pi_1^{x_0})}\Pi_0(dx_0)
=\int\mathbb P^{(x_0,x_1)}\Pi(dx_0,dx_1)=\mathbb Q.
\]
Conditionally on each initial value, the innovation in the finite-horizon
$h$-transform is a standard Brownian motion with the same Wiener law. Hence,
after mixing, it is independent of $\overrightarrow X_0$. Finally, continuous
extension at $t=1$ follows from the mixture-of-pinned-bridges construction,
and the extended path law is $\mathbb Q$.
\end{proof}

\begin{remark}
The theorem is a population-level oracle statement under realizability.
Practical training may incur estimation, optimization, and approximation error.
Even at the population optimum, an arbitrary modification of a minimizer on a
training-null set is not automatically an interchangeable SDE coefficient; the
exact path-law conclusion uses the canonical version displayed above.
\end{remark}

\subsection{Proof of Reverse Process SDE and Objective}\label{sec:reverse_process_proof}

The proof for this theorem (Theorem~\ref{thm:reverse_process}) mirrors the proof of Theorem~\ref{thm:forward_process} but with the flipped time notations (and appropriately mirrored coefficients). %If the loss term has $\pbase(x_0 \vert x_t)$ instead of $\pbase(x_t \vert x_0)$, then the proof will be the same. However, $\pbase$ is defined from $\mathbb{P}$, so $\pbase(x_0 \vert x_t)$ is data-dependent and can be ill-defined unlike $\pbase(x_t \vert x_0)$. Still, we can show that the change will not alter the objective, and the proof used in the first theorem is applicable.

% \paragraph{Proof of Theorem \ref{thm:reverse_process}.}
% \begin{proof}

%     Similar to the proof of Theorem~\ref{thm:forward_process} and with the same coupling, we have that, almost surely
%     \begin{align*}
%         \mathbf{r}_{\theta}(X_t,t;X_1)
%         &= \mathbb{E}\left[
%         \nabla_{x_t} \log \pbase(X_t \vert X_0) \middle\vert X_1, X_t
%         \right]\\
%         &=
%         \mathbb{E}\left[
%         \nabla_{x_t} \log e^{- \frac{\Vert X_t - X_0 \Vert_2^2}{2a(t)}} \middle\vert X_1, X_t
%         \right]\\
%         &=
%         \frac{\mathbb{E}[X_0 \vert X_1, X_t] - X_t}{a(t)}.
%     \end{align*}
%     Note that this mirror the drift of $\mathbf{f}_{\theta}(X_t,t;X_0) = \frac{\mathbb{E}[X_1\vert X_0, X_t] - X_t}{a(1)-a(t)}$ for the forward theorem. The rest of the proof proceeds in the same manner, because the conditional distribution of $X_t$ given $(X_0, X_1) = (x_0,x_1)$ and the volatility schedule $\sigma$ is the same as the distribution of $X_{1-t}$ given $(X'_0, X'_1) = (x_1, x_0)$ with the counterfactual volatility schedule of $\sigma'(t) = \sigma(1-t)$. 
% \end{proof}

% \textcolor{red}{check this}

\begin{proof}[Proof of Theorem~\ref{thm:reverse_process}]
Consider the endpoint-swapped coupling
\[
    (X'_0,X'_1) \coloneqq (X_1,X_0)
\]
and define the reversed volatility schedule and weight by
\[
    \sigma'(u) \coloneqq \sigma(1-u),
    \qquad
    w'(u) \coloneqq w(1-u).
\]
The corresponding integrated variance is
\begin{align*}
    a'(u)
    &\coloneqq
    \int_0^u \sigma'(s)^2\,ds \\
    &=
    \int_0^u \sigma(1-s)^2\,ds \\
    &=
    a(1)-a(1-u).
\end{align*}
In particular,
\[
    a'(1)-a'(u)=a(1-u).
\]

Applying Theorem~\ref{thm:forward_process} to the swapped coupling and the volatility
schedule $\sigma'$ gives a forward process
$(\overleftarrow{X}_u)_{u\in[0,1)}$ starting from
\[
    \overleftarrow{X}_0\sim p_{\mathrm{data}}(X_1).
\]
Its canonical population drift is the fixed Borel version
\begin{align*}
    \mathbf r^\star(x,1-u;x_1)
    &=
    \mathbb{E}
    \left[
        \nabla_x
        \log p_{\mathrm{base}}(x\mid X_0)
        \,\middle|\,
        X_1=x_1,\ X_{1-u}=x
    \right] \\
    &=
    \frac{
        \mathbb{E}[X_0\mid X_1=x_1,\ X_{1-u}=x]-x
    }{
        a(1-u)
    }.
\end{align*}
Therefore, the resulting SDE is
\[
    d\overleftarrow{X}_u
    =
    \sigma(1-u)^2
    \mathbf r^\star(
        \overleftarrow{X}_u,
        1-u;
        \overleftarrow{X}_0
    )\,du
    +
    \sigma(1-u)\,dB_u,
\]
which is exactly Equation~\ref{eqn:reverse}.

Lemma~\ref{lemma:score-target-l2} first establishes that the target in
Equation~\ref{eqn:reverse_time_loss} belongs to the relevant weighted $L^2$
space. Changing variables according to $t=1-u$ transforms the forward
objective for the swapped process into Equation~\ref{eqn:reverse_time_loss},
and $w'$ satisfies Assumption~\ref{assumption:weight}, with the constants and
endpoint exponents interchanged. The same Pythagorean identity used in the
forward proof therefore shows that $\mathbf r^\star$ is the unrestricted
minimizer and that every exact realizable parameterized minimizer agrees with
it under the weighted training-input law. The reverse SDE nevertheless uses
the fixed displayed version, not an arbitrary representative of that class.

For completeness, the reversal of the bridge law follows directly from its
Gaussian mean and two-time covariance. For $s\le t$, the conditional covariance
of the original bridge is
\[
\operatorname{Cov}(X_s,X_t\mid X_0,X_1)
=\frac{a(s)(a(1)-a(t))}{a(1)}I_d.
\]
Together with $a'(u)=a(1)-a(1-u)$, this shows that the $\sigma'$-bridge from
$(x_1,x_0)$ is exactly the image of the $\sigma$-bridge from $(x_0,x_1)$ under
$\omega(t)\mapsto\omega(1-t)$. Theorem~\ref{thm:forward_process} applied to the
swapped coupling thus gives, after the almost-sure continuous extension at
$u=1$,
\[
    \operatorname{Law}
    \bigl(
        (\overleftarrow{X}_{1-t})_{t\in[0,1]}
    \bigr)
    =
    \mathbb{Q},
\]
as claimed. Its innovation Brownian motion is independent of the reverse
initial endpoint by the independence statement in the forward construction.
\end{proof}

Alternatively, we could prove this via Anderson's time-reversal~\citep{anderson1982reverse}.

\newpage\clearpage

\section{Marginal SDEs}\label{sec:marginal_sdes}

\begin{lemma}[Endpoint projection]\label{lem:projection}
Let $\Pi$ be the endpoint law and let
\[
K_t((x_0,x_1),dx)
:=p_{\rm base}(x\mid x_0,x_1)\,dx,
\qquad 0<t<1,
\]
be the Gaussian bridge kernel. With
$Z_w:=\int_0^1w(t)\,dt$, define the probability measure
\begin{equation}\label{eqn:weighted-sampling-measure}
\nu(dt,dx_0,dx_1,dx)
:=Z_w^{-1}w(t)\,dt\,\Pi(dx_0,dx_1)
  K_t((x_0,x_1),dx).
\end{equation}
Suppose that the Borel target $\Phi(x,t,x_0,x_1)$ belongs to $L^2(\nu)$.
Let $\kappa_{t,x}(dx_0,dx_1)$ be a regular conditional endpoint kernel under
$\nu$ given $(t,X_t)=(t,x)$; let $\kappa^0_{t,x}$ and $\kappa^1_{t,x}$ be its
marginals, and fix conditional kernels
$\kappa^{1\mid0}_{t,x,x_0}(dx_1)$ and
$\kappa^{0\mid1}_{t,x,x_1}(dx_0)$. Then jointly Borel versions of the three
unrestricted square-loss minimizers are
\begin{align*}
\phi_\varnothing^\star(x,t)
&=\int\Phi(x,t,x_0,x_1)\,\kappa_{t,x}(dx_0,dx_1),\\
\phi_0^\star(x,t;x_0)
&=\int\Phi(x,t,x_0,x_1)\,
       \kappa^{1\mid0}_{t,x,x_0}(dx_1),\\
\phi_1^\star(x,t;x_1)
&=\int\Phi(x,t,x_0,x_1)\,
       \kappa^{0\mid1}_{t,x,x_1}(dx_0).
\end{align*}
They satisfy, for the $(t,X_t)$-marginal of $\nu$-almost every $(t,x)$,
\begin{equation}\label{eq:projection}
\phi_\varnothing^\star(x,t)
=\int\phi_0^\star(x,t;x_0)\,\kappa^0_{t,x}(dx_0)
=\int\phi_1^\star(x,t;x_1)\,\kappa^1_{t,x}(dx_1).
\end{equation}
Under realizability, every exact parameterized population minimizer agrees
with its corresponding displayed oracle under the relevant marginal of $\nu$.
The displayed Borel versions, rather than arbitrary modifications on
$\nu$-null sets, are used below as SDE coefficients.
\end{lemma}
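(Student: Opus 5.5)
The plan is to treat all three oracles as $L^2(\nu)$ orthogonal projections of $\Phi$ onto nested closed subspaces, and then use the tower property for the identity~\eqref{eq:projection}. The three candidate input sets are $(t,X_t)$, $(t,X_t,X_0)$ and $(t,X_t,X_1)$. Let $\mathcal G_\varnothing=\sigma(t,X_t)$, $\mathcal G_0=\sigma(t,X_t,X_0)$ and $\mathcal G_1=\sigma(t,X_t,X_1)$ under $\nu$. Then $\mathcal G_\varnothing\subseteq\mathcal G_0$ and $\mathcal G_\varnothing\subseteq\mathcal G_1$.

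Since $\Phi\in L^2(\nu)$, the conditional expectation $\mathbb E_\nu[\Phi\mid\mathcal G_\bullet]$ exists and is the orthogonal projection onto $L^2(\mathcal G_\bullet)$. The Pythagorean identity used in the proof of Theorem~\ref{thm:forward_process} gives, for any square-integrable $\mathcal G_\bullet$-measurable candidate $f$,
\[
\mathbb E_\nu\|f-\Phi\|^2=\mathbb E_\nu\|\mathbb E[\Phi\mid\mathcal G_\bullet]-\Phi\|^2+\mathbb E_\nu\|f-\mathbb E[\Phi\mid\mathcal G_\bullet]\|^2 .
\]
The weighted loss equals $Z_w$ times this expression, so the factor $w(t)$ is absorbed into $\nu$. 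Hence the unrestricted minimizers are exactly these conditional expectations, unique up to the relevant marginal of $\nu$. Under realizability, any exact parameterized minimizer attains the same minimum, so the second term vanishes. That gives the stated a.e. agreement.

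Next comes the identification of jointly Borel versions. All spaces are standard Borel, so the disintegration theorem supplies the regular kernels $\kappa_{t,x}$, $\kappa^{1\mid0}_{t,x,x_0}$ and $\kappa^{0\mid1}_{t,x,x_1}$. It also gives jointly measurable dependence on the parameters.

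The integral $\int\Phi\,d\kappa_{t,x}$ is a version of $\mathbb E[\Phi\mid t,X_t]$ by the defining property of regular conditional distributions. The same holds for the other two, because $\kappa^{1\mid0}_{t,x,x_0}$ is a regular version of the law of $X_1$ given $(t,X_t,X_0)$. To see this, disintegrate $\nu$ as $\text{law}(t,X_t)\otimes\kappa_{t,x}$ and then $\kappa_{t,x}=\kappa^0_{t,x}\otimes\kappa^{1\mid0}_{t,x,x_0}$.

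One detail must be handled here. $\Phi$ is only in $L^1$ of the kernel for almost every parameter. So I would define each integral as $0$ on the Borel null set where $\int\|\Phi\|\,d\kappa=\infty$, which preserves joint Borel measurability via Fubini for kernels.

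For~\eqref{eq:projection}, I would use the iterated disintegration $\kappa_{t,x}(dx_0,dx_1)=\kappa^0_{t,x}(dx_0)\kappa^{1\mid0}_{t,x,x_0}(dx_1)$. This holds for $\nu$-a.e. $(t,x)$ by uniqueness of disintegration. Applying Fubini to $\Phi$, which is integrable for a.e. $(t,x)$, gives the first equality; the second is symmetric. Equivalently, this is the tower property $\mathbb E[\mathbb E[\Phi\mid\mathcal G_0]\mid\mathcal G_\varnothing]=\mathbb E[\Phi\mid\mathcal G_\varnothing]$ written with kernels.

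The main obstacle is the measure-theoretic bookkeeping. The compatibility of the separately fixed kernels $\kappa^{1\mid0}$ and $\kappa^0$ with $\kappa$ holds only off a null set, and the pointwise identity must therefore be stated only almost everywhere. I would address this by invoking a.e. uniqueness of disintegrations rather than attempting pointwise statements. I would also emphasize that the chosen versions are used as SDE coefficients, so no null-set modification is implied.
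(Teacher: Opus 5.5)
Your proposal is correct and follows essentially the same route as the paper. The $L^2$ projection theorem identifies each unrestricted minimizer with a conditional expectation, and the kernel integrals are versions of these. The averaging identity is the tower property, which you implement as an iterated disintegration plus Fubini. Realizability then follows from the Pythagorean identity, exactly as in the paper's short proof.

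Your extra bookkeeping fills in details the paper leaves implicit. This includes setting the integrals to zero on the Borel null set where $\Phi$ is not kernel-integrable. It also includes invoking a.e.\ uniqueness of disintegrations to reconcile the separately fixed kernels with $\kappa_{t,x}$.
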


\begin{proof}
All state spaces are standard Borel, so the stated regular conditional kernels
exist. The $L^2$ projection theorem identifies each minimizer with the
conditional expectation of $\Phi$ given its network inputs. The three kernel
integrals are versions of those conditional expectations. Equation
\eqref{eq:projection} is the tower property, first conditioning on
$(t,X_t,X_0)$ and then on $(t,X_t)$, or symmetrically through
$(t,X_t,X_1)$. The final realizability statement follows from the Pythagorean
identity for square loss.
\end{proof}

\begin{lemma}[Superposition lift to a weak SDE]
\label{lem:superposition-lift}
Let $(\mu_t)_{0\le t\le1}$ be a narrowly continuous probability-valued flow,
let $b:[0,1]\times\mathbb R^d\to\mathbb R^d$ be Borel, and let
$\gamma:[0,1]\to(0,\infty)$ be continuous and bounded. Suppose
\[
\int_0^1\!\int_{\mathbb R^d}\|b(t,x)\|_2\,\mu_t(dx)\,dt<\infty
\]
and, for every $\varphi\in C_c^\infty(\mathbb R^d)$ and $t\in[0,1]$,
\begin{align}
\int\varphi\,d\mu_t
&=\int\varphi\,d\mu_0
 +\int_0^t\!\int
 \left[b(s,x)\cdot\nabla\varphi(x)
 +\frac{\gamma(s)^2}{2}\Delta\varphi(x)\right]
 \mu_s(dx)\,ds.
\label{eq:generic-weak-fpe}
\end{align}
Then there is a probability law on $C([0,1];\mathbb R^d)$ whose coordinate
process has marginals $(\mu_t)$ and, together with a Brownian motion $B$, is a
weak solution of
\[
dY_t=b(t,Y_t)\,dt+\gamma(t)\,dB_t.
\]
The Brownian motion may be taken relative to the solution filtration and is
therefore independent of $Y_0$. This is an existence statement only; neither
uniqueness nor a Markov property of the selected law is asserted.
\end{lemma}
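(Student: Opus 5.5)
This is Figalli's (bounded coefficients) / Trevisan's (integrable drift) superposition principle, specialized to a space-independent nondegenerate diffusion. I would reduce to that theorem, or reproduce its proof via mollification and tightness.

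\textbf{Step 1: reduction to Trevisan's setting.} Trevisan's superposition principle (2016) states the following. Suppose a narrowly continuous family $(\mu_t)$ solves the weak Fokker--Planck equation for the generator
\[
L_t = b\cdot\nabla + \tfrac12 a:\nabla^2 ,
\]
with
\[
\int_0^1\!\int (\|b\|+\|a\|)\,d\mu_t\,dt<\infty .
\]
Then there is a martingale-problem solution on $C([0,1];\mathbb R^d)$ with one-time marginals $\mu_t$. Here the diffusion matrix is $a(t)=\gamma(t)^2 I_d$, which is bounded, so its integrability is automatic. Drift integrability is exactly the hypothesis of the lemma. Identity \eqref{eq:generic-weak-fpe} tested on $C_c^\infty$ is precisely the required weak formulation.

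\textbf{Step 2: self-contained route, if one is wanted.}
\begin{itemize}
\item Mollify: set $\mu^\varepsilon_t=\mu_t*\rho_\varepsilon$ and
\[
b^\varepsilon=\frac{(b\mu_t)*\rho_\varepsilon}{\mu^\varepsilon_t}.
\]
Then $(\mu^\varepsilon_t)$ solves the Fokker--Planck equation with drift $b^\varepsilon$ and diffusion $\gamma^2 I$. The drift $b^\varepsilon$ is locally Lipschitz in $x$, and Jensen's inequality gives
\[
\int\|b^\varepsilon\|\,d\mu^\varepsilon_t\le\int\|b\|\,d\mu_t .
\]
\item Solve the regularized SDE strongly, starting from $\mu^\varepsilon_0$. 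Uniqueness for the Fokker--Planck equation with regular coefficients identifies its marginals with $\mu^\varepsilon_t$.
\item Prove tightness of the path laws $P^\varepsilon$ by Aldous/Kolmogorov-type estimates. The drift increment is controlled in expectation by
\[
\int_s^t\!\int\|b^\varepsilon\|\,d\mu^\varepsilon_r\,dr,
\]
which is uniformly integrable in time. The martingale part has bounded $\gamma$, so Kolmogorov's criterion applies to it.
\item Pass to the limit in the martingale problem. For every $\varphi\in C_c^\infty$, the process
\[
\varphi(Y_t)-\int_0^t L_s\varphi(Y_s)\,ds
\]
is a martingale. This step needs approximating $b$ by bounded continuous fields in $L^1(\mu_t\,dt)$, together with the $L^1$ convergence $b^\varepsilon\mu^\varepsilon\to b\mu$.
\end{itemize}

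\textbf{Step 3: from the martingale problem to a weak solution.} Since $\gamma>0$ is deterministic, the standard Stroock--Varadhan/Ikeda--Watanabe representation gives a Brownian motion directly, with no extension of the probability space. Define
\[
B_t=\int_0^t\gamma(s)^{-1}\Bigl(dY_s-b(s,Y_s)\,ds\Bigr).
\]
This is well defined because $\int_0^1\|b(s,Y_s)\|\,ds<\infty$ almost surely, by Fubini and the integrability hypothesis. $B$ is a continuous local martingale with quadratic variation $tI$, so Lévy's characterization makes it Brownian with respect to the solution filtration. Brownian increments are then independent of $\mathcal F_0\ni Y_0$, which gives the stated independence of $B$ and $Y_0$.

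\textbf{Main obstacle.} The hard step is the limit passage in Step 2, since $b$ is only $L^1$ against $\mu_t\,dt$ and not continuous. I would first try Trevisan's argument: approximate $b$ in $L^1(\mu_t\,dt)$ by continuous bounded fields, and use the uniform bound $\int\|b^\varepsilon-b\|\,d\mu\to0$ to control the error uniformly in $\varepsilon$. In practice, I would cite the theorem rather than redo this step.
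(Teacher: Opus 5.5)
Your proposal is correct and matches the paper's proof. Both apply Trevisan's superposition principle (Theorem~2.5), with the drift hypothesis and boundedness of $\gamma^2 I_d$ supplying the integrability condition. Both then define $B_t=\int_0^t\gamma(s)^{-1}\,dM_s$ with $M_t=Y_t-Y_0-\int_0^t b(s,Y_s)\,ds$ and conclude by L\'evy's characterization in the solution filtration. Your Fubini check that $\int_0^1\|b(s,Y_s)\|_2\,ds<\infty$ a.s.\ makes explicit a point the paper leaves implicit.

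The mollification route in your Step 2 is not needed. As sketched, it would need extra care: global existence of the regularized SDE and uniqueness for the regularized Fokker--Planck equation are not automatic when $b^\varepsilon$ is only locally Lipschitz. Since you rely on the citation, this does not affect the argument.
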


\begin{proof}
The coefficient integrability above, together with boundedness of the
covariance $\gamma(t)^2I_d$, permits application of the superposition
principle~\citep[Theorem~2.5]{trevisan2016wellposedness} to
Equation~\eqref{eq:generic-weak-fpe}. It yields a martingale-problem solution
on $C([0,1];\mathbb R^d)$ having exactly the prescribed marginals. Under this
law, localization of the coordinate martingale problem shows that
\[
M_t:=Y_t-Y_0-\int_0^t b(s,Y_s)\,ds
\]
is a continuous local martingale with
$[M^i,M^j]_t=\delta_{ij}\int_0^t\gamma(s)^2ds$. Since $\gamma$ is positive,
$B_t:=\int_0^t\gamma(s)^{-1}\,dM_s$ is well defined and has quadratic
covariation $[B^i,B^j]_t=\delta_{ij}t$. The multidimensional L\'evy
characterization makes $B$ Brownian relative to the solution filtration.
No uniqueness conclusion is part of the superposition principle.
\end{proof}

\begin{remark}\label{remark:projected-drifts}
For the forward target
$\Phi_f=\nabla_x\log p_{\rm base}(X_1\mid x)$, write the canonical conditional
and unconditional versions as $\mathbf f^\star(x,t;x_0)$ and
$\bar{\mathbf f}(x,t)$, respectively. For the reverse target
$\Phi_r=\nabla_x\log p_{\rm base}(x\mid X_0)$, write them as
$\mathbf r^\star(x,t;x_1)$ and $\bar{\mathbf r}(x,t)$. Thus
\begin{align*}
\bar{\mathbf f}(x,t)
&=\int\mathbf f^\star(x,t;x_0)\,\kappa^0_{t,x}(dx_0),\\
\bar{\mathbf r}(x,t)
&=\int\mathbf r^\star(x,t;x_1)\,\kappa^1_{t,x}(dx_1)
\end{align*}
for the weighted $(t,X_t)$-sampling law almost everywhere. Lemma
\ref{lemma:score-target-l2} verifies the required $L^2(\nu)$ hypothesis for
both targets. Since $w(t)>0$ on $(0,1)$, this weighted law and
$dt\,q_t(dx)$ have the same null sets, so the projection identities also hold
$dt\,q_t(dx)$-almost everywhere.
\end{remark}

\begin{lemma}[Integrability of the projected coefficients]
\label{lem:projected-coefficient-integrability}
Let $q_t:=\mathbb Q\circ X_t^{-1}$, set $A:=a(1)$ and
$D:=\mathbb E_\Pi\|X_1-X_0\|_2^2$, and define, for $0<t,u<1$,
\[
b^\rightarrow(t,x):=\sigma(t)^2\bar{\mathbf f}(x,t),\qquad
b^\leftarrow(u,x):=\sigma(1-u)^2\bar{\mathbf r}(x,1-u).
\]
Assign arbitrary Borel values (for example, zero) to these coefficients at
the two time endpoints. Then $(q_t)_{0\le t\le1}$ is narrowly continuous,
$q_0=\Pi_0$, $q_1=\Pi_1$, and
\begin{align}
\int_0^1\!\int_{\mathbb R^d}
 \|b^\rightarrow(t,x)\|_2\,q_t(dx)\,dt&<\infty,
 \label{eq:forward-projected-l1}\\
\int_0^1\!\int_{\mathbb R^d}
 \|b^\leftarrow(u,x)\|_2\,q_{1-u}(dx)\,du&<\infty.
 \label{eq:reverse-projected-l1}
\end{align}
Together with boundedness of $\sigma^2I_d$, these are precisely the
coefficient-integrability conditions needed below for the superposition
principle.
\end{lemma}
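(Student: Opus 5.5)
Three things need proving: narrow continuity of $(q_t)$ with the correct endpoint marginals, and the two $L^1$ bounds \eqref{eq:forward-projected-l1}--\eqref{eq:reverse-projected-l1}. For the first, we use the bridge sampling formula \eqref{eqn:cond_dist_sample}. Under $\mathbb Q$, $X_t=(1-a(t)/A)X_0+(a(t)/A)X_1+\sqrt{a(t)(A-a(t))/A}\,Z$ with $Z$ independent of $(X_0,X_1)\sim\Pi$. Since $a$ is continuous with $a(0)=0$ and $a(1)=A$, the map $t\mapsto X_t$ (realized on one probability space) is almost surely continuous in $t$, including at $t=0,1$, where the noise coefficient vanishes. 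For bounded continuous $\varphi$, dominated convergence then gives continuity of $t\mapsto\int\varphi\,dq_t$. Evaluating at $t=0$ and $t=1$ gives $q_0=\Pi_0$ and $q_1=\Pi_1$.

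For the integrability bounds, write $\|\bar{\mathbf f}(x,t)\|_2$ using Remark~\ref{remark:projected-drifts}. By the projection Lemma~\ref{lem:projection}, $\bar{\mathbf f}(X_t,t)=\mathbb E[\Phi_f\mid t,X_t]$ for $dt\,q_t$-a.e.\ $(t,x)$. Conditional Jensen then gives
\[
\int\|\bar{\mathbf f}(x,t)\|_2\,q_t(dx)\le\mathbb E\|\Phi_f\|_2\le\bigl(\mathbb E\|\Phi_f\|_2^2\bigr)^{1/2}
\]
for a.e.\ $t$. The explicit second moment in the proof of Lemma~\ref{lemma:score-target-l2} is $D/A^2+d\,a(t)/(A(A-a(t)))$, so
\[
\int\|b^\rightarrow(t,x)\|_2\,q_t(dx)\le\bar\sigma^2\Bigl(\tfrac{\sqrt D}{A}+\sqrt{\tfrac{d\,a(t)}{A(A-a(t))}}\Bigr).
\]
The value assigned at the endpoint times is irrelevant, since those form a Lebesgue-null set. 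Here $D<\infty$ follows from Assumption~\ref{assumption:1} via $\|x_1-x_0\|^2\le2\|x_0\|^2+2\|x_1\|^2$.

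The main point is that we cannot use the weighted bound of Lemma~\ref{lemma:2}, because the weight $w$ is absent here. Instead we integrate unweighted in $t$ and use $A-a(t)\ge\underline\sigma^2(1-t)$. This gives $\sqrt{a(t)/(A-a(t))}\le C(1-t)^{-1/2}$, which is integrable on $(0,1)$. Hence \eqref{eq:forward-projected-l1} holds. (We use the $L^1$-in-space bound rather than $L^2$ precisely so that the $(1-t)^{-1}$ singularity becomes the integrable $(1-t)^{-1/2}$.)

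The reverse bound is symmetric. After substituting $u\mapsto 1-u=t$, we get
\[
\int\|b^\leftarrow(u,x)\|_2\,q_{1-u}(dx)\le\bar\sigma^2\bigl(\mathbb E\|\Phi_r\|_2^2\bigr)^{1/2}
\]
at time $t$. By Lemma~\ref{lemma:score-target-l2}, $\mathbb E\|\Phi_r\|_2^2=D/A^2+d(A-a(t))/(A a(t))$. Using $a(t)\ge\underline\sigma^2 t$, the square-root term is $O(t^{-1/2})$, which is integrable, and \eqref{eq:reverse-projected-l1} follows.

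The one delicate step is justifying that the a.e.\ identification of $\bar{\mathbf f}$ with the conditional expectation holds for Lebesgue-a.e.\ $t$ under the unweighted law $q_t$. This follows from the null-set equivalence noted at the end of Remark~\ref{remark:projected-drifts}: since $w>0$, the weighted and unweighted laws share null sets. Fubini then gives the bound for a.e.\ $t$.
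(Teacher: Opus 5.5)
Your proposal is correct and follows essentially the same route as the paper. You use the bridge-sampling coupling for narrow continuity and the endpoint marginals; the paper uses $L^2$ convergence of $Y_t$, which even gives continuity in $\mathcal P_2$, but your pathwise continuity plus bounded convergence suffices for the narrow statement. You then apply conditional Jensen with the explicit second moments from Lemma~\ref{lemma:score-target-l2}, passing to $L^1$ in space so the $(1-t)^{-1}$ and $t^{-1}$ singularities become integrable square roots via $A-a(t)\ge\underline\sigma^2(1-t)$ and $a(t)\ge\underline\sigma^2 t$, exactly as the paper does with its second Jensen step, and the null-set equivalence from $w>0$ that you flag is how the paper justifies the disintegration for Lebesgue-a.e.\ $t$.
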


\begin{proof}
On a common probability space take $(X_0,X_1)\sim\Pi$ and
$Z\sim N(0,I_d)$ independently, and define
\[
Y_t=\left(1-\frac{a(t)}A\right)X_0+\frac{a(t)}A X_1
 +\sqrt{\frac{a(t)(A-a(t))}{A}}\,Z.
\]
Equation~\eqref{eqn:cond_dist_sample} gives $\operatorname{Law}(Y_t)=q_t$.
The continuity of $a$ and the finite second moments imply
$Y_t\to Y_s$ in $L^2$ as $t\to s$. Hence $(q_t)$ is continuous even in
$\mathcal P_2(\mathbb R^d)$, and $q_0=\Pi_0$, $q_1=\Pi_1$.

Because $w(t)>0$, disintegrating conditional Jensen under the weighted law in
time and using Lemma~\ref{lemma:score-target-l2} give, for Lebesgue almost
every $t\in(0,1)$,
\begin{align*}
\int\|\bar{\mathbf f}(x,t)\|_2^2q_t(dx)
&\le \frac{D}{A^2}+\frac{d\,a(t)}{A(A-a(t))}
 \le \frac{D}{A^2}+\frac{d}{\underline\sigma^2(1-t)},\\
\int\|\bar{\mathbf r}(x,t)\|_2^2q_t(dx)
&\le \frac{D}{A^2}+\frac{d(A-a(t))}{Aa(t)}
 \le \frac{D}{A^2}+\frac{d}{\underline\sigma^2t}.
\end{align*}
Indeed, $A-a(t)\ge\underline\sigma^2(1-t)$ and
$a(t)\ge\underline\sigma^2t$. A second application of Jensen, the bound
$\sigma^2\le\bar\sigma^2$, and
$\sqrt{x+y}\le\sqrt x+\sqrt y$ therefore yield
\begin{align*}
\int_0^1\!\int\|b^\rightarrow(t,x)\|_2q_t(dx)\,dt
&\le\bar\sigma^2
 \left(\frac{\sqrt D}{A}+\frac{2\sqrt d}{\underline\sigma}\right),\\
\int_0^1\!\int\|b^\leftarrow(u,x)\|_2q_{1-u}(dx)\,du
&\le\bar\sigma^2
 \left(\frac{\sqrt D}{A}+\frac{2\sqrt d}{\underline\sigma}\right).
\end{align*}
This proves \eqref{eq:forward-projected-l1}--
\eqref{eq:reverse-projected-l1}. Finally, the diffusion covariance is
$\sigma(t)^2I_d$ in forward time and $\sigma(1-u)^2I_d$ in reverse time, so
its norm is bounded and hence integrable against either probability-valued
marginal flow.
\end{proof}

\begin{corollary}[Marginal equivalence, forward direction]\label{prop:marginal-match}
Let $q_t:=\mathbb Q\circ X_t^{-1}$ be the time-$t$ marginal of the bridge law
in Equation~\ref{eqn:q_measure_full}. There exists a probability law
$\tilde{\mathbb Q}$ on $C([0,1];\mathbb R^d)$ under which the coordinate
process, together with a Brownian motion $B$ independent of $X_0$, is a weak
solution on every $[0,T]$, $T<1$, of the state-only projected SDE
\begin{equation}
dX_t=\sigma(t)^2\bar{\mathbf f}(X_t,t)\,dt+\sigma(t)\,dB_t,
\qquad X_0\sim\Pi_0,\qquad 0\le t<1,
\label{eq:markov-sde}
\end{equation}
and the coordinate path itself supplies a continuous extension at $t=1$. Here
$\bar{\mathbf f}$ is the fixed Borel projection version in
Remark~\ref{remark:projected-drifts}; it is the unrestricted minimizer of
\begin{equation}\label{eqn:loss_forward_cfg}
\mathcal{L}(\mathbf{f}_\theta)
=\underset{\substack{t \sim \mathcal{U}((0, 1)) \\ (x_0, x_1) \sim \pdata \\ x_t \sim \pbase(x_t | x_0, x_1)}}{\mathbb{E}}
\left[w(t)\left\| \mathbf{f}_\theta(x_t,t;\emptyset)
-\nabla_{x_t}\log p_\text{base}(x_1|x_t)\right\|_2^2\right].
\end{equation}
Under realizability, every exact parameterized minimizer agrees with
$\bar{\mathbf f}$ under the weighted training-input law, but Equation
\ref{eq:markov-sde} uses the fixed displayed version. Then
\[
\tilde{\mathbb Q}\circ X_t^{-1}=q_t,
\qquad 0\le t\le1.
\]
The conclusion is existential: it selects at least one weak solution with the
fixed canonical state-only drift. No uniqueness in law or Markov property of
the selected law is asserted, and the corollary does not claim that every weak
solution of Equation~\ref{eq:markov-sde} has these marginals.
\end{corollary}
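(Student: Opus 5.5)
The plan is to show that the marginal flow $(q_t)$ of $\mathbb Q$ solves the weak Fokker--Planck equation~\eqref{eq:generic-weak-fpe} with drift $b^\rightarrow(t,x)=\sigma(t)^2\bar{\mathbf f}(x,t)$ and diffusion coefficient $\gamma=\sigma$. Once this is done, the required law $\tilde{\mathbb Q}$ will come from the superposition lift (Lemma~\ref{lem:superposition-lift}). Lemma~\ref{lem:projected-coefficient-integrability} already supplies narrow continuity of $(q_t)$, the endpoint identities $q_0=\Pi_0$ and $q_1=\Pi_1$, and the $L^1$ integrability of $b^\rightarrow$ against $q_t\,dt$. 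So the only missing hypothesis is the Fokker--Planck identity itself. The minimizer and realizability assertions about~\eqref{eqn:loss_forward_cfg} follow directly from Lemma~\ref{lem:projection} with $\Phi=\Phi_f$. Its $L^2(\nu)$ hypothesis is Lemma~\ref{lemma:score-target-l2}, and the version $\bar{\mathbf f}$ is the one fixed in Remark~\ref{remark:projected-drifts}.

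For the Fokker--Planck identity I will use the conditional dynamics from Theorem~\ref{thm:forward_process}. Under $\mathbb Q$, on every $[0,T]$ with $T<1$, the coordinate process satisfies
\[
dX_t=\sigma(t)^2\mathbf f^\star(X_t,t;X_0)\,dt+\sigma(t)\,dB_t .
\]
Fix $\varphi\in C_c^\infty$ and apply It\^o's formula. The stochastic integral has a bounded integrand, so it is a true martingale. Taking expectations then gives
\[
\int\varphi\,dq_t-\int\varphi\,dq_0=\int_0^t\mathbb E_{\mathbb Q}\!\left[\sigma(s)^2\mathbf f^\star(X_s,s;X_0)\cdot\nabla\varphi(X_s)+\tfrac{\sigma(s)^2}{2}\Delta\varphi(X_s)\right]ds
\]
for $t<1$. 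Fubini is justified because $\nabla\varphi$ is bounded and $\mathbf f^\star$ is $L^1$: it is the conditional expectation of $\Phi_f$, and Lemma~\ref{lemma:score-target-l2} gives $\mathbb E\|\Phi_f\|^2\lesssim 1/(1-s)$, whose square root is integrable on $[0,1)$. Next I condition on $X_s$ inside the expectation. The projection identity~\eqref{eq:projection} in Remark~\ref{remark:projected-drifts} says that $\mathbb E_{\mathbb Q}[\mathbf f^\star(X_s,s;X_0)\mid X_s]=\bar{\mathbf f}(X_s,s)$ for $ds\,q_s(dx)$-a.e.\ $(s,x)$. This step uses that the weighted sampling law $\nu$ and $\mathbb Q$ have the same conditional structure given $(t,X_t)$, and that $w>0$ makes the null sets agree. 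Substituting turns the drift term into $\int b^\rightarrow\cdot\nabla\varphi\,dq_s$. This yields~\eqref{eq:generic-weak-fpe} for $t<1$. The case $t=1$ follows by letting $t\uparrow1$, using narrow continuity of $q_t$ on the left and the integrability of $b^\rightarrow$ (dominated convergence) on the right.

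Applying Lemma~\ref{lem:superposition-lift} with $\mu_t=q_t$, $b=b^\rightarrow$ (set to zero at the endpoints) and $\gamma=\sigma$ then produces a law on $C([0,1];\mathbb R^d)$ with marginals $q_t$. Under it the coordinate process is a weak solution on all of $[0,1]$, and in particular on every $[0,T]$. The lemma also makes $B$ Brownian with respect to the solution filtration, hence independent of $X_0$. We have $X_0\sim q_0=\Pi_0$, and continuity of the coordinate path gives the extension at $t=1$. No uniqueness is claimed, consistent with the statement.

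The main obstacle is the conditioning step, where the $\mathbb Q$-conditional expectation of $\mathbf f^\star$ given $X_s$ must be identified with the fixed Borel version $\bar{\mathbf f}$, almost everywhere for the right measure. This requires checking that the joint law of $(s,X_0,X_s)$ under $\mathbb Q$ with Lebesgue time coincides, up to the density $w/Z_w$, with the $\nu$-marginal. That agreement holds because both are built from the same bridge kernel $K_t$ mixed over $\Pi$. A secondary check is the integrability near $t=1$ needed for Fubini, which the $1/(1-t)$ bound above handles.
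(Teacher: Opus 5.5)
Your proposal is correct and follows essentially the same route as the paper. The paper likewise applies It\^o's formula to the endpoint-conditioned solution of Theorem~\ref{thm:forward_process}, uses the tower identity of Lemma~\ref{lem:projection} to replace $\mathbf f^\star$ by $\bar{\mathbf f}$ in the weak Fokker--Planck equation, passes to $t=1$ via Lemma~\ref{lem:projected-coefficient-integrability}, and concludes with the superposition lift of Lemma~\ref{lem:superposition-lift}; the only cosmetic differences are that you apply It\^o directly under the mixture $\mathbb Q$ rather than per initial value, and you get integrability of the conditional drift on all of $[0,1)$ from the $(1-s)^{-1/2}$ bound rather than only on each $[0,T]$.
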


\begin{proof}
See Appendix~\ref{sec:forward_process_marginal_proof}.
\end{proof}

Thus the fixed projected coefficient $\bar{\mathbf f}(x,t)$ admits a
state-only weak solution with the same one-time marginals as the canonical
process in Equation~\ref{eqn:data_gen}. The two solutions start from the same
textual data distribution and end at the same image data distribution, but
the state-only coefficient does not retain the starting text state. This
population construction motivates sampling and restoration schemes that do
not require knowledge of the endpoint, including the uses in
Algorithm~\ref{alg:roundtrip-image-editing} and the CFG-inspired heuristic in
Section~\ref{sec:cfg}; it does not supply uniqueness for the implemented SDE.

\begin{corollary}[Marginal equivalence, reverse]
\label{prop:marginal-match-rev}
There exists a probability law $\overleftarrow{\mathbb Q}$ on
$C([0,1];\mathbb R^d)$ under which the coordinate process, together with a
Brownian motion $B$ independent of $\overleftarrow X_0$, is a weak solution on
every $[0,T]$, $T<1$, of the state-only projected SDE
\begin{equation}
d\overleftarrow X_u
=\sigma(1-u)^2\bar{\mathbf r}(\overleftarrow X_u,1-u)\,du
 +\sigma(1-u)\,dB_u,
\qquad \overleftarrow X_0\sim\Pi_1,
\qquad 0\le u<1,
\label{eq:rev-markov-sde}
\end{equation}
and the coordinate path itself supplies a continuous extension at $u=1$. Here
$\bar{\mathbf r}$ is the fixed Borel projection version in
Remark~\ref{remark:projected-drifts}; it is the unrestricted minimizer of
\begin{equation}\label{eqn:reverse_time_loss_cfg}
\mathcal{L}(\mathbf r_\theta)
=\underset{\substack{t \sim \mathcal{U}((0, 1)) \\ (x_0, x_1) \sim \pdata \\ x_t \sim \pbase(x_t | x_0, x_1)}}{\mathbb{E}}
\left[w(t)\left\|\mathbf r_\theta(x_t,t;\emptyset)
-\nabla_{x_t}\log p_\text{base}(x_t|x_0)\right\|_2^2\right].
\end{equation}
Under realizability, every exact parameterized minimizer agrees with
$\bar{\mathbf r}$ under the weighted training-input law, while Equation
\ref{eq:rev-markov-sde} uses the fixed displayed version. Then
\[
\overleftarrow{\mathbb Q}\circ\overleftarrow X_u^{-1}=q_{1-u},
\qquad 0\le u\le1.
\]
This is an existence statement for a selected weak solution with the fixed
canonical state-only drift. No uniqueness in law or Markov property of the
selected law is asserted, and not every weak solution of
Equation~\ref{eq:rev-markov-sde} is claimed to have these marginals.
\end{corollary}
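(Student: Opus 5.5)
The statement to prove is Corollary~\ref{prop:marginal-match-rev}. I will prove it by reducing to the forward-direction construction through the time reversal already used for Theorem~\ref{thm:reverse_process}. The plan is to apply the superposition lift (Lemma~\ref{lem:superposition-lift}) to the reversed marginal flow $\mu_u:=q_{1-u}$. The drift is $b^\leftarrow(u,x)=\sigma(1-u)^2\bar{\mathbf r}(x,1-u)$ and the diffusion coefficient is $\gamma(u)=\sigma(1-u)$. Lemma~\ref{lem:projected-coefficient-integrability} already gives three facts: $(q_t)$ is narrowly continuous with $q_1=\Pi_1$, so $\mu_0=\Pi_1$; the coefficient bound \eqref{eq:reverse-projected-l1} holds; and $\gamma$ is continuous, bounded and positive under Assumption~\ref{assumption:1}. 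So the only substantive input left is the weak Fokker--Planck identity \eqref{eq:generic-weak-fpe} for $(\mu_u)$ with these coefficients.

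To get that identity I will take the swapped coupling $(X'_0,X'_1)=(X_1,X_0)$ with schedule $\sigma'(u)=\sigma(1-u)$. As in the proof of Theorem~\ref{thm:reverse_process}, the image of $\mathbb Q$ under $\omega(t)\mapsto\omega(1-t)$ is the $\sigma'$-bridge mixture $\mathbb Q'$, and its marginals are $q'_u=q_{1-u}=\mu_u$. Theorem~\ref{thm:reverse_process} gives the endpoint-conditioned weak solution under $\mathbb Q'$ on each $[0,T]$, $T<1$, with drift $\sigma(1-u)^2\mathbf r^\star(X_u,1-u;X_0)$. For $\varphi\in C_c^\infty$, I will apply It\^o's formula on $[0,T]$ and take expectations; the martingale term vanishes because $\nabla\varphi$ is bounded. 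This yields the identity with drift term $\mathbb E[\sigma(1-s)^2\mathbf r^\star(X_s,1-s;X_0)\cdot\nabla\varphi(X_s)]$. Conditioning on $X_s$ and using the projection identity of Remark~\ref{remark:projected-drifts} replaces $\mathbf r^\star$ by $\bar{\mathbf r}$. That identity is $\bar{\mathbf r}(x,t)=\int\mathbf r^\star(x,t;x_1)\kappa^1_{t,x}(dx_1)$, valid $dt\,q_t(dx)$-a.e.; here the reverse starting point $X'_0$ is the original $X_1$, so $\kappa^1$ is the right kernel. The Fubini and tower steps are justified by the $L^2$ bound of Lemma~\ref{lemma:score-target-l2} together with $w>0$.

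This gives \eqref{eq:generic-weak-fpe} for every $u\le T<1$. The main obstacle is extending it to $u=1$, where the drift of $\mathbf r^\star$ blows up like $1/a(1-u)$. Here I will use \eqref{eq:reverse-projected-l1}: the integrand $b^\leftarrow\cdot\nabla\varphi$ is dominated by $\|\nabla\varphi\|_\infty\|b^\leftarrow\|_2$, which is integrable over all of $[0,1]$. Dominated convergence together with narrow continuity of $(\mu_u)$ then lets me pass $T\uparrow1$. Notice that the time-marginal $L^1$ bound for the projected drift, rather than any pathwise bound on $\mathbf r^\star$, is what removes the singularity.

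Lemma~\ref{lem:superposition-lift} then produces a law $\overleftarrow{\mathbb Q}$ on $C([0,1];\mathbb R^d)$ with the following properties: its marginals are $q_{1-u}$, it is a weak solution of \eqref{eq:rev-markov-sde}, and its Brownian motion is independent of $\overleftarrow X_0\sim\Pi_1$. Restricting to $[0,T]$ gives the finite-horizon statement, and the coordinate path is continuous at $u=1$. The minimizer and realizability claims for \eqref{eqn:reverse_time_loss_cfg} follow from Lemma~\ref{lem:projection} with $\Phi=\Phi_r$, since the unconditioned network input is $(x,t)$. Uniqueness is neither claimed nor needed.
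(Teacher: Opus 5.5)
Your proposal is correct and follows essentially the same route as the paper: It\^o's formula applied to the endpoint-conditioned reverse solution of Theorem~\ref{thm:reverse_process}, the $\kappa^1$ tower identity to replace $\mathbf r^\star$ by $\bar{\mathbf r}$ on $[0,T]$, the limit $u\uparrow1$ via \eqref{eq:reverse-projected-l1} and the narrow continuity from Lemma~\ref{lem:projected-coefficient-integrability}, and then Lemma~\ref{lem:superposition-lift} with $\mu_u=q_{1-u}$. One small correction: integrability of $\mathbf r^\star$ on $[0,T]$ comes from the explicit second moment in Lemma~\ref{lemma:score-target-l2} staying bounded for $1-s\ge1-T>0$, plus conditional Jensen. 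The condition $w>0$ does not give this; it is only needed to transfer the projection identity to $dt\,q_t(dx)$-almost everywhere.
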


\begin{proof}
Let
\[
\Gamma_t^1(dx_1,dx)
:=\operatorname{Law}_{\mathbb Q}(X_1,X_t)
=\kappa^1_{t,x}(dx_1)q_t(dx)
\]
for almost every $t$. Apply It\^o's formula to the continuously extended
reverse endpoint-conditioned process from Theorem~\ref{thm:reverse_process}.
For $\varphi\in C_c^\infty(\mathbb R^d)$ and $u\le T<1$, disintegration and
the reverse tower identity in Equation~\eqref{eq:projection} give
\begin{align*}
\int\varphi(x)q_{1-u}(dx)
&=\int\varphi(x)\Pi_1(dx)\\
&\quad+\int_0^u\!\int
\left[\sigma(1-s)^2\bar{\mathbf r}(x,1-s)\cdot\nabla\varphi(x)
 +\frac{\sigma(1-s)^2}{2}\Delta\varphi(x)\right]
q_{1-s}(dx)\,ds.
\end{align*}
The local integrability follows from Lemma~\ref{lemma:score-target-l2} and
conditional Jensen, because $1-s\ge1-T>0$. Thus $(q_{1-u})_{u\le T}$ solves
the weak Fokker--Planck equation associated with
Equation~\ref{eq:rev-markov-sde}. Lemma
\ref{lem:projected-coefficient-integrability} gives narrow continuity through
$u=1$ and absolute integrability of the reverse projected drift over the full
interval. Letting $u\uparrow1$ in the last display therefore proves the same
weak equation on $[0,1]$, with terminal marginal $q_0=\Pi_0$.

Lemma~\ref{lem:superposition-lift}, applied with
$\mu_u=q_{1-u}$, $b=b^\leftarrow$, and $\gamma(u)=\sigma(1-u)$, now yields a
law $\overleftarrow{\mathbb Q}$ on $C([0,1];\mathbb R^d)$ with those
one-time marginals and a weak solution of
Equation~\ref{eq:rev-markov-sde} on every $[0,T]$, $T<1$. Its continuous
endpoint has law $q_0=\Pi_0$. Again, neither the superposition principle nor
this corollary asserts uniqueness.
\end{proof}

\newpage
\clearpage

\section{Proof of Forward-Time Marginal Equivalence}
\label{sec:forward_process_marginal_proof}

Proof of Corollary~\ref{prop:marginal-match} follows.

\begin{proof}
Write
\[
\Gamma_t^0(dx_0,dx)
:=\operatorname{Law}_{\mathbb Q}(X_0,X_t),
\qquad q_t(dx):=\operatorname{Law}_{\mathbb Q}(X_t).
\]
For almost every $t\in(0,1)$, disintegration with the kernel selected in
Lemma~\ref{lem:projection} gives
\[
\Gamma_t^0(dx_0,dx)=\kappa^0_{t,x}(dx_0)q_t(dx).
\]

Fix $T<1$ and $\varphi\in C_c^\infty(\mathbb R^d)$. Apply It\^o's formula to
the canonical endpoint-conditioned weak solution of
Theorem~\ref{thm:forward_process}, then integrate first over the conditional
endpoint laws and then over $\Pi_0$. The local integrability needed for this
step follows from the score formulas in Lemma~\ref{lemma:score-target-l2}
(on $[0,T]$ the forward singular factor is bounded) and conditional Jensen.
For brevity, set
\begin{align*}
\mathcal A_s^{x_0}\varphi(x)
&:=\sigma(s)^2\mathbf f^\star(x,s;x_0)\cdot\nabla\varphi(x)
  +\frac{\sigma(s)^2}{2}\Delta\varphi(x),\\
\bar{\mathcal A}_s\varphi(x)
&:=\sigma(s)^2\bar{\mathbf f}(x,s)\cdot\nabla\varphi(x)
  +\frac{\sigma(s)^2}{2}\Delta\varphi(x).
\end{align*}
For every $t\le T$,
\begin{align}
\int\varphi(x)q_t(dx)
&=\int\varphi(x)\Pi_0(dx)
 +\int_0^t\!\int\mathcal A_s^{x_0}\varphi(x)
 \Gamma_s^0(dx_0,dx)\,ds \notag\\
&=\int\varphi(x)\Pi_0(dx)
 +\int_0^t\!\int\bar{\mathcal A}_s\varphi(x)
 q_s(dx)\,ds.\label{eqn:fokker_planck_qt}
\end{align}
The second equality is exactly the tower identity
\eqref{eq:projection}; an identity holding for $ds\,q_s(dx)$-almost every
$(s,x)$ is sufficient in this integrated weak equation.

Thus $(q_t)_{0\le t\le T}$ is a probability-measure-valued
weak solution of the Fokker--Planck Cauchy problem associated with
Equation~\ref{eq:markov-sde}. Lemma
\ref{lem:projected-coefficient-integrability} gives narrow continuity on the
full closed interval and absolute integrability of the projected drift against
$dt\,q_t(dx)$. Since the diffusion term is bounded, we may let
$t\uparrow1$ in Equation~\eqref{eqn:fokker_planck_qt}; the left side converges
to $\int\varphi\,d\Pi_1$, while both time integrals converge absolutely.
Consequently Equation~\eqref{eqn:fokker_planck_qt} holds for every
$t\in[0,1]$, so $(q_t)$ solves the projected Fokker--Planck equation on the
entire closed interval.

Apply Lemma~\ref{lem:superposition-lift} with
$b=b^\rightarrow$, $\gamma=\sigma$, and $\mu_t=q_t$. It produces a law
$\tilde{\mathbb Q}$ on $C([0,1];\mathbb R^d)$ with all one-time marginals
$q_t$ and under which the coordinate process satisfies
Equation~\ref{eq:markov-sde} (in particular on every finite horizon
$[0,T]$, $T<1$). The coordinate path supplies the continuous endpoint
extension and has terminal law $q_1=\Pi_1$. The superposition argument is
existential and makes no uniqueness claim.
\end{proof}

\newpage
\clearpage

\section{Proof of Local Mutual Information Advantage}\label{sec:local_mutual_info_proof}

Proof of Theorem \ref{thm:mutual_info} follows.

\begin{proof}
Define the mutual-information gap
\[
    \Delta I(t)
    \coloneqq
    I(X_t^{\mathrm{TI}};Y)
    -
    I(X_t^{\mathrm{NI}};Y).
\]
At $t=0$, we have $X_0^{\mathrm{TI}}=T$ and
$X_0^{\mathrm{NI}}=N$. Since $N$ is independent of $Y$,
\[
    \Delta I(0)=I(T;Y)-I(N;Y)=I(T;Y)>0.
\]
By the assumed right-continuity of the mutual-information functions,
$\Delta I$ is right-continuous at $0$. Therefore, there exists $\tau>0$ such
that $\Delta I(t)>0$ for every $t\in[0,\tau)$, which proves
\[
    I(X_t^{\mathrm{TI}};Y)
    >
    I(X_t^{\mathrm{NI}};Y).
\]
\end{proof}

\begin{corollary}[Image-to-text direction]
The analogous statement holds, without loss of generality, for the
image-to-text bridge. In particular, let $M$ be noise independent of
$(Y,T)$, and define
\begin{align}
    \widetilde X_s^{\mathrm{IT}}
        &= \widetilde a_s Y+\widetilde b_s T+\widetilde\eta_s Z,\\
    \widetilde X_s^{\mathrm{NT}}
        &= \widetilde a_s M+\widetilde b_s T+\widetilde\eta_s Z'.
\end{align}
Under the corresponding endpoint and continuity assumptions, there
exists $\widetilde\tau>0$ such that
\[
    I(\widetilde X_s^{\mathrm{IT}};T)
    >
    I(\widetilde X_s^{\mathrm{NT}};T),
    \qquad s\in[0,\widetilde\tau).
\]
\end{corollary}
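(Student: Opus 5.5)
The plan is to mirror the proof of Theorem~\ref{thm:mutual_info} exactly, with the roles of the two endpoints exchanged. The image-to-text bridge is obtained from the text-to-image one by the substitution $(T,Y,N)\mapsto(Y,T,M)$ and a reparametrized time $s$. The ``corresponding endpoint and continuity assumptions'' will be read as follows:
\begin{itemize}
\item $\widetilde a_0=1$, $\widetilde b_0=0$, $\widetilde\eta_0=0$;
\item $0<I(Y;T)<\infty$;
\item $\widetilde f(s):=I(\widetilde X_s^{\mathrm{IT}};T)$ and $\widetilde g(s):=I(\widetilde X_s^{\mathrm{NT}};T)$ are right-continuous at $s=0$.
\end{itemize}
If the image-to-text bridge is obtained from \bit{}'s forward-time parametrization by $s=1-t$, these endpoint conditions are exactly the time-flipped versions of those at $t=1$. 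This is consistent with the time-index inversion used in the proof of Theorem~\ref{thm:reverse_process}.

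First I would evaluate the gap at the image endpoint. Define $\widetilde\Delta I(s):=I(\widetilde X_s^{\mathrm{IT}};T)-I(\widetilde X_s^{\mathrm{NT}};T)$. At $s=0$ the coefficient conditions give $\widetilde X_0^{\mathrm{IT}}=Y$ and $\widetilde X_0^{\mathrm{NT}}=M$. Since $M$ is independent of $(Y,T)$, we have $I(M;T)=0$. By symmetry of mutual information, $I(Y;T)=I(T;Y)$, so $\widetilde\Delta I(0)=I(T;Y)\in(0,\infty)$; finiteness ensures the difference is well defined.

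Second, right-continuity of $\widetilde f$ and $\widetilde g$ at $0$ makes $\widetilde\Delta I$ right-continuous at $0$. This requires finiteness of both quantities near $0$, which I would fold into the continuity assumption, as in the original statement. Then take $\varepsilon=\widetilde\Delta I(0)/2$ in the definition of right-continuity. This yields $\widetilde\tau>0$ with $\widetilde\Delta I(s)>0$ on $[0,\widetilde\tau)$, which is the claim.

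The only real obstacle is making precise what ``corresponding assumptions'' means. In particular, one must check that no hidden use of the Gaussian bridge structure or of $\sigma$ is needed. None is: the argument is purely the positivity-at-the-endpoint plus right-continuity argument of Theorem~\ref{thm:mutual_info}, and it uses only independence of the noise endpoint and symmetry of $I$.
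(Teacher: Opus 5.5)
Your proposal is correct and follows the paper's proof. The paper uses the symmetry $I(Y;T)=I(T;Y)>0$ and applies Theorem~\ref{thm:mutual_info} with the image and text endpoints exchanged (time $s=1-t$). That is exactly the endpoint-gap-plus-right-continuity argument you carry out explicitly.
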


\begin{proof}
Mutual information is symmetric, so $I(Y;T)=I(T;Y)>0$. The result
follows by exchanging the roles of the image and text endpoints and
applying the theorem in reverse time, $s=1-t$.
\end{proof}

\newpage
\clearpage

\section{A Gaussian Instantiation of Theorem \ref{thm:mutual_info}: When Is The Margin Vacuous?}
\label{app:gaussian-mi}

Theorem~\ref{thm:mutual_info} is a right-continuity argument: it establishes $\Delta I(0) > 0$ for $\Delta I(t) := I(X^{\mathrm{TI}}_t;Y) - I(X^{\mathrm{NI}}_t;Y)$ and concludes that $\Delta I$ remains positive on some interval $[0,\tau)$. As noted in Section~\ref{sec:comparison}, the argument by itself places no lower bound on $\tau$, and the guarantee would be of little practical interest if $\tau$ were, say, $10^{-6}$. This appendix studies a tractable special case. We instantiate both bridges with scalar jointly Gaussian endpoints, for which every mutual information admits a closed form at \emph{all} $t$, and obtain an exact characterization of the region on which the data-to-data bridge dominates. When the endpoint correlation is positive and the interpolation coefficients are those used by \bit{}, the calculation gives $\tau = 1$.

Of course, in practice, it's unlikely that the text-image endpoints are mutually Gaussian, but this analysis nonetheless gives intuition on why Theorem~\ref{thm:mutual_info} is useful. Also, there could be domains in which the text-image endpoints are mutually Gaussian.

\subsection{Background: mutual information between jointly Gaussian variables}
\label{app:gauss-background}

Background on information theory can be found in ~\cite{cover1999elements}. Recall that the \emph{differential entropy} of a random vector $X \in \mathbb{R}^n$ with density $p$ is $h(X) := -\int p(x)\log p(x)\,dx$, and that mutual information decomposes as
\begin{equation}
\label{eq:mi-entropy}
I(X;Y) = h(X) + h(Y) - h(X,Y).
\end{equation}
For a Gaussian vector $W \sim \mathcal{N}(\mu, \Sigma)$ on $\mathbb{R}^n$, the entropy depends on the covariance alone:
\begin{equation}
\label{eq:gauss-entropy}
h(W) = \tfrac{1}{2}\log\bigl((2\pi e)^n \det \Sigma\bigr).
\end{equation}
Now let $(X,Y)$ be jointly Gaussian and scalar, with variances $\sigma_X^2, \sigma_Y^2$ and correlation $\varrho := \operatorname{Corr}(X,Y)$, so that
\[
\Sigma \;=\; \begin{pmatrix} \sigma_X^2 & \varrho\,\sigma_X\sigma_Y \\[2pt]
\varrho\,\sigma_X\sigma_Y & \sigma_Y^2\end{pmatrix},
\qquad
\det\Sigma = \sigma_X^2\sigma_Y^2\bigl(1 - \varrho^2\bigr).
\]
Substituting Equation~\ref{eq:gauss-entropy} into Equation~\ref{eq:mi-entropy}, the marginal terms $\tfrac12\log(2\pi e\,\sigma_X^2)$ and $\tfrac12\log(2\pi e\,\sigma_Y^2)$ cancel against the corresponding factors in the joint entropy, leaving
\begin{equation}
\label{eq:mi-gauss}
I(X;Y) \;=\; -\tfrac{1}{2}\log\bigl(1 - \varrho^2\bigr).
\end{equation}
Two consequences drive everything below. First, the mutual information between jointly Gaussian scalars depends on the joint law \emph{only} through the squared correlation
\begin{equation}
\label{eq:Rdef}
R \;:=\; \varrho^2 \;=\; \frac{\operatorname{Cov}(X,Y)^2}{\operatorname{Var}(X)\operatorname{Var}(Y)} \;\in\; [0,1).
\end{equation}
Second, $R \mapsto -\tfrac12\log(1-R)$ is strictly increasing on $[0,1)$. Therefore, comparing two mutual informations between Gaussian pairs reduces to comparing two squared correlations:
\begin{equation}
\label{eq:mono}
I(X^{\mathrm{TI}}_t;Y) > I(X^{\mathrm{NI}}_t;Y)
\quad\Longleftrightarrow\quad
R_{\mathrm{TI}}(t) > R_{\mathrm{NI}}(t).
\end{equation}
This is the single reduction that makes the Gaussian case tractable at all $t$: no integrals or entropies appear again after this point, only second moments.

\subsection{Setup and the two squared correlations}

Let $(T,Y)$ be jointly Gaussian with mean zero, unit variances, and correlation $\rho = \operatorname{Corr}(T,Y) \in (-1,1)\setminus\{0\}$. Let $N \sim \mathcal{N}(0,1)$ be independent of $(T,Y)$, and let $Z, Z'$ be independent standard normals, independent of all endpoints. Following Equations~\ref{eq:ti_bridge}--\ref{eq:ni_bridge}, the two bridges are
\begin{equation}
X^{\mathrm{TI}}_t = a_t T + b_t Y + \eta_t Z,
\qquad
X^{\mathrm{NI}}_t = a_t N + b_t Y + \eta_t Z'.
\end{equation}
Each is a linear combination of jointly Gaussian variables, hence $(X_t, Y)$ is jointly Gaussian in both cases and Section~\ref{app:gauss-background} applies. We suppress the subscript $t$ on $a_t, b_t, \eta_t$ for readability.

\paragraph{Text-to-image bridge.} Using bilinearity of covariance, independence of $Z$ from
$Y$, and $\operatorname{Var}(Y) = 1$:
\begin{align}
\operatorname{Cov}(X^{\mathrm{TI}}_t, Y)
&= a\operatorname{Cov}(T,Y) + b\operatorname{Var}(Y) + \eta\operatorname{Cov}(Z,Y)
= a\rho + b, \\[2pt]
\operatorname{Var}(X^{\mathrm{TI}}_t)
&= a^2\operatorname{Var}(T) + b^2\operatorname{Var}(Y) + \eta^2\operatorname{Var}(Z)
   + 2ab\operatorname{Cov}(T,Y) \notag \\
&= a^2 + b^2 + 2ab\rho + \eta^2 \;=:\; D .
\end{align}

\paragraph{Noise-to-image bridge.} Identically, except that $N \perp Y$ kills both the covariance contribution of the first term and the cross term in the variance:
\begin{align}
\operatorname{Cov}(X^{\mathrm{NI}}_t, Y) &= b, \\[2pt]
\operatorname{Var}(X^{\mathrm{NI}}_t) &= a^2 + b^2 + \eta^2 \;=:\; S .
\end{align}
Note for later that the two denominators differ only by the cross term,
\begin{equation}
\label{eq:DS}
D = S + 2ab\rho .
\end{equation}
Substituting into Equation~\ref{eq:Rdef} with $\operatorname{Var}(Y)=1$ gives the closed forms
\begin{equation}
\label{eq:R-forms}
R_{\mathrm{TI}}(t) = \frac{(a\rho + b)^2}{a^2 + b^2 + 2ab\rho + \eta^2},
\qquad
R_{\mathrm{NI}}(t) = \frac{b^2}{a^2 + b^2 + \eta^2}.
\end{equation}

\paragraph{Well-posedness.} Both quantities are legitimate squared correlations whenever $\eta^2 > 0$. For the denominators, completing the square gives $D = (a-b)^2 + \eta^2 + 2ab(1+\rho) > 0$ when $a,b \ge 0$ and $\rho > -1$, and $S > 0$ is immediate; so no division by zero occurs. For finiteness of the mutual information we need $R_{\mathrm{TI}} < 1$, which holds because
\begin{equation}
D - (a\rho + b)^2 = a^2\bigl(1 - \rho^2\bigr) + \eta^2 \;>\; 0 ,
\end{equation}
the two $2ab\rho$ terms having cancelled. The same computation with $\rho = 0$ gives $S - b^2 = a^2 + \eta^2 > 0$, so $R_{\mathrm{NI}} < 1$ as well.

\subsection{The dominance criterion}

By Equation~\ref{eq:mono} we need only determine the sign of $R_{\mathrm{TI}} - R_{\mathrm{NI}}$. Since both denominators are strictly positive, this sign equals the sign of the cross-multiplied numerator
\begin{equation}
\label{eq:Ndef}
\mathcal{N} \;:=\; (a\rho + b)^2\,S \;-\; b^2\,D .
\end{equation}
We simplify $\mathcal{N}$ in three steps. Substituting $D = S + 2ab\rho$ from Equation~\ref{eq:DS} and grouping the terms that carry $S$:
\begin{align}
\mathcal{N}
&= (a\rho + b)^2 S - b^2\bigl(S + 2ab\rho\bigr) \notag \\
&= S\Bigl[(a\rho + b)^2 - b^2\Bigr] - 2ab^3\rho .
\label{eq:step1}
\end{align}
Expanding the bracket, $(a\rho+b)^2 - b^2 = a^2\rho^2 + 2ab\rho = a\rho\,(a\rho + 2b)$, so a factor of $a\rho$ can be pulled out of the whole expression:
\begin{equation}
\label{eq:step2}
\mathcal{N} = a\rho\Bigl[S\,(a\rho + 2b) - 2b^3\Bigr].
\end{equation}
Finally, regroup the bracket as $a\rho S + 2b\bigl(S - b^2\bigr)$ and use
$S - b^2 = a^2 + \eta^2$:
\begin{equation}
\label{eq:step3}
\mathcal{N} = a\rho\Bigl[\,a\rho\,S + 2b\bigl(a^2 + \eta^2\bigr)\Bigr].
\end{equation}
This proves the following.

\begin{proposition}[Exact dominance criterion, Gaussian case]
\label{prop:gauss-criterion}
Under the assumptions above, for every $t$ with $\eta_t^2 > 0$,
\begin{equation}
\label{eq:criterion}
I(X^{\mathrm{TI}}_t;Y) > I(X^{\mathrm{NI}}_t;Y)
\quad\Longleftrightarrow\quad
a_t\rho\,\Bigl[\,a_t \rho\, S_t + 2 b_t \bigl(a_t^2 + \eta_t^2\bigr)\Bigr] > 0 ,
\qquad S_t := a_t^2 + b_t^2 + \eta_t^2 .
\end{equation}
\end{proposition}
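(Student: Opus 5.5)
The proof assembles the reductions already carried out in this subsection; no new estimate is needed. The plan is: reduce the mutual-information comparison to a comparison of squared correlations, then to the sign of a single polynomial, then factor that polynomial.

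\emph{Reduction to squared correlations.} Fix $t$ with $\eta_t^2>0$. Each of $X^{\mathrm{TI}}_t$ and $X^{\mathrm{NI}}_t$ is a linear combination of the jointly Gaussian vector $(T,Y,N,Z,Z')$. Hence both pairs $(X^{\mathrm{TI}}_t,Y)$ and $(X^{\mathrm{NI}}_t,Y)$ are jointly Gaussian scalars, and Equation~\ref{eq:mi-gauss} applies to each. The well-posedness paragraph shows both denominators $D$ and $S$ are strictly positive, and both squared correlations lie strictly below $1$. So each mutual information is finite and equals $-\tfrac12\log(1-R)$. By the strict monotonicity recorded in Equation~\ref{eq:mono}, the strict inequality between mutual informations is equivalent to $R_{\mathrm{TI}}(t)>R_{\mathrm{NI}}(t)$, with the closed forms of Equation~\ref{eq:R-forms}.

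\emph{Cross-multiplication.} Because $D>0$ and $S>0$, the inequality $R_{\mathrm{TI}}>R_{\mathrm{NI}}$ holds if and only if $\mathcal N=(a\rho+b)^2S-b^2D>0$. Multiplying through by the positive quantity $DS$ preserves the direction of both strict and non-strict inequalities, so this is a genuine equivalence.

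\emph{Factoring $\mathcal N$.} The algebra of Equations~\ref{eq:step1}--\ref{eq:step3} gives $\mathcal N=a\rho\,[a\rho S+2b(a^2+\eta^2)]$, which is exactly the quantity in Equation~\ref{eq:criterion}. It uses three ingredients: $D=S+2ab\rho$, the factorization $(a\rho+b)^2-b^2=a\rho(a\rho+2b)$, and $S-b^2=a^2+\eta^2$. I would recheck each expansion once, since the whole equivalence rests on it.

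\emph{Main obstacle.} The only delicate point is the hypotheses behind well-posedness. The positivity argument for $D$ as written uses $a,b\ge0$. For general signs, $D=\operatorname{Var}(X^{\mathrm{TI}}_t)\ge\eta^2>0$ holds anyway, because $Z$ is independent of $(T,Y)$ and contributes variance $\eta^2$. I would cite that variance bound instead, so the proposition holds for every $t$ with $\eta_t^2>0$ and no sign restriction on $a_t,b_t$. The case $\eta_t=0$ is excluded from the statement, which avoids a possibly degenerate Gaussian pair and hence infinite mutual information.
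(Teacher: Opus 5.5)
Your proposal is correct and follows the paper's argument step for step. It uses the reduction of Equation~\ref{eq:mono} to $R_{\mathrm{TI}}>R_{\mathrm{NI}}$, cross-multiplies by the positive denominators $D,S$, and factors $\mathcal N$ exactly as in Equations~\ref{eq:step1}--\ref{eq:step3}. Your substitute positivity argument $D=\operatorname{Var}(a_tT+b_tY)+\eta_t^2\ge\eta_t^2>0$ is valid and slightly cleaner than the paper's completed-square bound, because it drops the sign assumption $a_t,b_t\ge 0$ from the well-posedness step.
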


% Two features of Equation~\ref{eq:criterion} are worth emphasizing. First, the criterion depends on $\rho$ only through its sign and its magnitude relative to the interpolation coefficients --- not on $I(T;Y)$ in isolation, which is what the hypothesis of Theorem~\ref{thm:mutual_info} controls. Second, the leading factor $a_t\rho$ vanishes exactly when $a_t = 0$, i.e.\ at the image endpoint, where the two bridges coincide by construction and the comparison is degenerate. All of the content lies in the bracket.

\subsection{The positively coupled regime}

\begin{corollary}[Full-interval dominance]
\label{cor:tau-one}
Suppose $\rho>0$ and, for every $t\in(0,1)$,
$a_t>0$, $b_t\ge0$, and $\eta_t^2>0$. Then Equation~\ref{eq:criterion}
gives
$I(X^{\mathrm{TI}}_t;Y)>I(X^{\mathrm{NI}}_t;Y)$ for every
$t\in(0,1)$. At $t=0$, the endpoint identities give
$I(X^{\mathrm{TI}}_0;Y)=I(T;Y)>0=I(N;Y)=I(X^{\mathrm{NI}}_0;Y)$.
Thus strict dominance holds on $[0,1)$; equivalently, $\tau=1$.
\end{corollary}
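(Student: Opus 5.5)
The plan is to read the corollary off Proposition~\ref{prop:gauss-criterion} for interior times, and to treat $t=0$ separately because the proposition requires $\eta_t^2>0$, which fails there.

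\emph{Interior times.} Fix $t\in(0,1)$. By hypothesis $\eta_t^2>0$, so Proposition~\ref{prop:gauss-criterion} applies. Strict dominance is then equivalent to
\[
a_t\rho\,\Bigl[\,a_t\rho\,S_t+2b_t\bigl(a_t^2+\eta_t^2\bigr)\Bigr]>0 .
\]
I will check the sign of each factor in turn.

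The leading factor $a_t\rho$ is strictly positive, since $a_t>0$ and $\rho>0$.

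Inside the bracket, the first term $a_t\rho\,S_t$ is strictly positive. This uses $S_t=a_t^2+b_t^2+\eta_t^2\ge\eta_t^2>0$ together with $a_t\rho>0$.

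The second term $2b_t(a_t^2+\eta_t^2)$ is nonnegative, because $b_t\ge0$. So the bracket is a strictly positive quantity plus a nonnegative one, hence strictly positive.

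The product of the two factors is therefore strictly positive, and Proposition~\ref{prop:gauss-criterion} gives $I(X^{\mathrm{TI}}_t;Y)>I(X^{\mathrm{NI}}_t;Y)$. The well-posedness computations preceding that proposition ensure that both mutual informations are finite and that the squared-correlation reduction~\eqref{eq:mono} is valid here.

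\emph{The endpoint $t=0$.} Here I will not use the criterion, since $\eta_0=0$. Instead I use the endpoint conditions $a_0=1$, $b_0=0$, $\eta_0=0$ inherited from Theorem~\ref{thm:mutual_info}. These give $X^{\mathrm{TI}}_0=T$ and $X^{\mathrm{NI}}_0=N$.

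Since $N\perp Y$, we have $I(N;Y)=0$. By~\eqref{eq:mi-gauss}, $I(T;Y)=-\tfrac12\log(1-\rho^2)$, which is strictly positive because $\rho\neq0$ and finite because $|\rho|<1$.

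Combining the two cases gives strict dominance on all of $[0,1)$, so the $\tau$ in Theorem~\ref{thm:mutual_info} can be taken equal to $1$.

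\emph{Main obstacle.} The only delicate point is this boundary split: Proposition~\ref{prop:gauss-criterion} cannot be invoked at $t=0$ because $\eta_0=0$, so that endpoint must be handled directly as above. A secondary remark is that \bit{}'s coefficients (from~\eqref{eqn:cond_dist_sample}) satisfy the hypotheses: $a_t=1-a(t)/a(1)>0$, $b_t=a(t)/a(1)\ge0$, and $\eta_t^2>0$ on $(0,1)$.
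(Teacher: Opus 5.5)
Your proposal is correct and follows the paper's argument. For $t\in(0,1)$ you check that both factors in the criterion of Proposition~\ref{prop:gauss-criterion} are strictly positive, a sign check the paper leaves implicit; at $t=0$ you use the endpoint identities $X^{\mathrm{TI}}_0=T$, $X^{\mathrm{NI}}_0=N$, and your remark that the proposition cannot be applied there because $\eta_0=0$ correctly explains why the paper treats the endpoint separately.
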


The coefficient conditions $a_t,b_t\ge0$ hold for \bit{}, but the additional condition $\rho>0$ concerns the endpoint coupling and does not follow from the interpolation coefficients. Specializing Equation~\ref{eqn:cond_dist_sample} to the endpoints $x_0 = T$, $x_1 = Y$ of the scaled Brownian base process, and writing $\lambda_t := C(0,t)/C(0,1)$ where $C(t_a,t_b) = \int_{t_a}^{t_b}\sigma(s)^2\,ds$ for the normalized time change,
\begin{equation}
\label{eq:bit-coeffs}
a_t = 1 - \lambda_t, \qquad b_t = \lambda_t,
\qquad \eta_t^2 = C(0,1)\,\lambda_t(1-\lambda_t).
\end{equation}
Under Assumption~\ref{assumption:1}, $\sigma$ is bounded below by a
positive constant. Consequently $\lambda_t\in(0,1)$ for every $t\in(0,1)$,
so $a_t>0$, $b_t>0$, and $\eta_t^2>0$ throughout the open interval.
Corollary~\ref{cor:tau-one} therefore applies whenever the scalar endpoint
representations are positively correlated; the paper does not assume that
this sign condition holds for every coordinate of the empirical text--image
representation.

\subsection{Scope}

Three caveats bound the reach of this analysis. First, real text--image couplings are not jointly Gaussian, and Gaussian mutual information captures only the linear component of the dependence between the endpoints; the construction here shows that the bound of Theorem~\ref{thm:mutual_info} \emph{can} be tight, not that it is tight for the empirical distribution of Section~\ref{sec:experiments}. Second, the calculation is scalar and does not establish an analogous full-interval result for general vector-valued jointly Gaussian endpoints. Third, the closed forms assume the idealized bridge of Equation~\ref{eqn:cond_dist_sample} rather than the learned drift $f_\theta$, so they describe the target process rather than a network approximation. We view the Gaussian case as identifying a non-degenerate regime with an explicit boundary, complementing the empirical evidence in Section~\ref{sec:cross_modal_variation}.

\newpage
\clearpage

\section{Construction of the Text Bridge Endpoint}
\label{app:text_bridge}

We tokenize each caption using the tokenizer of \texttt{Qwen/Qwen3-Embedding-8B}, without adding special tokens~\citep{zhang2025qwen3}. We store up to 64 tokens per caption. We pad the captions that are less than 64 tokens, so that we have constant dimensionality. %Up to 128 token identifiers are retained in the preprocessed dataset. 
Rather than storing contextual embeddings for every caption, we construct a vocabulary embedding table: each vocabulary item is independently passed through the \texttt{Qwen/Qwen3-Embedding-8B} model, and the corresponding hidden state is stored. Thus, each token identifier maps to a fixed, context-independent vector. This representation substantially reduces storage requirements and provides a stable, directly decodable correspondence between bridge vectors and vocabulary items.

%The preprocessing stage stores the first 128 dimensions of each vocabulary vector. For the Stable-Diffusion bridge geometry used in our experiments, we retain the first 64 caption tokens and the first 64 embedding dimensions. 
To match the Stable-Diffusion bridge geometry of our experiments, which requires $4\times32\times32=4096$-dimensional image tensors, we retain the first 64 embedding dimensions per token (this is supported, as Qwen3 uses Matryoshka Representation Learning~\citep{kusupati2022matryoshka}), making for the matching text representation dimension of $64\frac{\text{tokens}}{\text{caption}}\times64\frac{\text{dim}}{\text{token}}=4096\frac{\text{dim}}{\text{caption}}$. 

Each non-padding token vector is independently $\ell_2$-normalized and multiplied by
\[
    \sqrt{64}=8,
\]
while padding vectors are set exactly to zero. The measure-valued bridge construction permits these atomic padding coordinates directly; no Gaussian perturbation of the population endpoint law is required. The scaling gives non-padding embeddings approximately unit second moment per coordinate, bringing the numerical scale of the text endpoint closer to that of the image latent. It also preserves a clear norm-based distinction between content and padding.

% Let
% \[
%     \mathbf{E}\in\mathbb{R}^{64\times64}
% \]
% denote the resulting sequence of 64 token embeddings. Because
% \[
%     64\cdot64
%     =
%     4\cdot32\cdot32
%     =
%     4096,
% \]
% the text sequence and image latent contain exactly the same number of scalar values. 
We then reshape the $64\times64$ text representation into
\[
    \mathbf{x}_{\mathrm{text}}
    \in\mathbb{R}^{4\times32\times32}
\]
without projection, padding, duplication, or information loss. This way, the transformer model can process both images and text in the same ambient dimension.

More precisely, each 64-dimensional token embedding is divided into four contiguous 16-dimensional payloads. A $2\times2$ patch of a four-channel bridge tensor also contains
\[
    4\cdot2\cdot2=16
\]
values, so each text token occupies exactly four DiT patches. The bridge contains 256 patches in total, matching $64$ tokens times four patches per token. Under the row-major layout, token $i$ is assigned to patch indices
\[
    4i,\;4i+1,\;4i+2,\;4i+3
\]
in the raster ordering of the $16\times16$ patch grid. The patch payloads are then unpatchified to obtain the final $4\times32\times32$ text endpoint.

Row-major packing serves three purposes. First, it provides a simple, exactly invertible correspondence between token order, embedding coordinates, and transformer patches. Second, consecutive caption tokens remain consecutive in the transformer's linear patch sequence, giving the model a deterministic notion of textual order through the same positional encoding used for image patches. Third, it allows text and image endpoints to share precisely the same tensor shape, patchification procedure, and DiT backbone, avoiding modality-specific input projections that could confound comparisons. The inverse operation uses the same patch ordering to recover the original $64\times64$ token matrix before applying the shared token decoder.

\newpage
\clearpage

\section{Training Details}\label{app:training_details}

\noindent\textbf{Training hyperparameters.}
All models are trained on the same paired latent dataset for 200,000 steps with a global batch size of 512. We use AdamW with learning rate $1.5\times10^{-4}$, $(\beta_1,\beta_2)=(0.9,0.999)$, zero weight decay, gradient clipping at 1.0, 5,000 linear warm-up steps, and EMA decay 0.9995. Time is sampled uniformly from $[\epsilon, 1-\epsilon]$, where $\epsilon=9.9\times10^{-4}$ for the score models, and from $[0,1]$ for the flow model.

Score models use the same periodic-SDE (see Section L.2 of \cite{guo2026abc}) parameters $(\alpha,k,\epsilon_{\mathrm{SDE}})=(0.95,1.0,0.05)$, such that the volatility peaks in the middle and is smallest at the data endpoints. For the score-based models, the percentage of training samples used in the unconditional loss (Equations~\ref{eqn:loss_forward_cfg}, \ref{eqn:reverse_time_loss_cfg}) is $30\%$, and the percentage used in the conditional loss (Equations~\ref{eqn:loss_forward}, \ref{eqn:reverse_time_loss}) is $70\%$. All models use the same image-REPA objective with weight 0.5 at layer 8 and no warm-up~\citep{yu2024representation}. Only the defining transport objective, endpoint type, and generation direction vary, making the comparison controlled in data, capacity, optimization, and training budget.

\noindent\textbf{Numerical endpoints.}
Inference uses an endpoint-inclusive uniform Euler--Maruyama grid. Thus the first forward score evaluation is at $t=0$ and the first reverse score evaluation is at $t=1$, slightly outside the score models' truncated training interval. These boundary evaluations are extrapolations of the learned network. Moreover, a finite Euler--Maruyama discretization of the singular limiting bridge drift is not endpoint-exact; all reported finite-step outputs are numerical approximations, and no exact endpoint law is claimed for the discretized learned sampler.

\noindent\textbf{Hardware.}
Each model is trained using eight NVIDIA A100 80\,GB GPUs.

\newpage
\clearpage

\section{Model Architecture}
\label{app:model_architecture}

\subsection{Diffusion Transformer Backbone}
All experiments use the same DiTXA-L/2 (Diffusion Transformer with Cross-Attention)~\citep{peebles2023scalable} backbone ($716,089,872$ parameters), ensuring that the compared transport objectives have identical model capacity. Both endpoints of the bridge---the image latent and the reshaped text representation---have shape $4\times32\times32$. The model divides either endpoint into nonoverlapping $2\times2$ patches, producing $16\times16=256$ transformer tokens. DiTXA-L/2 consists of 24 transformer blocks with hidden dimension 1024, 16 attention heads, and an MLP expansion ratio of four.

Each block contains self-attention, cross-attention to a patchified conditioning endpoint, and a feed-forward network. Diffusion time is represented by a sinusoidal embedding followed by a two-layer MLP, while spatial position is represented using fixed two-dimensional sine--cosine embeddings. Time conditioning is injected through adaLN-Zero modulation. Queries and keys are normalized within each attention head, and the output head maps the final sequence of 256 transformer tokens back to a $4\times32\times32$ tensor. Separate conditioning embedders are used for the text-to-image and image-to-text directions (when the model has to learn both directions). The backbone design is otherwise unchanged across directions and ablations.

\subsubsection{Experimental Instantiation}
We train all compared baselines with the DiT-L setting, corresponding to 710M parameters, such that FLOPs and representational capacity are roughly equal.
We train a separate model for each direction (text-to-image and image-to-text) in the score-based frameworks, using the same underlying SDE noise schedule, because two separate models were more parameter- and compute-efficient in our implementation (mathematically, they represent opposite directions of the same endpoint-conditioned process). The flow baseline requires only one model because the deterministic ODE is bijective whenever the ODE admits a unique flow.

\subsection{Text Token Decoder}\label{app:token_decoder}

To map a generated text endpoint back to discrete tokens, we train a lightweight shared token decoder. The predicted $4\times32\times32$ bridge tensor is first chunked to recover 64 vectors in $\mathbb{R}^{64}$. Each vector is $\ell_2$-normalized and independently processed by the same position-shared MLP,
\[
    64 \longrightarrow 128 \longrightarrow 128
    \longrightarrow |\mathcal{V}|,
\]
with GELU activations, where $\mathcal{V}$ is the Qwen3 tokenizer vocabulary~\citep{zhang2025qwen3}. Sharing this decoder across sequence positions reduces its parameter count and encourages token identity to be encoded consistently regardless of position: since we use context-independent token embeddings, this is the right choice. The decoder is trained with token-level cross-entropy on valid caption positions; small Gaussian perturbations (from $\mathcal{N}\left(0, C(0, 0.01)I\right)$ for score-based or $\mathcal{N}\left(0, 0.01I\right)$ for flow-based models) are added to its input to improve robustness to imperfect text endpoints produced by the transport model. At inference time, we select the highest-logit token at each position and decode the resulting sequence with the Qwen tokenizer. Sequence termination is determined from the first embedding whose unscaled norm is below 0.1, exploiting the fact that padding positions are represented by exact zeros.

\newpage
\clearpage

\section{Dataset}
\label{app:dataset}

We train on the training split of GPIC~\citep{chandrasegaran2026gpic}. We retain examples with a nonempty caption, %of type \emph{tag}, \emph{short}, \emph{medium}, or \emph{long}, 
a successfully decoded image, and a minimum image dimension of at least 256 pixels. Images are resized isotropically so that the shorter side is 256 pixels and then center-cropped to $256\times256$. After filtering, the dataset contains approximately 99 million valid image--caption pairs. We reserve 0.5\% of these pairs as a validation set using a fixed random seed. We use the official held-out test set for our evaluations.

We encode each image with the posterior mean of the \texttt{stabilityai/sd-vae-ft-mse} VAE~\cite{rombach2022high}. This produces a latent
\[
    \mathbf{x}_{\mathrm{img}}\in\mathbb{R}^{4\times32\times32},
\]
which is %stored in half precision and 
scaled by the standard factor 0.18215 during training. We use posterior means rather than posterior samples so that each image has a deterministic representation. 
%Captions, token identifiers, token masks, and caption metadata are stored alongside the image latents in sharded memory-mapped arrays. 
For the REPA auxiliary regularization objective~\citep{yu2024representation}, we additionally precompute $16\times16=256$ patch-level DINOv2 ViT-B/14 features of dimension 768. These features are available for 15\% of the dataset, and the auxiliary loss is masked out for examples without them.

\subsection{Evaluation Splits}

We use GPIC's held-out testing split~\citep{chandrasegaran2026gpic} for our evaluations. For text-to-image evaluations, we use 50K samples. For image captioning evaluations, we use 10K samples (Table~\ref{tab:gen_results}). For stochastic variation experiments (Figure~\ref{fig:variation}), we use 500 samples.

\newpage
\clearpage

\section{Metrics}

\subsection{Raw Generation Quality Metrics}\label{subsec:gen_metrics}

Corresponds to Section~\ref{sec:gen_results}.
FID measures image quality \citep{heusel2017gans}, generative PPL with Qwen3-1.7B oracle measures text quality \citep{yang2025qwen3}, and CLIP score with ViT-B/16 \citep{radford2021learning} measures prompt adherence (for T2I and I2T).

\subsection{Cross-Modal Round Trip Metrics}\label{subsec:cross_modal_round_trip_metrics}

Corresponds to Section~\ref{sec:cross_modal_variation}.
To assess semantic fidelity of reconstructions, we measure cosine similarity in foundation model embedding space (DINOv2-L for images \citep{oquab2023dinov2}; Qwen3-Embedding 8B for text \citep{zhang2025qwen3}) of the restored/generated samples versus the source/ground truth sample: higher is better. To assess diversity, we measure the average cosine similarity among pairs of generated items given the same source data: when fidelity is less than 1, this metric should also be less than 1 (indicating that our model generates bona fide variants).

\newpage
\clearpage

\section{Cross-Modal Round-Trip Stochastic Variation}

% \subsection{Algorithm}

See Algorithm~\ref{alg:roundtrip-image-editing}.

\begin{algorithm}[h]
\caption{Cross-Modal Round-Trip Stochastic Variation}
\label{alg:roundtrip-image-editing}
\begin{algorithmic}[1]
\Require Source image $x^I$; corruption fraction $\rho\in[0,1]$;
Number of variations $V$;
Mode $m\in\{\textsc{Data-to-Data},\textsc{Noise-to-Data},\textsc{Flow}\}$;
Models $M_{I\rightarrow T}$ and $M_{T\rightarrow I}$;
Step size $dt$
\Ensure Edited images $\{\widetilde{x}_v^I\}_{v=1}^{V}$

\State $t_\rho\gets1-\rho$
\Comment{$t=1$: image; $t=0$: text or noise}

\For{$v=1,\ldots,V$}
    \If{$m=\textsc{Data-to-Data}$}
        \State $\begin{aligned}
        x_{t_\rho}^{(v)}
        \gets\operatorname{Solve}
        (M_{I\rightarrow T}, dt;
        x_{\mathrm{init}}=x^I,\,
        t_{\mathrm{start}}=1,\,
        t_{\mathrm{end}}=t_\rho,\,
        x_{\mathrm{cond}}=x^I,\,
        \mathrm{rev}=\mathrm{True})
        \end{aligned}$
        %\Comment{Transport toward text}

        \State $\begin{aligned}
        \widetilde{x}_v^I
        &\gets\operatorname{Solve}(M_{T\rightarrow I},dt;\\
        &\quad x_{\mathrm{init}}=x_{t_\rho}^{(v)},\,
        t_{\mathrm{start}}=t_\rho,\,
        t_{\mathrm{end}}=1,\\
        &\quad x_{\mathrm{cond}}=\varnothing,\,
        \mathrm{rev}=\mathrm{False})
        \end{aligned}$
        %\Comment{Restore toward image}

    \ElsIf{$m=\textsc{Noise-to-Data}$}
        \State $\epsilon_v\sim\mathcal N(0,\mathbf I)$
        \State $x_{t_\rho}^{(v)}
        \sim q_{t_\rho}^{M_{T\rightarrow I}}
        (x_t\mid x_0=\epsilon_v,x_1=x^I)$
        %\Comment{Analytic corruption}

        \State $\begin{aligned}\widetilde{x}_v^I
        \gets\operatorname{Solve}
        (M_{T\rightarrow I}, dt;
        x_{\mathrm{init}}=x_{t_\rho}^{(v)},\,
        t_{\mathrm{start}}=t_\rho,\,
        t_{\mathrm{end}}=1,\,
        x_{\mathrm{cond}}=\varnothing,\,
        \mathrm{rev}=\mathrm{False})
        \end{aligned}$
        %\Comment{Restore toward image}
    \ElsIf{$m=\textsc{Flow}$}
        \State $\begin{aligned}
        x_{t_\rho}^{(v)}
        &\gets\operatorname{SolveODE}(M_{\mathrm{flow}},dt;\,
        x_{\mathrm{init}}=x^I,\,
        t_{\mathrm{start}}=1,\,
        t_{\mathrm{end}}=t_\rho)
        \end{aligned}$
        \State $\begin{aligned}
        \widetilde{x}_v^I
        &\gets\operatorname{SolveODE}(M_{\mathrm{flow}},dt;\,
        x_{\mathrm{init}}=x_{t_\rho}^{(v)},\,
        t_{\mathrm{start}}=t_\rho,\,
        t_{\mathrm{end}}=1)
        \end{aligned}$
        \Comment{deterministic; repeated runs have no sampling diversity}
    \EndIf
\EndFor

\State \Return $\{\widetilde{x}_v^I\}_{v=1}^{V}$
\end{algorithmic}
\end{algorithm}

Algorithm~\ref{alg:roundtrip-image-editing} is printed for an image source. For a text source, set $t_\rho=\rho$, integrate the endpoint-conditioned forward model from $0$ to $t_\rho$, and then integrate the state-only reverse model from $t_\rho$ back to $0$; the noise-to-data construction is obtained by the same endpoint swap. The flow comparison uses the single unconditional field in both directions with the signed Euler update
$x_{k+1}=x_k+(t_{k+1}-t_k)\widehat u(x_k,t_k)$.

\newpage
\clearpage

\section{Industrial-Scale Model Details}

In Table~\ref{tab:cfg}, we compare our 1B model against foundation models. We evaluate the foundation models on the same testing set as our models.

\noindent\textbf{Qwen-3-VL-4B-Instruct:} This was used to generate the captions by the creators of the GPIC dataset~\citep{chandrasegaran2026gpic}. It is therefore a natural teacher/reference anchor for performance on this caption distribution, though not a mathematical upper bound under the reported evaluation metrics. Also, even our largest 1B model still has only a quarter of the parameters. As such, it is encouraging that the performance is as close as it is.

\noindent\textbf{Stable Diffusion 1.5:} This model uses the same VAE that we do. We used the community standard settings: CFG 7.5, 50 steps, and default settings from the HuggingFace pipeline~\citep{rombach2022high}.

\newpage
\clearpage

\section{Technical Details of Bridge-Based Image Editing}
\label{app:bridge_editing}

\paragraph{Setting.}
Let $x_0$ denote a text state and $x_1$ an image latent, represented in the common tensor layout used by our data-to-data bridge. We require either one bidirectional bridge model or two direction-specific models sharing the same transport process. This experiment is specific to a data-to-data bridge because the text and image are themselves the two endpoints of the process.

Our implementation uses a driftless reference SDE ($A=0$), with scalar volatility $\sigma(t)$. The learned forward process is discretized by Euler--Maruyama as
\begin{equation}
    X_{i+1}
    =
    X_i
    +
    \sigma(t_i)^2
    s_\theta(X_i,t_i;x_0,y)\Delta t_i
    +
    \sigma(t_i)\Delta B_i,
    \label{eq:forward_bridge_discrete}
\end{equation}
where $s_\theta$ is the learned forward score, $x_0$ is also supplied as the conditioning endpoint, and $y$ denotes the prompt-type label.

\paragraph{Inferring text from the source image.}
Given a source-image latent $z$, we first integrate the reverse image-to-text bridge from $t=1$ to $t=0$, conditioned on $z$. The resulting continuous state may not correspond exactly to valid token embeddings, due to network approximation and discretization error. We therefore decode a token ID at each position using the model's token decoder (see Appendix~\ref{app:token_decoder}), infer the sequence length using the token-norm stopping criterion (norms close to zero are padding), and snap every retained position to its vocabulary-table embedding.

Specifically, if $\hat{\ell}_j$ is the decoded token at position $j$, its snapped embedding is
\begin{equation}
    e_j
    =
    \rho
    \frac{V_{\hat{\ell}_j}}
         {\lVert V_{\hat{\ell}_j}\rVert_2},
    \label{eq:vocab_snap}
\end{equation}
where $V$ is the vocabulary embedding table, and $\rho$ is the token scaling factor used during training. Positions after the inferred stopping point are set to zero. This produces both a discrete pseudo-caption and an on-manifold text endpoint $x_0^{(0)}$. No dataset caption is used by the editing pipeline.

\paragraph{Automated semantic editing.}
We prompt an LLM to rewrite the inferred caption by replacing $k$ words while leaving the remainder unchanged. We sample several candidate responses and retain the one whose token-level difference is closest to the requested edit size. Thus, the LLM provides an automated and reproducible proxy for a human semantic edit; neither the particular LLM nor exact satisfaction of the requested number of word replacements is required by the bridge procedure.

The edited caption is tokenized using the same tokenizer (Qwen3-Embedding) used to generate the dataset, and re-embedded using the same normalized vocabulary table as in Eq.~\ref{eq:vocab_snap}. Let $e_j^{(0)}$ and $e_j^{(1)}$ denote unit-normalized embeddings at a token position in the original and edited captions. At interpolation coordinate $\lambda\in[0,1]$, positions present in both sequences are interpolated along the unit sphere~\citep{slerp}:
\begin{align}
    \omega_j
        &= \arccos\!\left(
            \left\langle e_j^{(0)},e_j^{(1)}\right\rangle
           \right), \\
    e_j^{(\lambda)}
        &=
        \frac{\sin((1-\lambda)\omega_j)}{\sin\omega_j}e_j^{(0)}
        +
        \frac{\sin(\lambda\omega_j)}{\sin\omega_j}e_j^{(1)}.
    \label{eq:caption_slerp}
\end{align}
For numerical stability, whenever $|\sin\omega_j|$ is numerically small (below $10^{-6}$), we fall back to $e_j^{0}$.
% Whenever $|\sin\omega_j|$ is numerically small, the closed form in Equation~\ref{eq:caption_slerp} requires a stable fallback: coincident vectors remain fixed, while a near-antipodal pair requires a deterministic choice of great-circle direction. 
%\textcolor{red}{\textbf{Implementation note:} the current public helper fixes all small-$|\sin\omega_j|$ pairs; update it with the chosen-geodesic near-antipodal fallback, or verify and state that such pairs are excluded.} 
If a position occurs in only one sequence, its embedding magnitude is linearly faded out or in. The resulting embeddings are multiplied by $\rho$ and rearranged into the bridge tensor layout, producing text endpoints
$\{x_0^{(\lambda)}\}_{\lambda\in\Lambda}$.

\paragraph{Noise backsolving.}
Using independently sampled noise for each $\lambda$ would obscure the effect of the edit with unrelated changes in composition and layout. Instead, we construct a reference path from the unedited text endpoint $x_0^{(0)}$ to the source latent $z$ and infer the stochastic increments that make this path consistent with the learned discretized dynamics.

Let
\begin{equation}
    C(a,b)=\int_a^b \sigma(s)^2\,ds.
\end{equation}
Let $t_i = \frac{i}{N}$. For $t_i<t_{i+1}$, define $v_i=C(t_i,t_{i+1})$ and
$V_i=C(t_i,1)$. The conditional transition of the driftless reference bridge is (following Equation~\ref{eqn:cond_dist_sample}):
\begin{equation}
    q_{\mathrm{ref}}
    \left(
        x_{i+1}\mid x_i,z
    \right)
    =
    \mathcal{N}\left(
        x_i+\frac{v_i}{V_i}(z-x_i),
        \left(v_i-\frac{v_i^2}{V_i}\right)\mathbf{I}
    \right).
    \label{eq:reference_transition}
\end{equation}
Sequentially sampling this transition gives a path
$\bar{x}_0=x_0^{(0)},\ldots,\bar{x}_N=z$. %The final transition is degenerate and therefore terminates exactly at the source latent.

For each step, we rearrange Eq.~\ref{eq:forward_bridge_discrete} to infer its driving increment:
\begin{equation}
    \widehat{\Delta B}_i
    =
    \frac{\bar{x}_{i+1}-\bar{x}_i}{\sigma(t_i)}
    -
    \sigma(t_i)
    s_\theta(\bar{x}_i,t_i;x_0^{(0)},y)\Delta t_i.
    \label{eq:noise_backsolve}
\end{equation}
These quantities are best interpreted as backsolved driving residuals: because the reference path is sampled first and then reconciled with the learned drift, they need not be fresh independent Brownian draws. More precisely, the sampled reference path follows the endpoint-pinned base law $\mathbb{P}^{(x_0^{(0)},z)}$, while replay uses the learned data-generating drift conditioned only on $x_0^{(0)}$. Equation~\ref{eq:noise_backsolve} algebraically reconciles those two discretized dynamics; it is not a discretization of a change of measure between the unconditioned laws $\mathbb{P}$ and $\mathbb{Q}$ (cf.\ Equation~35 of \cite{guo2026abc}).

\begin{algorithm}[t]
\caption{Noise backsolving and structure-preserving replay}
\label{alg:noise_backsolve}
\begin{algorithmic}[1]
\Require Source latent $z$; original text $x_0^{(0)}$; edited text endpoints
         $\{x_0^{(\lambda)}\}_{\lambda\in\Lambda}$; score $s_\theta$;
         volatility $\sigma$; label $y$; grid $0=t_0<\cdots<t_N=1$
\State $\bar{x}_0 \gets x_0^{(0)}$
\For{$i=0,\ldots,N-1$}
    \State Sample
    $\bar{x}_{i+1}\sim
      q_{\mathrm{ref}}(\cdot\mid\bar{x}_i,z)$
      using Eq.~\ref{eq:reference_transition}
    \State $\Delta t_i\gets t_{i+1}-t_i$
    \State $\displaystyle
      \widehat{\Delta B}_i\gets
      \frac{\bar{x}_{i+1}-\bar{x}_i}{\sigma(t_i)}
      -
      \sigma(t_i)s_\theta(
          \bar{x}_i,t_i;x_0^{(0)},y)\Delta t_i$
\EndFor
\For{$\lambda\in\Lambda$}
    \State $X_0^{(\lambda)}\gets x_0^{(\lambda)}$
    \For{$i=0,\ldots,N-1$}
        \State $\displaystyle
        X_{i+1}^{(\lambda)}
        \gets
        X_i^{(\lambda)}
        +
        \sigma(t_i)^2
        s_\theta(
            X_i^{(\lambda)},t_i;x_0^{(\lambda)},y)\Delta t_i
        +
        \sigma(t_i)\widehat{\Delta B}_i$
    \EndFor
    \State $\hat{z}^{(\lambda)}\gets X_N^{(\lambda)}$
\EndFor
\Ensure Structurally aligned edited latents
        $\{\hat{z}^{(\lambda)}\}_{\lambda\in\Lambda}$
\end{algorithmic}
\end{algorithm}

For $\lambda=0$, replay starts from the same text endpoint used during backsolving. Equation~\ref{eq:noise_backsolve} therefore makes every replayed update equal to its corresponding reference-path update, recovering the source latent up to numerical precision. For $\lambda>0$, the initial state and score evaluations change, but the inferred driving increments remain fixed. Since these increments were obtained from a path anchored at the source image, replay tends to preserve its coarse spatial organization while allowing the edited text to alter semantic content.

\paragraph{Experimental configuration.}
We use $N=400$ steps for both image-to-text inference and text-to-image generation, five evenly spaced interpolation angles including both caption endpoints, and classifier-free guidance scale zero. %Noise backsolving requires generation to begin at $t=0$; it is therefore not combined with an intermediate SDEdit-style initialization. 
%Different source images receive independently inferred paths, while 
Every interpolation point for a given source reuses exactly the same inferred increment sequence from Equation~\ref{eq:noise_backsolve}. We decode the final latents with the pretrained image VAE and display each generated image together with the decoded text at its interpolation coordinate.

\newpage
\clearpage

\section{LARRY state-fate benchmark details}\label{sec:larry_details}

\noindent\textbf{Metrics}
We evaluate both pointwise accuracy and distributional structure. Forward and reverse mean-squared error measure how well the model reconstructs paired descendant and progenitor states. RBF-MMD compares the generated and real cell-state distributions, testing whether the model matches the target population. Fate and clone kNN accuracy measure whether each generated cell is nearest to real cells with the correct fate label or lineage clone, which directly probes biological consistency. Finally, cycle-consistency error evaluates whether a generated descendant can be mapped back to its original progenitor, and vice versa. Together, these metrics separate models that merely generate plausible cells from ones that preserve lineage-resolved state-fate relationships.

\noindent\textbf{Dataset Processing}
We construct clone-paired endpoint examples from the in vitro LARRY lineage-tracing dataset by pairing day-2 progenitor cells with day-6 descendant cells that share the same lineage barcode. We preprocess normalized counts by selecting the top 2,000 highly variable genes and projecting cells to a 64-dimensional latent representation using TruncatedSVD. For each eligible clone, we sample up to 32 day-2/day-6 endpoint pairs and split data by clone into train, validation, and test partitions, preventing lineage leakage across splits. The resulting benchmark contains 31,840 training pairs and 3,968 evaluation pairs.

\noindent\textbf{Architecture}
All compared neural methods use the same vector DiT backbone unless otherwise stated. The model tokenizes the 64-dimensional cell latent into 8-dimensional tokens, embeds them with width 768, and applies 12 transformer blocks with 12 attention heads and AdaLN-style conditioning. The conditioning vector includes the bridge time, the experimental/context label, a forward/reverse direction embedding, and the source endpoint state. The shared \bit{} model uses one bidirectional network for both day-2$\to$day-6 and day-6$\to$day-2 transport; the separate-model ablation uses independent networks for the two directions.

\noindent\textbf{Training Details}
We train all models for 20,000 optimization steps with AdamW, learning rate $2\times 10^{-4}$, batch size 512, gradient clipping at 1.0, and bfloat16 mixed precision. 
%Evaluations are run every 5,000 steps on a held-out batch of 512 pairs. 
Score-based bridges are sampled with 250 SDE steps. The main \bit{} model uses the uniform-volatility bridge with $K=0.5$; the cosine baseline uses a cosine-decay volatility schedule with $\epsilon=0.03$; the rectified-flow baseline uses the same DiT architecture with a flow-matching objective; and endpoint regression directly predicts the target endpoint from the source endpoint.

% \textcolor{red}{\textbf{Author query:} Were the periodic 512-pair evaluations used only for checkpoint selection, with Table~\ref{tab:larry_state_fate} then computed once on all 3,968 test pairs? Please state the validation/test split sizes, checkpoint-selection rule, and final evaluation sample explicitly.}

\newpage
\clearpage

\section{Training Pseudocode}\label{app:train_alg}

See Algorithm~\ref{alg:training}.

\begin{algorithm}[h]
\caption{Training the bidirectional bridge models}
\label{alg:training}
\begin{algorithmic}[1]
\Require Paired images and captions $(I,c)$; transport type
$\mathcal{M}\in\{\textsc{Score},\textsc{Flow}\}$; endpoint mode
$\mathcal{E}\in\{\textsc{Data-to-Data},\textsc{Noise-to-Image},
\textsc{Noise-to-Text}\}$; enabled directions
$\mathcal{D}\subseteq\{\rightarrow,\leftarrow\}$; network
$f_\theta$; text decoder $D_\psi$
\State $\mathbf{z}_{T}\gets\Call{EncodeText}{c}$
\Comment{row-major text endpoint in $\mathbb{R}^{4\times32\times32}$}
\State $\mathbf{z}_{I}\gets\Call{VAEEncode}{I}$
\Comment{image endpoint}
\State $\mathbf{c}_{0}\gets\mathbf{z}_{T}$,\quad
       $\mathbf{c}_{1}\gets\mathbf{z}_{I}$
\Comment{clean conditioning endpoints}
\State $\mathbf{x}_{0}\gets
    \begin{cases}
    \mathbf{z}_{T}, & \mathcal{E}\in\{\textsc{Data-to-Data},
      \textsc{Noise-to-Text}\}\\
    \boldsymbol{\epsilon}_{T},\ 
      \boldsymbol{\epsilon}_{T}\sim\mathcal{N}(0,\mathbf{I}),
      & \mathcal{E}=\textsc{Noise-to-Image}
    \end{cases}$
\State $\mathbf{x}_{1}\gets
    \begin{cases}
    \mathbf{z}_{I}, & \mathcal{E}\in\{\textsc{Data-to-Data},
      \textsc{Noise-to-Image}\}\\
    \boldsymbol{\epsilon}_{I},\
      \boldsymbol{\epsilon}_{I}\sim\mathcal{N}(0,\mathbf{I}),
      & \mathcal{E}=\textsc{Noise-to-Text}
    \end{cases}$
\State Sample $t$ from the configured training-time distribution
\If{$\mathcal{M}=\textsc{Flow}$}
    \State \textbf{assert} $\mathcal{E}=\textsc{Data-to-Data}$
    \State $\mathbf{x}_{t}\gets(1-t)\mathbf{x}_{0}+t\mathbf{x}_{1}$
    \State $\mathbf{u}^{\star}\gets\mathbf{x}_{1}-\mathbf{x}_{0}$
    \State $\widehat{\mathbf{u}}\gets f_{\theta}(\mathbf{x}_{t},t)$
    \Comment{one unconditional velocity field}
    \State $\mathcal{L}_{\mathrm{bridge}}\gets
      \|\widehat{\mathbf{u}}-\mathbf{u}^{\star}\|_{2}^{2}$
    \If{an image REPA target is available}
        \State $\mathcal{L}_{\mathrm{bridge}}\gets
        \mathcal{L}_{\mathrm{bridge}}
        +\lambda_{\mathrm{REPA}}\,
        \mathcal{L}_{\mathrm{cos}}
        (R_{\theta}(\mathbf{x}_{t},t),\operatorname{DINO}(I))$
    \EndIf
\Else
    \State $\mathbf{x}_{t}\sim
       p_{\mathrm{br}}(\mathbf{x}_{t}\mid\mathbf{x}_{0},\mathbf{x}_{1})$
    \Comment{sample the Gaussian SDE bridge from Equation~\ref{eqn:cond_dist_sample}}
    \State $\mathcal{L}_{\mathrm{bridge}}\gets0$
    \For{$d\in\mathcal{D}$}
        \State $\mathbf{c}_{d}\gets
          \mathbf{c}_{0}$ if $d=\rightarrow$, otherwise $\mathbf{c}_{1}$
        \State Sample conditioning mask
        $m\sim\operatorname{Bernoulli}(1-p_{\mathrm{uncond}})$
        \If{$d=\rightarrow$}
            \State $\mathbf{s}^{\star}_{d}\gets
            \dfrac{\mathbf{x}_{1}-\mathbf{x}_{t}}{a(1)-a(t)}$
            \Comment{$\nabla_{\mathbf{x}_{t}}\log
            p_{\mathrm{base}}(\mathbf{x}_{1}\mid\mathbf{x}_{t})$}
        \Else
            \State $\mathbf{s}^{\star}_{d}\gets
            \dfrac{\mathbf{x}_{0}-\mathbf{x}_{t}}{a(t)}$
            \Comment{$\nabla_{\mathbf{x}_{t}}\log
            p_{\mathrm{base}}(\mathbf{x}_{t}\mid\mathbf{x}_{0})$}
        \EndIf
        \State $\widehat{\mathbf{s}}\gets
          f_{\theta}(\mathbf{x}_{t},t,\mathbf{c}_{d},d,m)$
        \State $\mathcal{L}_{d}\gets
          w_{d}(t)\|
          \widehat{\mathbf{s}}-\mathbf{s}^{\star}_{d}\|_{2}^{2}$
        \State $\mathcal{L}_{\mathrm{bridge}}\gets
          \mathcal{L}_{\mathrm{bridge}}+\mathcal{L}_{d}$
        \If{an image REPA target is available}
            \State $\mathcal{L}_{\mathrm{bridge}}\gets
            \mathcal{L}_{\mathrm{bridge}}
            +\lambda_{\mathrm{REPA}}\,
            \mathcal{L}_{\mathrm{cos}}
            (R_{\theta}(\mathbf{x}_{t},t),\operatorname{DINO}(I))$
        \EndIf
    \EndFor
\EndIf
\State Update $\theta$ using
$\nabla_{\theta}\mathcal{L}_{\mathrm{bridge}}$
\If{the text endpoint is data rather than noise}
    \State $\widetilde{\mathbf{z}}_{T}\gets
      \mathbf{z}_{T}+\sigma_{\mathrm{dec}}\boldsymbol{\epsilon}$
    \State $\widehat{\mathbf{p}}\gets
      D_{\psi}(\widetilde{\mathbf{z}}_{T})$
    \State $\mathcal{L}_{\mathrm{dec}}\gets
      \operatorname{CE}(\widehat{\mathbf{p}},
      \operatorname{Tokenize}(c))$
    \State Update $\psi$ using
      $\nabla_{\psi}\mathcal{L}_{\mathrm{dec}}$
\EndIf
\State Update EMA parameters:
$\bar{\theta}\gets
\beta\bar{\theta}+(1-\beta)\theta$
\end{algorithmic}
\end{algorithm}

\newpage
\clearpage

\section{Inference Pseudocode}\label{app:inference_alg}

See Algorithm~\ref{alg:inference}.

\begin{algorithm}[h]
\caption{Inference through the learned bridge}
\label{alg:inference}
\begin{algorithmic}[1]
\Require Source $a$; direction $d\in\{\rightarrow,\leftarrow\}$;
transport type $\mathcal{M}$
\Statex \hspace{\algorithmicindent}EMA network $f_{\bar{\theta}}$;
integration steps $N$; guidance strength $\gamma$
\State $\Delta t\gets1/N$
\If{$d=\rightarrow$}\Comment{Text-to-Image}
    \State $\mathbf{c}\gets\Call{EncodeText}{a}$
    \State $\mathbf{x}\gets
      \mathbf{c}$ for data-to-data transport, otherwise
      $\mathbf{x}\sim\mathcal{N}(0,\mathbf{I})$
    \State $\rho\gets+1$
\Else\Comment{Image-to-Text}
    \State $\mathbf{c}\gets\Call{VAEEncode}{a}$
    \State $\mathbf{x}\gets
      \mathbf{c}$ for data-to-data transport, otherwise
      $\mathbf{x}\sim\mathcal{N}(0,\mathbf{I})$
    \State $\rho\gets-1$
\EndIf
\For{$n=0,\ldots,N-1$}
    \State $u_n\gets n/N$
    \State $t_n\gets u_n$ if $d=\rightarrow$, otherwise $t_n\gets1-u_n$
    \Comment{$u_n$: integration time; $t_n$: network time}
    \If{$\mathcal{M}=\textsc{Flow}$}
        \State $\widehat{\mathbf{u}}\gets
          f_{\bar{\theta}}(\mathbf{x},t_n)$
        \Comment{single unconditional field}
        \State $\mathbf{x}\gets
          \mathbf{x}+\rho\,\Delta t\,\widehat{\mathbf{u}}$
    \Else
        \State $\mathbf{s}_{c}\gets
          f_{\bar{\theta}}(\mathbf{x},t_n,\mathbf{c},d,
          \mathrm{conditioned})$
        \If{$\gamma>0$}
            \State $\mathbf{s}_{u}\gets
              f_{\bar{\theta}}(\mathbf{x},t_n,\mathbf{c},d,
              \mathrm{unconditioned})$
            \State $\widehat{\mathbf{s}}\gets
              \mathbf{s}_{c}+
              \gamma(\mathbf{s}_{c}-\mathbf{s}_{u})$
        \Else
            \State $\widehat{\mathbf{s}}\gets\mathbf{s}_{c}$
        \EndIf
        \State Sample $\boldsymbol{\epsilon}\sim\mathcal{N}(0,\mathbf{I})$
        \State $\mathbf{x}\gets\mathbf{x}
          +\sigma(t_n)^2\widehat{\mathbf{s}}\,\Delta t
          +\sigma(t_n)\sqrt{\Delta t}\,\boldsymbol{\epsilon}$
        \Comment{Euler--Maruyama bridge step}
    \EndIf
\EndFor
\Statex \Comment{The finite-$N$ output is a discretization approximation, not an endpoint-exact draw.}
\If{$d=\rightarrow$}
    \State $\widehat{I}\gets
      \Call{VAEDecode}{\mathbf{x}}$
    \State \Return $\widehat{I}$
\Else
    \State $(\mathbf{E}_{1},\ldots,\mathbf{E}_{64})
      \gets\Call{UnpackTextVectors}{\mathbf{x}}$
    \Comment{vectors before decoder normalization}
    \State $\widehat{\mathbf{k}}\gets
      \arg\max D_{\psi}(\mathbf{E}_{1:64})$
    \State $\ell\gets
      \min\{i:\|\mathbf{E}_{i}/\rho_{\mathrm{tok}}\|_{2}<0.1\}$
    \Comment{$\rho_{\mathrm{tok}}$: training-time token scale}
    \State \Return
      $\Call{QwenDecode}{\widehat{\mathbf{k}}_{1:\ell-1}}$
\EndIf
\end{algorithmic}
\end{algorithm}

\newpage
\clearpage

%\section{Additional Experiments}

\section{Large-Scale Model Hyperparameters}
\label{app:large_scale_hparams}

To see how far we can push the performance, we also train a larger model with more compute resources. Overall, it is an improvement (compare Table~\ref{tab:cfg}'s results in row $\omega=0.0$ to Table~\ref{tab:gen_results}).

\noindent\textbf{Changed Settings:} The large-scale model uses DiTXA-XL/2, comprising 28 transformer blocks with hidden dimension 1152, 16 attention heads, and $2\times2$ patches. This results in $1,054,621,840$ parameters. (This is larger than the DiTXA-L/2 backbone used in the controlled comparisons, which has 24 blocks and hidden dimension 1024.) The large-scale model is trained jointly in both the text-to-image and image-to-text directions: each batch produces one loss for each direction, and the two losses are summed before a single optimizer update. %In contrast, the controlled directional ablations train separate models for the two directions. 

Endpoint conditioning is dropped with probability 0.1, compared with 0.3 for the controlled score-model ablations. Image REPA is applied from the beginning of training at transformer layer 8 with weight 0.75 (the controlled image-generating models use the same layer and schedule but a smaller weight of 0.5).
We train the large-scale model for 122,500 steps with a global batch size of 1792 (compared with 200,000 steps and a batch size of 512 in the controlled comparisons). This corresponds to approximately twice as many training examples as the ablation models saw.
Optimization uses AdamW with learning rate $2.5\times10^{-4}$, larger than the controlled-model rate of $1.5\times10^{-4}$; this value was selected for the larger-batch run rather than obtained by a stated linear-scaling rule.

\noindent\textbf{Kept Settings:} The things that stay the same are the AdamW $(\beta_1,\beta_2)=(0.9,0.999)$, zero weight decay, gradient clipping at 1.0, 5,000 steps of linear warm-up followed by a constant learning rate, and primary EMA decay of 0.9995. 
The large-scale model also uses the same periodic bridge SDE with $\alpha=0.95,k=1.0,\epsilon_{\mathrm{SDE}}=0.05$.
These parameters match the periodic-SDE arm of the main text experiments. Bridge times are sampled from a logit-normal distribution with mean 0.4 and standard deviation 0.7 (as opposed to the uniform time sampling in the main text ablation experiments), clipped to
$[9.9\times10^{-4},1-9.9\times10^{-4}]$. 
The DINOv2 targets, token-bridge construction, row-major text layout, dataset, VAE representation, and token-decoder design are otherwise unchanged.

Training is distributed over 16 compute nodes with four NVIDIA A100 80\,GB GPUs per node, for 64 GPUs in total. The controlled comparisons use two nodes and eight GPUs. %The resulting per-GPU batch sizes are 28 and 64, respectively, while synchronization preserves the stated global batch size in each setting. These scale-specific changes were introduced to maximize the quality of the primary model; they are deliberately excluded from the controlled comparisons, where architecture, compute, batch size, optimization, and training duration are held fixed across ablations.

\newpage
\clearpage

\section{Image Editing Along Text--Image Bridges}
\label{sec:bridge_editing}
% \begin{wrapfigure}{r}{0.5\linewidth}
%     \centering
%     \includegraphics[width=\linewidth, trim={0 1.5cm 0 0},clip]{results/prompt_editing/panel_img0036.png}
%     \caption{\textbf{Text Bridge Editing:} See App. \ref{sec:cycle_consistent_editing}.}\label{fig:cycle_editing}
% \end{wrapfigure}
We use our text--image bridges to construct an end-to-end image editing procedure that exposes editing trajectories native to the learned bridge. Given only a source image, we first traverse the bridge in the reverse direction, from image to text, to infer a textual description. 
%We decode this inferred state into tokens and project it onto the vocabulary embedding manifold. 
%Consequently, the procedure does not require a ground-truth caption or paired multimodal input at inference time.
We then modify the inferred description and traverse the bridge forward, using the same implied noise, to generate an edited image. In our experiments, an LLM (which we use as an automated proxy for human edits) is prompted to replace a small number of words while preserving the remainder of the sentence. %The use of an LLM is not fundamental to the method: it is merely an automated proxy for human edits. 
Both the original and edited descriptions are embedded using the bridge's vocabulary table, and we form a continuous path between them using token-wise spherical interpolation to get a spectrum of edits. %Forward bridge trajectories conditioned on points along this path produce a sequence of corresponding image edits.
% A naive comparison of these generations would conflate changes in the text with stochastic sampling variation. We therefore infer the stochastic driving increments associated with a trajectory connecting the original inferred text to the source image:%. For a discretized forward bridge with learned score $s_\theta$, we backsolve each increment as
% \begin{equation}
%     \widehat{\Delta B}_i
%     =
%     \frac{\bar{x}_{i+1}-\bar{x}_i}{\sigma(t_i)}
%     -
%     \sigma(t_i)
%     s_\theta(\bar{x}_i,t_i;x_0,y)\Delta t_i,
%     \label{eq:noise_backsolve_main}
% \end{equation}
% where $\{\bar{x}_i\}$ is a bridge path terminating at the source-image latent. We replay the same inferred increments for every interpolated text condition. The unedited trajectory therefore returns to the source latent, while edited trajectories inherit a closely aligned stochastic and spatial structure. This isolates semantic changes induced through the text endpoint and 
Overall, this experiment illustrates a distinctive capability of text--image bridges: an image can be inverted into an editable semantic representation and then transported back through a continuous family of structurally aligned image edits. See Figure~\ref{fig:extra_editing} for qualitative results, and Sec.~\ref{app:bridge_editing} for setup.

% \section{Additional Text-Image Bridge Editing Results}\label{sec:cycle_consistent_editing}

% See Figure~\ref{fig:extra_editing}.

\begin{figure}[ht]
    \includegraphics[width=\linewidth, trim={0 1.5cm 0 0},clip]{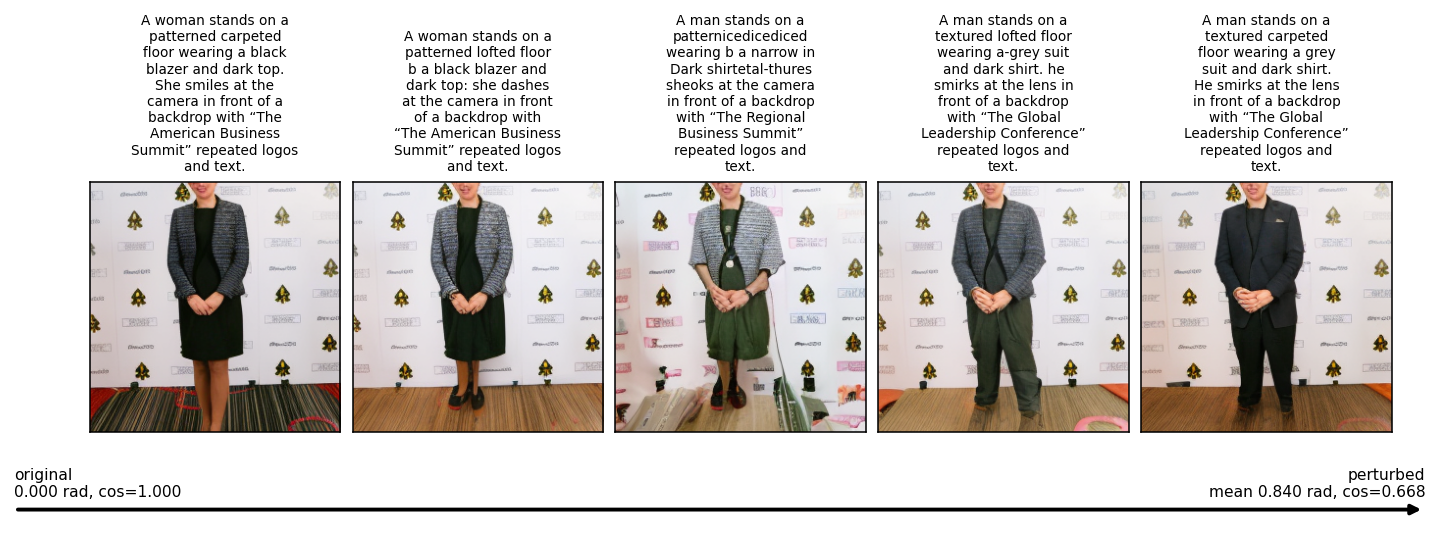}
    \includegraphics[width=\linewidth, trim={0 1.5cm 0 0},clip]{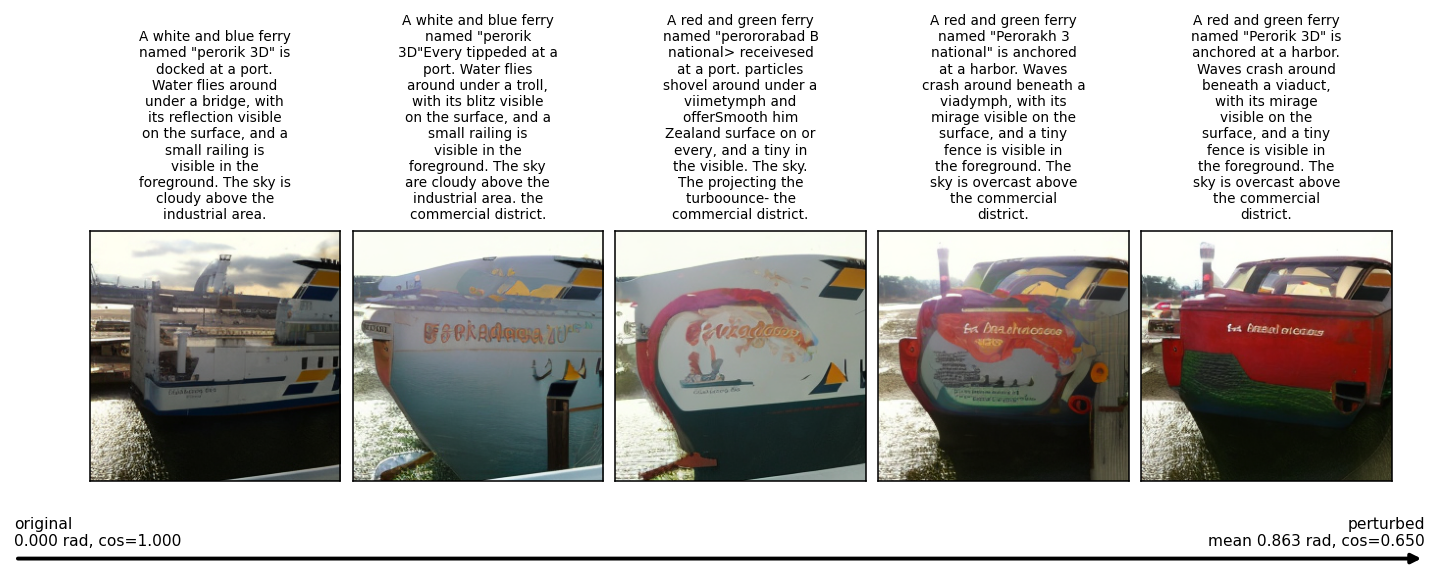}
    \includegraphics[width=\linewidth, trim={0 1.5cm 0 0},clip]{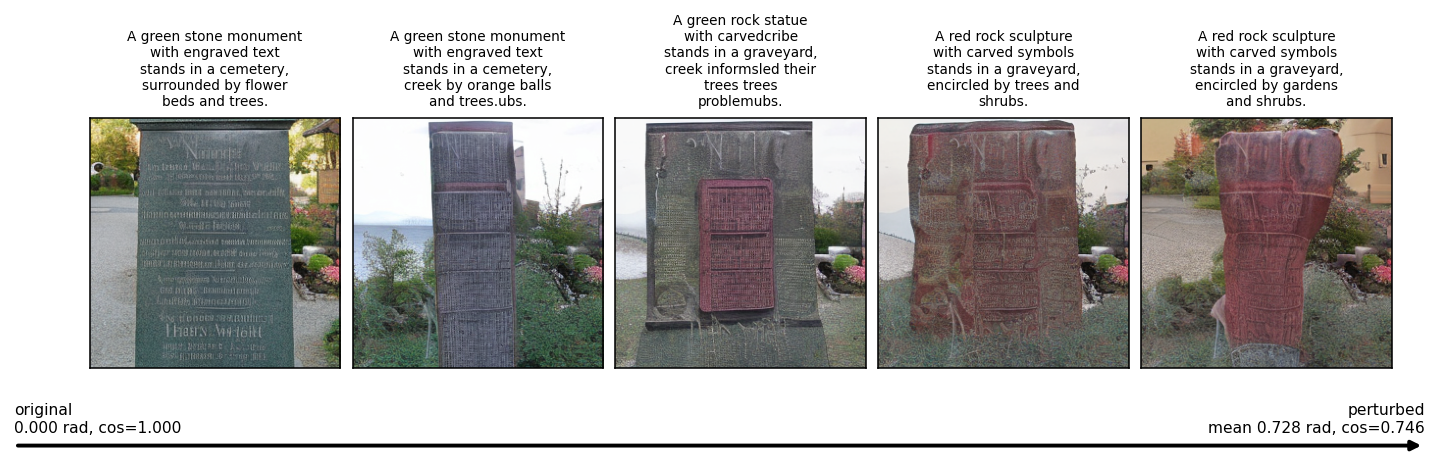}
    \includegraphics[width=\linewidth, trim={0 1.5cm 0 0},clip]{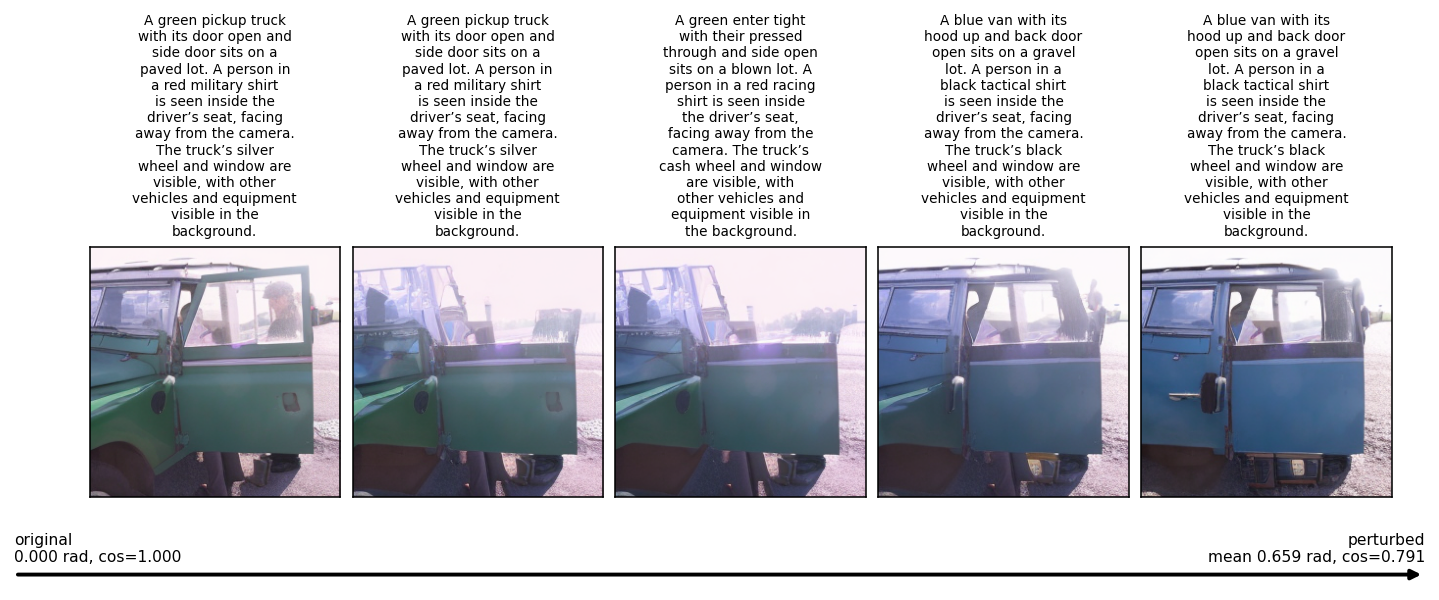}
    \caption{\textbf{Image Editing via Text Bridge.}}\label{fig:extra_editing}
\end{figure}

\newpage
\clearpage

\section{Impact of REPA on Image-to-Text Generation}\label{sec:repa}

REPA~\citep{yu2024representation} has been known to greatly improve image generation quality. Surprisingly, we find that using REPA with \textit{image features} can also help the image-to-\textit{text} generation in our \bit{} framework. In particular, for only a slight hit in generative perplexity, we see a large boost in CLIP score. We think this is because the REPA loss forces the generated text to align with the image's semantic content, but at the cost of missing some fine syntactical details. See Table~\ref{tab:text_gen_repa}. 

\begin{table}[h]
    \centering
    \begin{tabular}{lcc}
        \toprule
        Method & Gen PPL ($\downarrow$) & CLIP ($\uparrow$) \\
        \midrule
        \textbf{\bit{}} (without REPA) & 112.7 & 22.7 \\
        \textbf{\bit{}} (REPA $\lambda=0.5$) & 123.4 & 27.0 \\
        \bottomrule
    \end{tabular}
    \caption{\textbf{Impact of REPA on Image-to-Text:} Same exact training and inference settings, except that one was trained with and one was trained without REPA.}
    \label{tab:text_gen_repa}
\end{table}

\newpage
\clearpage

\section{Qualitative Cross-Modal Round-Trip Stochastic Variation Results}\label{sec:qual_results_variation}

\begin{figure}[h]
\centering
\begin{subfigure}[]{0.49\textwidth}
    \includegraphics[width=\linewidth, trim={0 0 0 0.08\linewidth},clip]{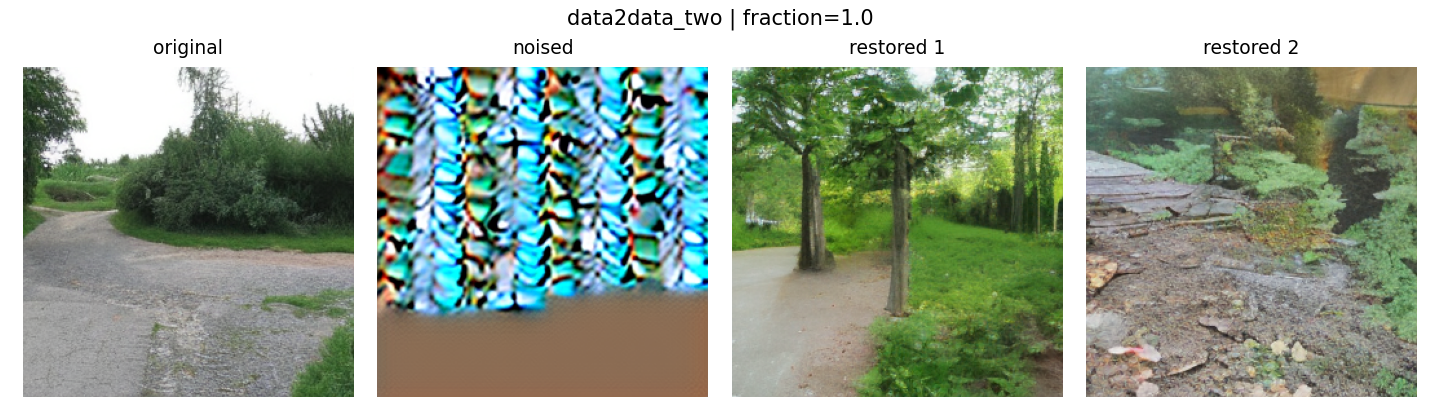}
    \includegraphics[width=\linewidth, trim={0 0 0 0.14\linewidth},clip]{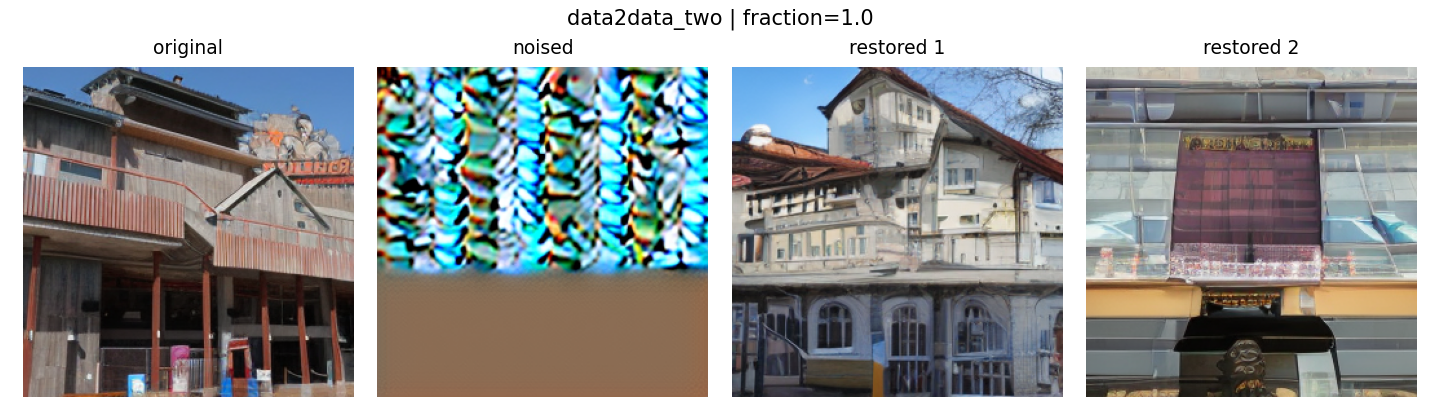}
    \caption{\textbf{\bit{}} (ours)}
\end{subfigure}
\begin{subfigure}[]{0.49\textwidth}
    \includegraphics[width=\linewidth, trim={0 0 0 0.08\linewidth},clip]{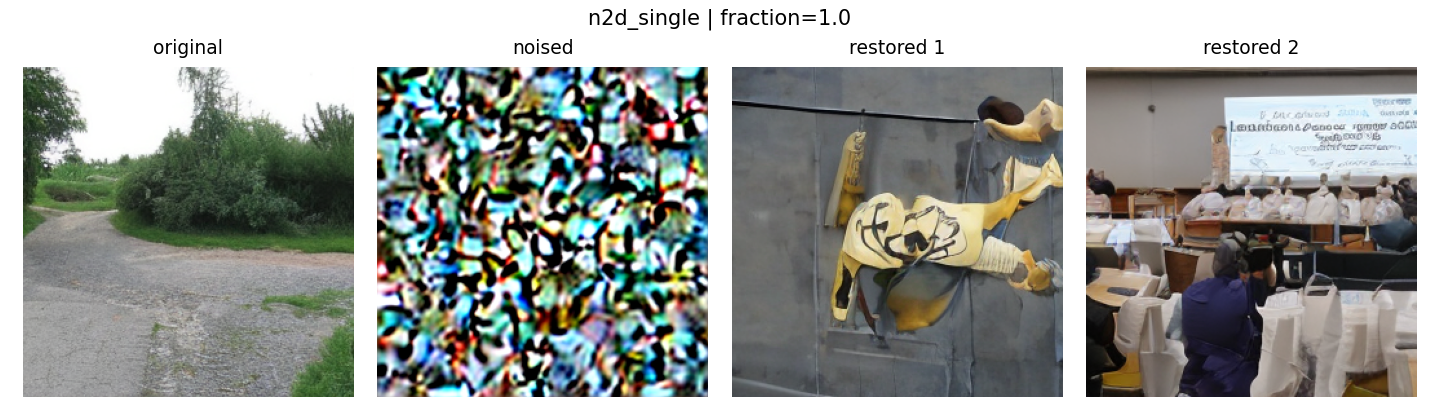}
    \includegraphics[width=\linewidth, trim={0 0 0 0.14\linewidth},clip]{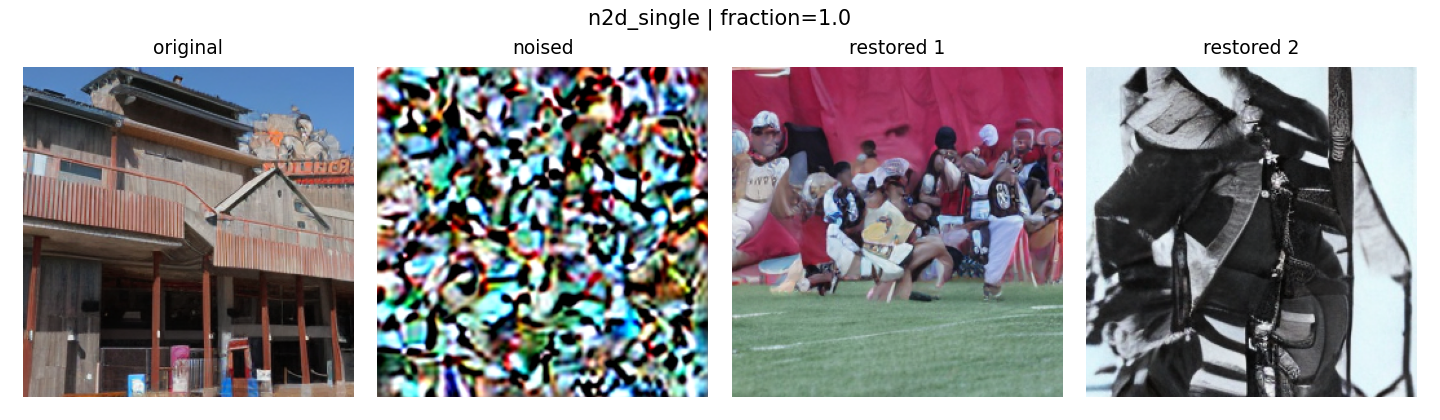}
    \caption{Noise-to-data diffusion \citep{song2020score}.}
\end{subfigure}
\caption{\textbf{Image Stochastic Variation:} Qualitative results corresponding to Figure~\ref{fig:variation}. Because the text endpoint retains source-related information, \bit{} can recover semantically related variations of the source image after reversing the forward corruption process to $t=0$.}\label{fig:variation_qual}
\end{figure}

\begin{figure}[h]
\centering
\begin{tcblisting}{listing only,
    boxsep=0pt,
    top=-4pt, bottom=-4pt, left=2pt, right=2pt,
    toptitle=0pt, bottomtitle=0pt,
    lefttitle=2pt, righttitle=2pt,
    boxrule=1.5pt,
    before skip=2pt, after skip=2pt,
    fonttitle=\ttfamily\scriptsize\bfseries,
    listing options={breaklines=true,                  breakindent=0pt,
    breakautoindent=false,
    basicstyle=\ttfamily\scriptsize},  title=Original Text}
A man and a boy stand together in a park with bare trees. Several people in high-visibility vests are visible in the background near a grassy area.
\end{tcblisting}

\begin{tcblisting}{listing only,
    boxsep=0pt,
    top=-4pt, bottom=-4pt, left=2pt, right=2pt,
    toptitle=0pt, bottomtitle=0pt,
    lefttitle=2pt, righttitle=2pt,
    boxrule=1.5pt,
    before skip=2pt, after skip=2pt,
    fonttitle=\ttfamily\scriptsize\bfseries,
    listing options={breaklines=true,                  breakindent=0pt,
    breakautoindent=false,
    basicstyle=\ttfamily\scriptsize},  title=BIT Data-to-Data (Ours)}
restored 1 : A soldier in uniform stands outdoors on a grassy field, holding papers and speaking. One woman in black clothing is behind him, smiling as they face forward in a patterned crack.
restored 2 : A man stands on a grassy field, looking at a line of bare tree branches filling the frame.
\end{tcblisting}

\begin{tcblisting}{listing only,
    boxsep=0pt,
    top=-4pt, bottom=-4pt, left=2pt, right=2pt,
    toptitle=0pt, bottomtitle=0pt,
    lefttitle=2pt, righttitle=2pt,
    boxrule=1.5pt,
    before skip=2pt, after skip=2pt,
    fonttitle=\ttfamily\scriptsize\bfseries,
    listing options={breaklines=true,                  breakindent=0pt,
    breakautoindent=false,
    basicstyle=\ttfamily\scriptsize}, title=Noise-to-Data Diffusion}
restored 1 : Snow-covered tents stretch across a paved road under a cloudy sky. A dark tunnel handle hub with graffiti markings marks the entrance, flanked by trees and shields on either side of the street.
restored 2 : A butterfly with dark spots and a single archus shell rests on a ship's bottom. Its wings are oriented forward, and sunlight casts a bright shadow below onto it. The background shows pink blurred background and distant seed branches.
\end{tcblisting}
\caption{\textbf{Text Stochastic Variation:} Qualitative text results, corresponding to Fig \ref{fig:variation}. Corruption fraction $90\%$. Diffusion baseline goes off-topic, while \bit{} generates semantically related captions.}\label{fig:variation_text_qual}
\end{figure}

\newpage
\clearpage

\section{Related Works}\label{sec:related_works}

\noindent\textbf{Diffusion Bridges:}
Our work builds on the rich literature in diffusion bridge models, which generalize diffusion models by learning SDEs that interpolate between data distributions (rather than from noise to data) to model conditional distributions~\citep{zhou2023denoising, guo2026abc, kieu2025bidirectional}. DDBM~\citep{zhou2023denoising} is equivalent to the reverse-time SDE of our framework (Theorem~\ref{thm:reverse_process}), but does not consider the forward-time or marginal SDEs and is therefore unidirectional. Our forward-time SDE (Theorem~\ref{thm:forward_process}) is a special case of ABC~\citep{guo2026abc}; unlike that work, we also consider the reverse-time SDE so that we can sample either conditional distribution.
% We make the theoretical contribution unifying ABC~\citep{guo2026abc} (in the two-time setting) and DDBM~\citep{zhou2023denoising}, showing that they describe the same path measure.
BDBM~\citep{kieu2025bidirectional} is the closest to our framework in that they also construct bidirectional models, but their derivations come from the perspective of Chapman-Kolmogorov equations on Gaussian distributions rather than our measure-theoretic perspective on SDEs. \textit{They also do not consider the marginal/unconditional transitions, which precludes their ability to do techniques like classifier-free guidance.
Crucially, none of the previous diffusion bridge papers even attempt text-to-image translation, mostly sticking to translation between observations from the same modality, like image-to-image.}

\noindent\textbf{Data-to-Data Flow:} 
Flow matching with coupled distributions has been proposed as a way to translate directly between different modalities, \textit{e.g.}, image and text. \textit{The main problem is that the flow matching ODE is deterministic, so in a bare-bones implementation, it inherently cannot express the myriad of possibilities in the conditional distribution.} To get around this, there is a variety of hacks proposed in previous works.
FlowTok~\citep{he2025flowtok} and CrossFlow~\citep{liu2025flowing} learn variational encoders~\citep{kingma2013auto} to map text and image into a common domain: crucially, the noise introduced by the variational encoders breaks the inherent determinism of flows, enabling some semblance of stochasticity rather than blurred average predictions. However, the success of their method hinges on careful tuning and training tricks for their encoders that are somewhat specific to the text-to-image domain, \textit{e.g.}, auxiliary contrastive learning with CLIP embeddings. Their framework, as it currently stands, is not clearly applicable out-of-the-box to other domains, while ours is.
\cite{albergo2023stochastic} also addresses data-to-data translation with flows, albeit for tasks such as superresolution and inpainting rather than text-to-image. Their solution to the lack of stochasticity is to perturb source samples with Gaussian noise. This modification removes a clear deterministic reverse mapping when generating in the opposite direction.

\noindent\textbf{Cycle-Consistent Generative Models:} 
CycleGAN~\citep{zhu2017unpaired} also addresses bidirectional translation between domains with a cycle-consistency loss and a GAN loss, but they model it as a deterministic process, which is inherently incorrect. BDBM~\citep{kieu2025bidirectional}, as discussed earlier, has bidirectional capabilities, and they correctly model the translation as stochastic.

\noindent\textbf{Continuous Diffusion for Language:} Recently, there has been interest in methods for continuous-time, continuous-space processes to generate language. Embedded Language Flows~\citep{hu2026elf} has a similar solution to ours, in that they use invertible text token embeddings as the target distribution for a flow matching model. However, they only consider single-modality generation (no image conditioning or generation), and still use the basic construction of Gaussian noise-to-data. Flow Map Language Models~\cite{lee2026flow} and Discrete Flow Maps~\citep{potaptchik2026discrete} learn few-step generators for one-hot token vectors via cross-entropy and consistency losses. Again, they do not consider multimodality tasks like text-to-image.

\noindent\textbf{Text-to-Image:} Text-to-image models are ubiquitous, having moved past the academic sphere to products like Midjourney. However, the dominant research ecosystem models are Flux~\citep{labs2025flux} and Stable Diffusion~\citep{rombach2022high}. They are generally based on noise-to-data conditional diffusion or flow. They are also not invertible, unlike our framework.

\end{document}